\newtheorem{theorem}{Theorem}
\newtheorem{problem}{Problem}
\newtheorem{definition}{Definition}
\newtheorem{remark}{Remark}
\begin{document}

\title{Large-Scale Multirobot Coverage Path Planning on Grids with Path Deconfliction}

\author{
Jingtao Tang,
Zining Mao, 
Hang Ma
\thanks{The authors are with the School of Computing Science, Simon Fraser University, Burnaby, BC V5A1S6, Canada. {\tt\footnotesize \{jingtao\_tang, zining\_mao, hangma\}@sfu.ca}.}
}

\markboth{Journal of \LaTeX\ Class Files,~Vol.~14, No.~8, August~2021}%
{Jingtao Tang, Zining Mao, Hang Ma, \MakeLowercase{\textit{et al.}}: Large-Scale Multirobot Coverage Path Planning on Grids with Path Deconfliction}


\maketitle

\begin{abstract}
We study Multirobot Coverage Path Planning (MCPP) on a 4-neighbor 2D grid $G$, which aims to compute paths for multiple robots to cover all cells of $G$.
Traditional approaches are limited as they first compute coverage trees on a quadrant coarsened grid $\mathcal{H}$ and then employ the Spanning Tree Coverage (STC) paradigm to generate paths on $G$, making them inapplicable to grids with partially obstructed $2 \times 2$ blocks. To address this limitation, we reformulate the problem directly on $G$, revolutionizing grid-based MCPP solving and establishing new NP-hardness results.
We introduce Extended-STC (ESTC), a novel paradigm that extends STC to ensure complete coverage with bounded suboptimality, even when $\mathcal{H}$ includes partially obstructed blocks. Furthermore, we present LS-MCPP, a new algorithmic framework that integrates ESTC with three novel types of neighborhood operators within a local search strategy to optimize coverage paths directly on $G$. Unlike prior grid-based MCPP work, our approach also incorporates a versatile post-processing procedure that applies Multi-Agent Path Finding (MAPF) techniques to MCPP for the first time, enabling a fusion of these two important fields in Multirobot coordination. This procedure effectively resolves inter-robot conflicts and accommodates turning costs by solving a MAPF variant, making our MCPP solutions more practical for real-world applications.
Extensive experiments demonstrate that our approach significantly improves solution quality and efficiency, managing up to 100 robots on grids as large as $256 \times 256$ within minutes of runtime. Validation with physical robots confirms the feasibility of our solutions under real-world conditions.
A project page with code, demo videos, and additional resources is available at: \url{https://sites.google.com/view/lsmcpp}.
\end{abstract}

\begin{IEEEkeywords}
Multirobot System, Automated Planning, Heuristic Search.
\end{IEEEkeywords}

\section{Introduction}\label{sec:intro}
Coverage Path Planning (CPP) addresses the problem of determining a path that fully covers a designated workspace~\cite{galceran2013survey}.
This problem is essential for a broad spectrum of robotic applications, from indoor tasks like vacuum cleaning~\cite{wang2024apf} and inspection~\cite{almadhoun2018coverage} to outdoor activities such as automated harvesting~\cite{chen2024optimizing}, planetary exploration~\cite{santra2024risk}, and environmental monitoring~\cite{sudha2024coverage}. 
Multirobot Coverage Path Planning (MCPP) is an extension of CPP tailored for Multirobot systems, aiming to coordinate the paths of multiple robots to collectively cover the given workspace, thereby enhancing both task efficiency~\cite{tang2022learning} and system robustness~\cite{sun2021ft}. A fundamental challenge of MCPP is to balance the costs across multiple robots, often quantified by the \textit{makespan}---the maximum path cost among all robots. This challenge intensifies in large-scale applications where the number of robots and the size of the workspace significantly increase.


\begin{figure}[t]
\centering
\includegraphics[width=\linewidth]{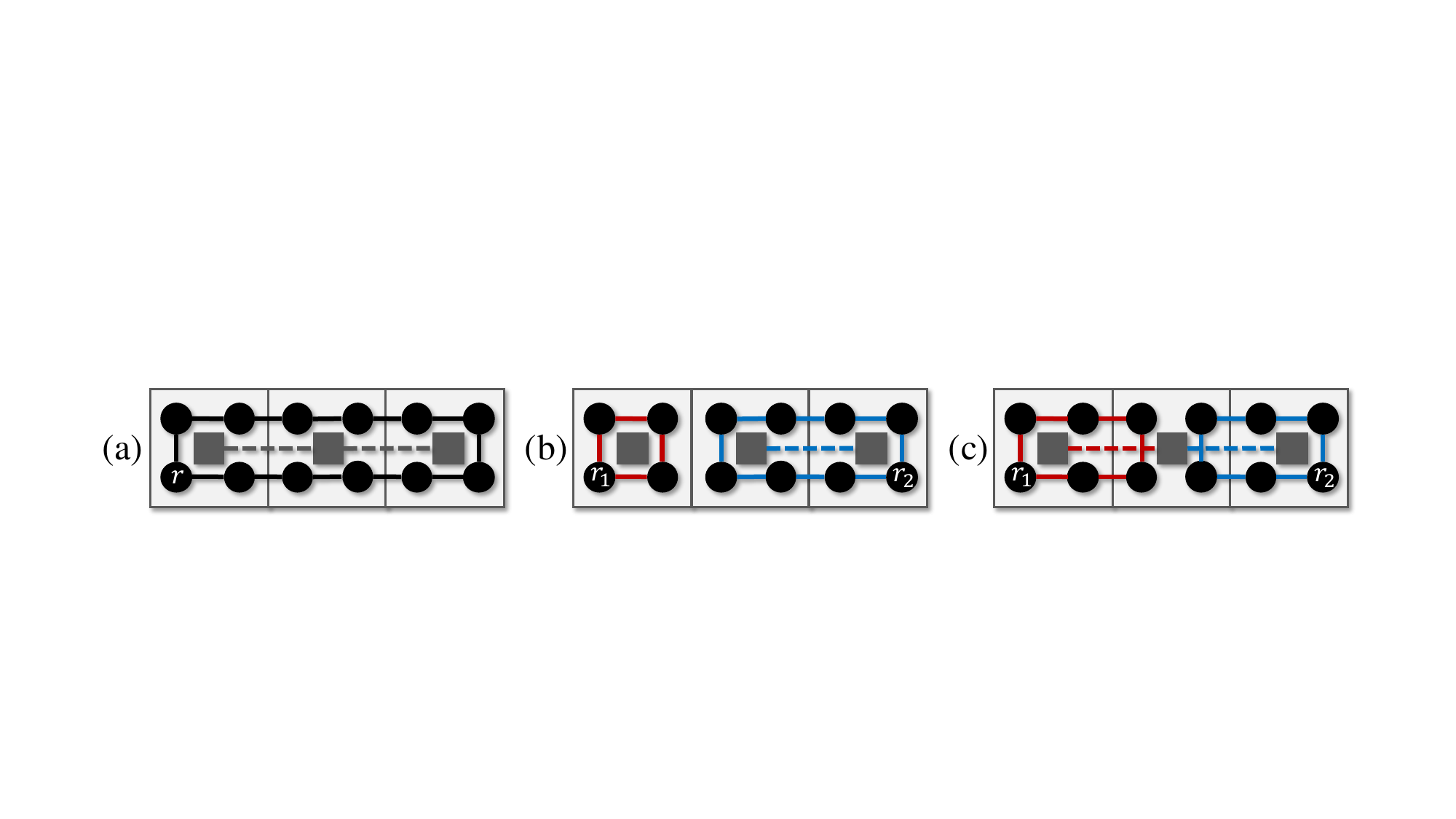}
\caption{Grid-based CPP and MCPP: black circles, dark gray squares, solid lines, and dashed lines represent vertices of $G$, vertices of $\mathcal{H}$, coverage paths on $G$, and spanning edges of $\mathcal{H}$, respectively. Starting at $r$, the robot(s) need to collectively cover each vertex of $G$ and return to $r$, with a cost of $1$ to traverse between adjacent vertices of $G$. (a) Single-robot coverage path generated by STC. (b) Suboptimal $2$-robot coverage paths with a makespan of $8$, resulting from 2 coverage trees that minimize the maximum tree weights. (c) Optimal $2$-robot coverage paths with a makespan of $6$. 
}
\label{fig:STC}
\end{figure}

This article tackles MCPP on a 4-neighbor 2D (edge-)weighted grid graph $G$. The problem is to compute paths for a given set of $k$ robots, each starting from and returning to their given root vertices, to jointly cover all vertices of $G$, a problem analogous to the NP-hard Multiple Traveling Salesman Problem (m-TSP)~\cite{francca1995m} and Vehicle Routing Problem (VRP)~\cite{carlsson2009solving}. 
Existing grid-based MCPP methods~\cite{hazon2005redundancy,zheng2007robot,zheng2010multirobot,tang2021mstc,tang2023mixed} operate on a coarsened grid $\mathcal{H}$, derived from $G$ by grouping every $2\times 2$ adjacent vertices, to compute coverage trees that jointly cover all vertices of $\mathcal{H}$. These methods then apply the Spanning Tree Coverage (STC)~\cite{gabriely2001spanning} paradigm to generate coverage paths on $G$ by circumnavigating the computed coverage trees. However, operating exclusively on the coarsened grid $\mathcal{H}$ has two significant disadvantages. First, it fails in cases where $\mathcal{H}$ is \textit{incomplete}---that is, when any $2\times 2$ blocks contain obstructed grid cells absent from $G$.\footnote{The grid graph $G$ is technically a vertex-induced subgraph of an infinite grid. For clarity and consistency, we refer to the vertices in the infinite grid as "grid cells" throughout this paper. These grid cells can be obstructed, resulting in their absence as vertices in $G$. This distinction helps articulate the difference between the physical presence of cells in the infinite grid and their representation (or lack thereof) in the subgraph $G$. Such terminology is particularly useful in scenarios where contracting a $2\times 2$ block of grid cells does not yield a complete set of four vertices, or a ``hypervertex'', due to the obstruction or absence of one or more grid cells. This approach enables us to discuss the structure and topology of $G$ more precisely, especially in the context of MCPP problems where the presence or absence of grid cells affects path planning and coverage strategies.} Second, even optimal coverage trees on $\mathcal{H}$ do not necessarily result in an optimal MCPP solution (as illustrated in Fig.~(\ref{fig:STC}b) and (\ref{fig:STC}c)), as evidenced by an asymptotic suboptimality ratio of four for makespan minimization~\cite{zheng2010multirobot}, since the paths derived from circumnavigating coverage trees of $\mathcal{H}$ constitute only a subset of all possible sets of coverage paths on $G$. Furthermore, existing grid-based MCPP methods neglect to resolve inter-robot conflicts \cite{stern2019multi} in their computed solutions, leading to potential collisions during execution and thus limiting their practical application in diverse real-world Multirobot scenarios.

\subsection{Contributions}
We revolutionize solving MCPP on grid graphs, overcoming the above limitations through a two-phase approach that first systematically searches for good coverage paths directly on $G$ and subsequently resolves inter-robot conflicts in a post-processing procedure, which is illustrated in Fig.~(\ref{fig:overview}).
Our algorithmic contribution are detailed as follows:
\begin{enumerate}
    \item We propose a novel standalone paradigm called Extended-STC (ESTC), which extends STC to address coverage path-planning problems on any grid graph $G$, applicable both when the quadrant coarsening $\mathcal{H}$ is complete and incomplete. Importantly, we demonstrate that ESTC guarantees complete coverage for both single- and Multirobot settings with bounded suboptimality, rendering it an efficient and versatile solution for coverage path planning.
    \item We design three types of specialized neighborhood operators to facilitate an effective local search process by identifying cost-efficient subgraphs of $G$ that are then used to generate coverage paths for the robots. The strategic integration of these operators significantly improves the efficiency of exploring the solution space. We then demonstrate how to combine these neighborhood operators with iterative calls to the ESTC paradigm to establish our LS-MCPP framework for solving MCPP.
    \item We address inter-robot conflicts in the coverage paths computed by LS-MCPP or other MCPP methods by formulating the problem as a variant of Multi-Agent Path Finding~\cite{stern2019multi}. This variant asks to compute conflict-free continuous-time trajectories for robots as each of them visits an ordered sequence of vertices specified by the MCPP solution. We develop new approaches for solving this variant based on existing MAPF techniques. These approaches integrate the novel Multi-Label Safe Interval Path Planning search that plans a continuous-time trajectory for an individual robot through its ordered sequence of vertices and, as a side benefit, allows our post-processing step to accommodate important kinematic constraints, such as turning costs—--a critical factor in real-world coverage applications~\cite{vandermeulen2019turn,lu2022tmstc,ramesh2022optimal},
\end{enumerate}
We showcase the benefits of our algorithmic pipeline through extensive quantitative experiments on large-scale instances involving up to 100 robots on a $256\times 256$ grid. Key findings include: 
\begin{enumerate}
    \item Our ESTC paradigm results in coverage paths with significantly smaller makespans than existing STC-based paradigms and is effective on any grid graphs $G$, including those with incomplete quadrant coarsening $\mathcal{H}$, for both CPP and MCPP tasks.
    \item Our LS-MCPP framework achieves significantly smaller makespans than state-of-the-art MCPP methods that rely on suboptimal tree coverage computations on $\mathcal{H}$ and requires far less runtime to achieve makespans comparable to or better than those achieved by the optimal tree coverage computation.
    \item Our post-processing procedure can be applied to any MCPP solutions, including not only those generated by LS-MCPP but also those generated by existing MCPP methods, to effectively resolve conflicts between robots and accounts for turning costs, further enhancing the practicability of the solutions.
\end{enumerate}
Additionally, we validate the feasibility of our approach with physical robot deployments, confirming its practical applicability in real-world robotics applications.

\begin{figure}
\centering
\includegraphics[width=0.95\linewidth]{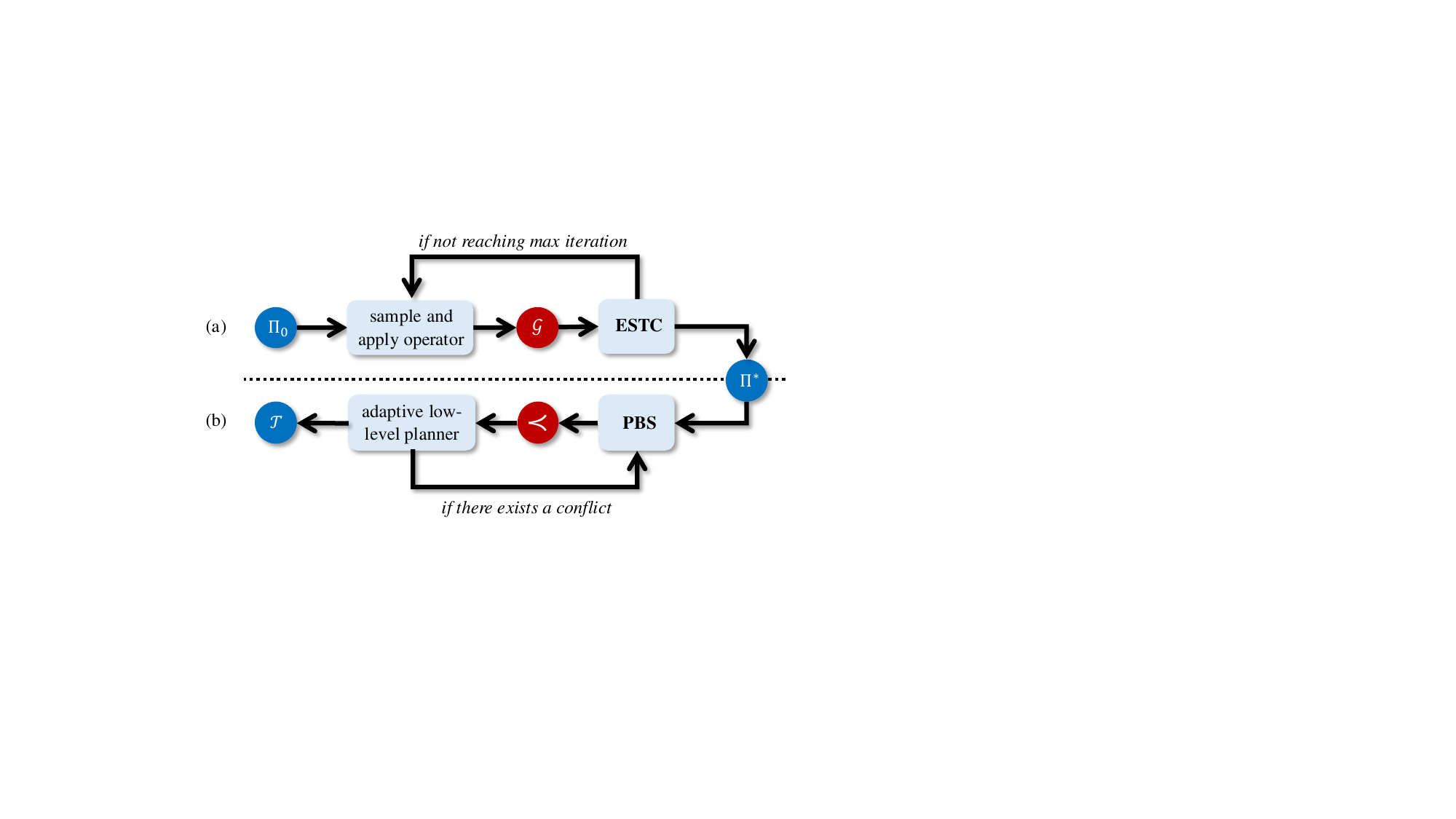}
\caption{The proposed two-phase approach for MCPP in a grid graph with $k$ robots. (a) The LS-MCPP phase: Starting with an initial solution $\Pi_0$, the algorithm iteratively samples operators to update the subgraph set $\mathcal{G}$ and generates $k$ coverage paths using ESTC, ultimately producing a makespan-minimized solution $\Pi^*$. (b) The deconflicting phase: Given $\Pi^*$ (or any other MCPP solution) as input, Priority-Based Search (PBS) explores a tree of priority ordering $\pmb\prec$ while invoking an adaptive low-level planner to generate conflict-free paths, considering only high-priority robot trajectories for each robot. The process terminates with a set $\mathcal{T}$ of conflict-free coverage paths.}
\label{fig:overview}
\end{figure}

This article significantly extends its preliminary conference version \cite{tang2024large} by:
\begin{enumerate}
    \item Providing a reformulation of MCPP directly on an (edge-)weighted grid graph $G$ and a new NP-hardness result.
    \item Improving the original ESTC paradigm by encoding cost differences between circumnavigating paths, with a new theoretical analysis of the properties of ESTC.
    \item Incorporating new vertex-wise operators into the original LS-MCPP framework to facilitate a more effective exploration of the solution space. 
    \item Introducing the new post-processing procedure to resolve conflicts between robots.
    \item Expanding the numerical results and including physical robot experiments to demonstrate the practical applicability of our approach.
\end{enumerate}


\subsection{Organization}
We organize the remainder of this article as follows: We review related problems and methods in Sec.~\ref{sec:related_work}. We present the problem formulations and complexity results in Sec.~\ref{sec:problem}. We introduce the ESTC paradigm in Sec.~\ref{sec:estc}. We detail the specialized neighborhood operators and the LS-MCPP framework in Sec.~\ref{sec:mcpp}). We present our proposed post-processing procedure for resolving inter-robot conflicts in Sec.\ref{sec:deconf}). We present our numerical results in Sec.~\ref{sec:res} and discuss our physical robot experiments in Sec.~\ref{sec:exp}. We conclude our findings and outline future directions in Sec.\ref{sec:conclusion}.

\section{Related Work}\label{sec:related_work}
In this section, we survey related work on MCPP and MAPF.

\subsection{Multirobot Coverage Path Planning (MCPP)}\label{subsec:review_mcpp}
Following the existing taxonomy on CPP~\cite{Tomaszewski-2020-125840}, we categorize MCPP methods into grid-based, cellular decomposition, and global methods based on their approach to workspace decomposition.

\noindent\textbf{Grid-Based Methods:}
Grid-based methods conceptualize the workspace as a grid graph $G$~\cite{kapoutsis2017darp, azpurua2018multi, li2023sp2e}, allowing for the application of various graph algorithms. These methods traditionally operate exclusively on a quadrant coarsened $\mathcal{H}$ derived from $G$ and fail when $\mathcal{H}$ is incomplete. They generalize the Spanning Tree Coverage (STC) paradigm for single-robot CPP in two ways. Single-tree methods\cite{hazon2005redundancy, tang2021mstc} compute a spanning tree on $\mathcal{H}$, generate a coverage path on $G$ that circumnavigate this tree, and then segment this path to distribute among multiple robots. These methods do not offer guarantees on solution quality due to their simplistic approach to path distribution. Multi-tree methods~\cite{zheng2007robot,zheng2010multirobot,tang2023mixed} compute multiple coverage trees, each rooted at the root vertex of a specific robot, to jointly cover all vertices of $\mathcal{H}$, and then generate the coverage paths on $G$ for the robots by circumnavigating their respective trees. In essence, multi-tree methods reduce MCPP to the NP-hard Min-Max Rooted Tree Cover problem~\cite{even2004min, nagamochi2007approximating}, yielding an asymptotic suboptimality ratio of four in makespan minimization on grid graphs. It is important to note that, unlike our direct formulation of MCPP on an edge-weighted $G$, existing grid-based methods typically formulate MCPP on a vertex-weighted $\mathcal{H}$, making a restricted assumption that the weight of each vertex in $\mathcal{H}$ is evenly distributed among its corresponding four vertices in $G$. 

\noindent\textbf{Cellular Decomposition Methods:}
Cellular decomposition methods~\cite{rekleitis2008efficient, xu2014efficient, karapetyan2017efficient} divide the workspace into sub-regions by detecting geometric critical points, using techniques such as trapezoidal decomposition~\cite{latombe1991exact}, boustrophedon decomposition~\cite{choset2000coverage}, and Morse decomposition~\cite{acar2002morse}. 
After dividing the workspace, these methods generate a back-and-forth patterned path to cover each sub-region, often optimizing the path alignment orientation to enhance coverage efficiency~\cite{bochkarev2016minimizing}. 
While these methods are effective in structured environments, their performance diminishes in areas dense with obstacles or in non-rectilinear workspaces due to their reliance on geometric partitioning. Furthermore, they typically assume uniform traversal costs across the workspace, limiting their applicability in scenarios where robots have varying movement costs or task-specific coverage demands.

\noindent\textbf{Global Methods:}
Global methods find significant application in additive manufacturing~\cite{yang2002equidistant, ren2009combined, gibson2021additive} (also known as layered fabrication), where a 3D object is printed layer by layer. Each layer poses a unique coverage problem, necessitating the planning of a space-filling curve to ensure the nozzle covers the entire 2D plane effectively. These methods focus on minimizing path curvature and discontinuities without explicitly decomposing the workspace. Recent advancements include the adaptation of the connected Fermat spiral \cite{zhao2016connected} to reduce energy consumption in CPP tasks over undulating terrains \cite{wu2019energy} and to minimize the makespan in general MCPP scenarios \cite{tang2024multi}. The coverage quality of global methods largely depends on the characteristics of the space-filling curve generated, often necessitating additional trajectory optimization in a post-processing step. 

\subsection{Multi-Agent Path Finding (MAPF)}\label{sec:related_work-MAPF}
MAPF aims to compute conflict-free plans for multiple agents to move from their start vertices to their goal vertices. It is NP-hard to solve optimally for makespan minimization on grid graphs \cite{banfi2017intractability}. We highlight only relevant MAPF variants where each agent is given multiple goal vertices and direct readers interested in broader MAPF discussions to detailed surveys~\cite{stern2019multi, ma2022graph}. OMG-MAPF~\cite{mouratidis2024fools} specifies a total order on the goal vertices given to each agent, while MAPF-PC~\cite{zhang2022multi} specifies precedence constraints---some goal vertex must be visited before another. MG-MAPF~\cite{surynek2021multi, ren2021ms, tang2024mgcbs} does not specify a specific visiting order for the goal vertices given to each agent but instead aims to solve the goal sequencing problem jointly with MAPF. Multi-agent Pickup and Delivery (MAPD)~\cite{ma2017lifelong,liu2019task, xu2022multi} does not pre-assign goal vertices to agents and aims to solve the goal allocation and sequencing problem jointly with MAPF. All the above MAPF variants assume unit action costs and discrete time steps and do not directly apply to the problem of deconflicting MCPP solutions on graphs with nonuniform edge weights.

\section{MCPP and Its NP-Hardness}\label{sec:problem}
We consider a set $I=\{1,2,\ldots ,k\}$ of $k$ robots operating on a four-neighbor connected undirected 2D grid graph $G=(V,E)$, where $V$ is the set of vertices and $E$ is the set of edges connecting each vertex to its top, bottom, left, and right neighbors if they exist.
We assign each edge $e\in E$ with a non-negative weight $w_e$ and simply set $w_e=1$ for every $e\in E$ when considering an unweighted graph $G$.
We specify a set $R=\{r_i\}_{i=1}^k\subseteq V$ of root vertices, where each robot $i$ starts at vertex $r_i$. We use the tuple $(G, I, R)$ to denote an MCPP instance.

A path $\pi_i$ for robot $i$ is defined as a finite ordered sequence of vertices $(v_1,v_2,\ldots ,v_{|\pi_i|})$ such that $v_1=v_{|\pi_i|}=r_i$~\footnote{The requirement that each robot $i$ must start and end at vertex $r_i$ is also known as the \textit{cover-and-return} setting later in~\cite{zheng2010multirobot}, which naturally aligns with STC (and also our ESTC) algorithm in Sec.~\ref{sec:estc} since its coverage path forms a ``loop''. A post-processing procedure can be found in~\cite{zheng2007robot} to convert a STC-like path for the \textit{cover-without-return} setting.} and $(v_{j-1},v_{j})\in E$ for each $j=2,3,\ldots ,|\pi_i|$.
The cost of any path $\pi$ is defined as $c(\pi)=\sum_{e\in \pi}w_e$. For clarity, we let $V(\pi)$ and $\mathcal{E}(\pi)$ denote the set of unique vertices in $\pi$ and the multiset of edges in $\pi$ that possibly contains repeated edges, respectively.
The MCPP problem is defined as follows:
\begin{problem}[MCPP]\label{problem:_mcpp}
Find a set $\Pi=\{\pi_i\}_{i=1}^k$ of paths such that $\bigcup_{i=1}^{k} V(\pi_i) = V$.
\end{problem}
The above problem formulation aligns with the \textit{cover with return} coverage setting~\cite{zheng2010multirobot}, where each robot must start from and return to its given root vertex.
The quality of an MCPP solution is evaluated using the \textit{makespan} metric, denoted by $\theta(\Pi)=\max\{c(\pi_1), c(\pi_2),\ldots ,c(\pi_k)\}$.


\begin{remark}
Following conventional formulations, our MCPP problem definition intentionally excludes inter-robot deconfliction (see Definition~\ref{def:conflict}). However, practical deployment demands collision-free paths. Previous coupled approaches for multi-goal sequencing MAPF~\cite{ren2021ms,ren2023cbss}, a variant of our problem, typically scale only to a few dozen goals (vertices to be covered) and often have low success rates. Our two-phase approach addresses these limitations by first computing an MCPP solution without enforcing collision-free paths (Sec.\ref{sec:mcpp}) and then deconflicting the solution via MAPF techniques (Sec.\ref{sec:deconf}). This decoupled approach not only scales efficiently but is also more generally applicable: Any MCPP method can be used in the first phase, and our post-processing procedure can subsequently handle inter-robot conflicts.
\end{remark}

MCPP can be viewed as an extension of min-max m-TSP on grid graphs where multiple depots exist and each city is allowed to be visited more than once.
Given that the Hamiltonian cycle problem on grid graphs is NP-complete~\cite{itai1982hamilton}, both (single-robot) CPP and thus MCPP are NP-hard, which can be derived through a straightforward reduction from the Hamiltonian cycle problem.
\begin{theorem}~\label{theo:hardness-p1}
It is NP-hard to find a coverage path with the smallest cost on an unweighted grid graph.
\end{theorem}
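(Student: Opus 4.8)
The plan is to reduce from the Hamiltonian cycle problem on grid graphs, which is NP-complete~\cite{itai1982hamilton}. Given an instance $G=(V,E)$ of this problem, I would construct a single-robot CPP instance $(G,\{1\},\{r\})$ on the same unweighted grid graph, with $r\in V$ chosen arbitrarily. Since $G$ is unweighted, the cost of any coverage path $\pi$ equals the number of edge traversals $|\mathcal{E}(\pi)|$ counted with multiplicity. The crux of the argument is to show that the minimum coverage-path cost equals $|V|$ if and only if $G$ admits a Hamiltonian cycle; an algorithm computing the smallest-cost coverage path could then decide Hamiltonicity in polynomial time simply by comparing the optimal cost against $|V|$.

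To establish this equivalence, I would first prove a lower bound. Every coverage path is a closed walk $(v_1,\ldots,v_{m+1})$ with $v_1=v_{m+1}=r$ and $m=|\mathcal{E}(\pi)|$ edges, and such a walk contains at most $m$ distinct vertices, because among the $m+1$ entries of its vertex sequence the endpoints coincide ($v_1=v_{m+1}$). Since a coverage path must satisfy $V(\pi)=V$, it follows that $m\geq |V|$, so the minimum cost is at least $|V|$.

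Next I would analyze the two directions. For the forward direction, if $G$ has a Hamiltonian cycle, then traversing it starting and ending at $r$ yields a coverage path with exactly $|V|$ edges, achieving cost $|V|$. For the reverse direction, if the minimum cost equals $|V|$, then an optimal walk uses exactly $m=|V|$ edges; combined with the counting argument above, its vertices $v_1,\ldots,v_{|V|}$ must all be distinct and exhaust $V$, so the walk visits each vertex exactly once before returning to $r$---precisely a Hamiltonian cycle. This closes the equivalence and hence the reduction.

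The reduction is polynomial-time and is essentially the identity on the graph, so the main subtlety---and the only step that requires care---is the exact characterization of the equality case $m=|V|$ as a Hamiltonian cycle, which hinges on the simple but crucial observation that a closed walk covering all $n$ vertices using only $n$ edges cannot afford a single repeated intermediate vertex. Extending this to the NP-hardness of MCPP then follows immediately by taking $k=1$.
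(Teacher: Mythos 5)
Your proposal is correct and follows the same route as the paper: a reduction from the NP-complete Hamiltonian cycle problem on grid graphs to the decision version of CPP asking whether a coverage path of cost at most $|V|$ exists. The paper states this reduction in one sentence, whereas you additionally spell out the counting argument for why cost exactly $|V|$ characterizes Hamiltonicity, which the paper leaves implicit.
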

\begin{proof}
The Hamiltonian cycle problem, known to be NP-complete on a grid graph $G=(V,E)$~\cite{itai1982hamilton}, can be directly reduced to the decision version of CPP that asks to determine whether there is a coverage path on the same grid graph $G$ with a cost not exceeding $|V|$.
\end{proof}
We now examine MCPP for a more restricted class of grid graphs. Although the Hamiltonian cycle problem can be solved in polynomial time~\cite{umans1997hamiltonian} on solid grid graphs, which are grid graphs without holes, the scenario changes when considering MCPP. Specifically, inspired by the NP-hardness proof for a specialized MCPP variant defined on a vertex-weighted grid graph~\cite{zheng2010multirobot}, we extend this complexity result to our more broadly defined MCPP on solid grid graphs through the following theorem.
\begin{theorem}~\label{theo:hardness-p2}
It is NP-hard to find an MCPP solution with the smallest makespan, even on a solid unweighted grid graph.
\end{theorem}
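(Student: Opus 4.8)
The plan is to prove hardness by a polynomial-time reduction from a number-partitioning problem, mirroring the vertex-weighted construction of~\cite{zheng2010multirobot} but redesigned so that all ``weights'' are realized purely through the number of grid vertices. Since edges are unweighted, $c(\pi)$ merely counts traversals, so magnitudes must now be encoded in unary; the source problem must therefore remain NP-hard under a unary encoding. I would consequently reduce from the strongly NP-hard \textsc{3-Partition} problem rather than from \textsc{Partition}: given $3m$ positive integers $a_1,\dots,a_{3m}$ with $\sum_j a_j=mB$ and $B/4<a_j<B/2$, decide whether the indices split into $m$ triples each summing to $B$. The bound on each $a_j$ forces every feasible group to contain exactly three elements, which will match the combinatorial structure of the coverage instance.

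From such an instance I would build an MCPP instance $(G,I,R)$ with $k=m$ robots on a solid unweighted grid. The graph consists of a common ``backbone'' region to which all roots $r_1,\dots,r_m$ attach, together with $3m$ ``block'' gadgets, where block $j$ is a solid rectangular patch of $\Theta(a_j)$ vertices chosen so that any coverage loop that enters, sweeps, and leaves it incurs cost exactly $\alpha a_j+\beta$ for fixed constants $\alpha,\beta$ independent of $j$. The blocks are laid out along the backbone so that the whole figure remains a solid grid graph (no holes) and so that the fixed overhead each robot pays to reach its region, transition between assigned blocks, and return to its root is a common constant $\gamma$. I would then ask whether a solution exists with makespan at most a target $T:=\gamma+\alpha B+\beta'$, the value attained when a single robot covers blocks of total size exactly $B$.

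Correctness follows from the standard makespan-balancing argument adapted to this geometry. For the forward direction, a valid $3$-partition yields an assignment of three blocks to each robot; applying ESTC (Sec.~\ref{sec:estc}) within each robot's region produces $m$ coverage loops of equal cost $T$, so $\theta(\Pi)\le T$. For the reverse direction, suppose $\theta(\Pi)\le T$. Summing the per-robot costs bounds the total traversal work by $mT$, while covering every block forces the total work to be at least $m\gamma+\alpha\sum_j a_j+\beta' m=mT$; hence every robot has cost exactly $T$ with no redundant coverage, so the blocks partition into $m$ groups of total size exactly $B$, i.e.\ a valid $3$-partition (necessarily into triples, by $B/4<a_j<B/2$). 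Because all block sizes are polynomial in the unary input, the construction has polynomial size, completing the reduction; hardness for general edge-weighted grids then follows a fortiori, as the unweighted case is a special case.

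The main obstacle I anticipate is the geometric realization rather than the counting argument. I must exhibit an explicit solid layout in which (i) each block is itself a solid sub-grid whose optimal coverage cost is an exact affine function $\alpha a_j+\beta$ of its size, (ii) the backbone keeps the union hole-free while giving every robot a uniform, block-independent overhead $\gamma$, and (iii) no path that reuses backbone edges or revisits vertices can beat the $mT$ total-work lower bound. Pinning down $\alpha,\beta,\gamma$ and verifying that STC-style circumnavigation attains them exactly, so that $\theta(\Pi)\le T$ holds \emph{iff} the partition exists, is the delicate part; the remainder is bookkeeping.
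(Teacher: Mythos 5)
Your high-level plan --- a polynomial-time reduction from the strongly NP-hard \textsc{3-Partition} problem, with magnitudes encoded in unary as vertex counts because the grid is unweighted --- is exactly the route the paper takes. However, the part you defer as ``the delicate part'' is where the proof actually lives, and your proposed gadgets and accounting would not carry it through. First, your reverse direction rests on an exact total-work equality: you lower-bound the total traversal work by $m\gamma+\alpha\sum_j a_j+\beta' m$ and match it to $mT$ to conclude every robot has cost exactly $T$ with no redundancy. The $m\gamma$ term is not forced (a robot may stay at its root while others cover its share), and even granting ``no redundant coverage,'' a solid rectangular block can be split between two robots with each paying cost proportional to its share and no vertex visited twice --- so you cannot conclude that the blocks partition into groups summing to $B$. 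Second, demanding a block-independent overhead $\gamma$ that is \emph{exactly} uniform is geometrically unrealizable when blocks sit at different positions along a backbone.

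The paper's construction contains two ideas that repair precisely these gaps. (i) Each $a_i$ is realized as a \emph{width-one tunnel} of $4ka_i$ vertices hanging off a corridor vertex $v_i$: whichever robot first reaches the \emph{innermost} vertex of tunnel $i$ must traverse all $2\cdot 4ka_i$ tunnel edges itself (there is no way to split a dead-end corridor of width one), so each tunnel is unambiguously charged to a single robot and the per-robot cost is bounded below by $8k\sum_{i\in S_j}a_i+6$ without any global total-work argument. (ii) The tunnel lengths are inflated by the factor $4k$ so that the cost granularity of the $a_i$'s ($8k$ per unit) strictly exceeds the entire corridor overhead (at most $8k-2$); the target makespan $8kB+8k-2$ then yields $\sum_{i\in S_j}a_i\le B+1-\tfrac1k$, and integrality gives $\sum_{i\in S_j}a_i\le B$, after which summing over $j$ forces equality. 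This slack-plus-integrality argument removes any need for your exactly uniform $\gamma$ or tight equality $mT=mT$. Your forward direction and your observation that \textsc{3-Partition}'s bounds force exactly three items per robot match the paper; it is the gadget choice (width-one tunnels versus solid rectangles) and the scaling trick that you are missing, and without them the ``only if'' direction fails.
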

\begin{proof}
We prove by showing the NP-completeness of a decision version of MCPP, that is, determining whether a given MCPP instance has a solution with a makespan not exceeding a given positive integer.
One can evaluate whether the makespan of a given solution exceeds the given positive integer in polynomial time, implying that the MCPP problem is in NP.
We now reduce the MCPP problem to the NP-complete \textit{3-Partition} problem~\cite{gary1979computers} in polynomial time.
Specifically, given a positive integer $B$ and a set $S=\{a_i\}_{i=1}^{3k}$ of positive integers, where each $a_i\in (\frac{B}{4}, \frac{B}{2})$ and $\sum_{i=1}^{3k}a_i=kB$, the problem of 3-Partition is to determine if there exists an evenly partition $S$ into $k$ subsets $S_1,S_2,\ldots ,S_k$, each summing to $B$.

Given a 3-Partition instance, we construct an MCPP instance on an unweighted grid graph as shown in Fig.~(\ref{fig:proof}). This graph has a central horizontal ``corridor'' with $4k$ vertices $r_k,r_{k-1},\ldots ,r_1,v_1,v_2,\ldots ,v_{3k}$ and $3k$ vertical ``tunnel'', each connected to a vertex $v_i$. The $i$-th tunnel contains $4ka_i$ vertices and extends $v_i$ upwards if $i$ is odd and downwards if even.
Robots $1,2,\ldots ,k$ start at their root vertices $r_1,r_2,\ldots ,r_k$, respectively.

We now show that the makespan of the above MCPP instance is at most $8kB+8k-2$ if and only if $S$ can be evenly partitioned into $k$ subsets:

\noindent$\bullet$ ``\textit{If}'' direction:
Given an even partition of $S$ into $k$ subsets $S_j$ where $\sum_{i\in S_j} a_i=B$, we assign each robot $j$ to cover the $i$-th tunnel for each $i\in S_j$. Each robot needs at most $\sum_{i\in S_j}4ka_i\times 2=8kB$ steps to cover its tunnels and at most $(4k-1)\times 2$ steps to traverse the corridor (between $r_k$ to $v_{3k}$). 
Therefore, the cost of any robot is at most $8kB+8k-2$, ensuring the makespan of the resulting MCPP solution does not exceed $8kB+8k-2$.

\noindent$\bullet$ ``\textit{Only if}'' direction:
Given an MCPP solution where the cost of any robot is at most $8kB+8k-2$, we define $S_j$ as the set of tunnel indices where the innermost vertex of each tunnel is first covered by robot $j$, resulting in a partition of $k$ subsets $S_1,S_2,\ldots ,S_k$.
Since each $a_i\in (\frac{B}{4}, \frac{B}{2})$ and $|S|=3k$, each robot must cover exactly $3$ tunnels.
Therefore, the cost for any robot is at least $\sum_{i\in S_j}(4ka_i\times 2)+3\times 2=8k\sum_{i\in S_j}a_i + 6$, calculated as the cost of robot 1 covering the first three tunnels. On the other hand, the cost of any robot is at most $8kB+8k-2$ in the given MCPP solution.
Since $a_i$, $B$ and $k$ are all positive integers, chaining the above two inequalities yields $\sum_{i\in S_j}a_i\leq B+1-\frac{1}{k}\leq B$ for any $j$.
Since $\sum_{j=1}^k\sum_{i\in S_j}a_i=\sum_{i=1}^{3k}a_i=kB$, $\sum_{i\in S_j}a_i=B$ must hold to satisfy $\sum_{i\in S_j}a_i\leq B$, for any $j$. Therefore, $S_1, S_2,\ldots ,S_k$ is an even partition of $S$.
\end{proof}

\begin{figure}[t]
    \centering
    \includegraphics[width=0.6\linewidth]{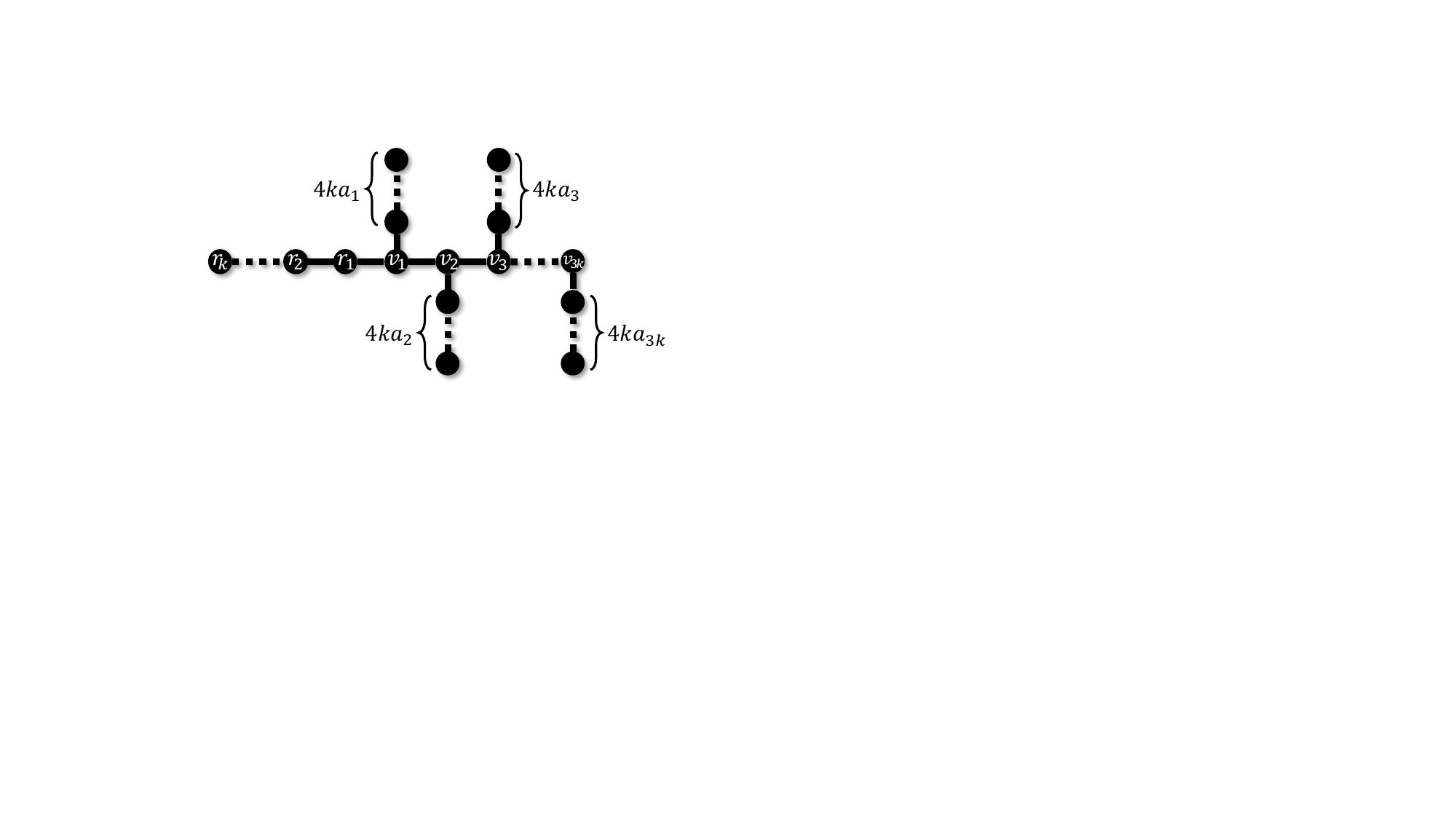}
    \caption{An MCPP instance reducible from the \textit{3-Partition} problem.}
    \label{fig:proof}
\end{figure}


\section{Extended Spanning Tree Coverage (ESTC)}\label{sec:estc}
In this section, we present our new ESTC paradigm  that advances standard STC-based paradigms by addressing coverage problems on any given connected grid graph, even when the quadrant coarsening $\mathcal{H}$ is incomplete. We first describe how ESTC solves (single-robot) CPP, a special case of MCPP (Problem~\ref{problem:_mcpp}) when $k=|I|=1$.
Solving CPP efficiently is fundamental to our local search framework for MCPP detailed in Sec.~\ref{sec:mcpp}. We then analyze the properties and time complexity of ESTC and introduce two local optimizations to improve ESTC solutions. We finally discuss how ESTC can be integrated into existing MCPP methods to effectively handle incomplete $\mathcal{H}$.

ESTC extends both the STC paradigm for offline CPP and the Full-STC paradigm~\cite{gabriely2002spiral} for online CPP, which operate on a quadrant coarsened grid $\mathcal{H}$ derived from $G$. These paradigms typically evaluate the traversal costs by decomposing the given hypervertex weights of $\mathcal{H}$ into vertex weights for $G$ to maintain solution optimality.
However, this approach does not adequately capture the actual traversal costs incurred by a robot, which are better represented by the edges traversed, as defined in our formulation.

\subsection{Algorithm Description}\label{subsec:estc}
ESTC operates on a \textit{hypergraph} graph $H=(V_h,E_h)$, derived from the grid graph $G=(V,E)$. 
As shown in Fig.~(\ref{fig:hypergraph}), similar to standard STC-based paradigms, ESTC attempts to contract neighboring vertices of $V$ corresponding to a $2\times 2$ block of grid cells into a \textit{hypervertex} of $V_h$. 
However, unlike the quadrant coarsened grid $\mathcal{H}$ used by standard STC-based paradigms, the hypergraph $H$ effectively addresses the disconnections caused by partially obstructed $2\times 2$ blocks. 
A \textit{hyperedge} connects two hypervertices only if there is at least one adjacent vertex pair from these hypervertices. A special case in hypervertex construction occurs when a $2\times 2$ block contains only two diagonal vertices. 
In this case, each diagonal vertex forms its own hypervertex, resulting in two nonadjacent hypervertices (see Fig.~(\ref{fig:hypergraph})). 
A hypervertex $\delta$ is \textit{incomplete} if it is formed by fewer than four vertices; otherwise, it is \textit{complete}.
The graph $G$ is contracted left to right and bottom to top by default during the hypergraph construction.
For clarity, we let $\delta_v \in V_h$ denote the hypervertex contracted from $v \in V$ and $\varepsilon = (\delta_u, \delta_v) \in E_h$ denote the hyperedge connecting $\delta_u,\delta_v$.

\begin{figure*}
\centering
\includegraphics[width=0.9\linewidth]{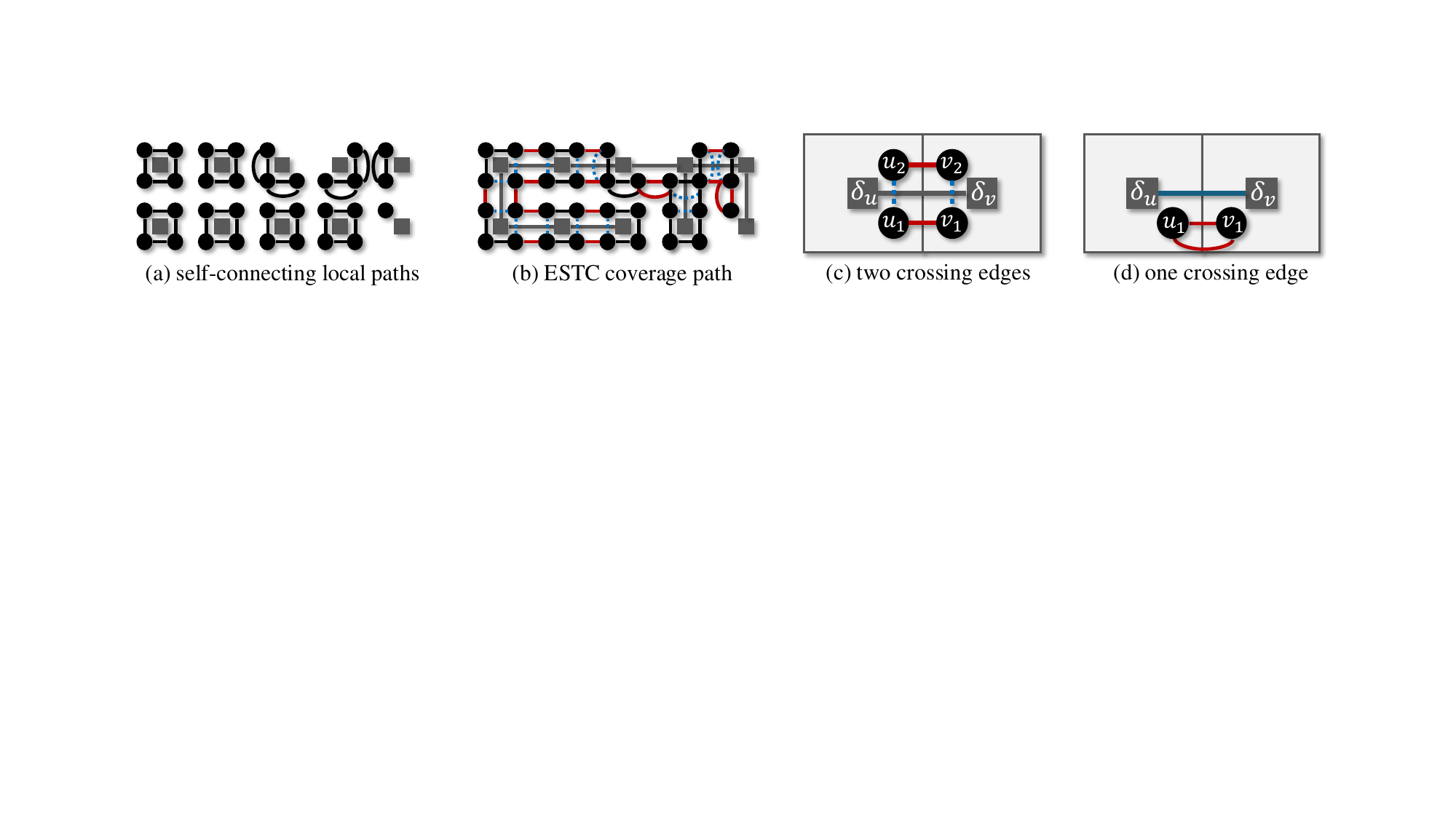}
\caption{Illustration of the ESTC algorithm. (a) The initial local paths (black lines). 
(b) The resulting ESTC coverage path by applying the rerouting rule on each hyperedge of a hypergraph spanning tree.
(c)(d) For each hyperedge (gray solid line) $\varepsilon=(\delta_u,\delta_v)\in E_h$, the rerouting rule adds $E^+_\varepsilon$ (red solid lines) to and removes $E^-_\varepsilon$ (blue dashed lines) from the local paths of $\delta_u$ and $\delta_v$, depending on the number of the crossing edges $(u_1,v_1)$ and $(u_2,v_2)$ of $\varepsilon$.
}
\label{fig:optimal_circum}
\end{figure*}

\begin{figure}
\centering
\includegraphics[width=0.8\linewidth]{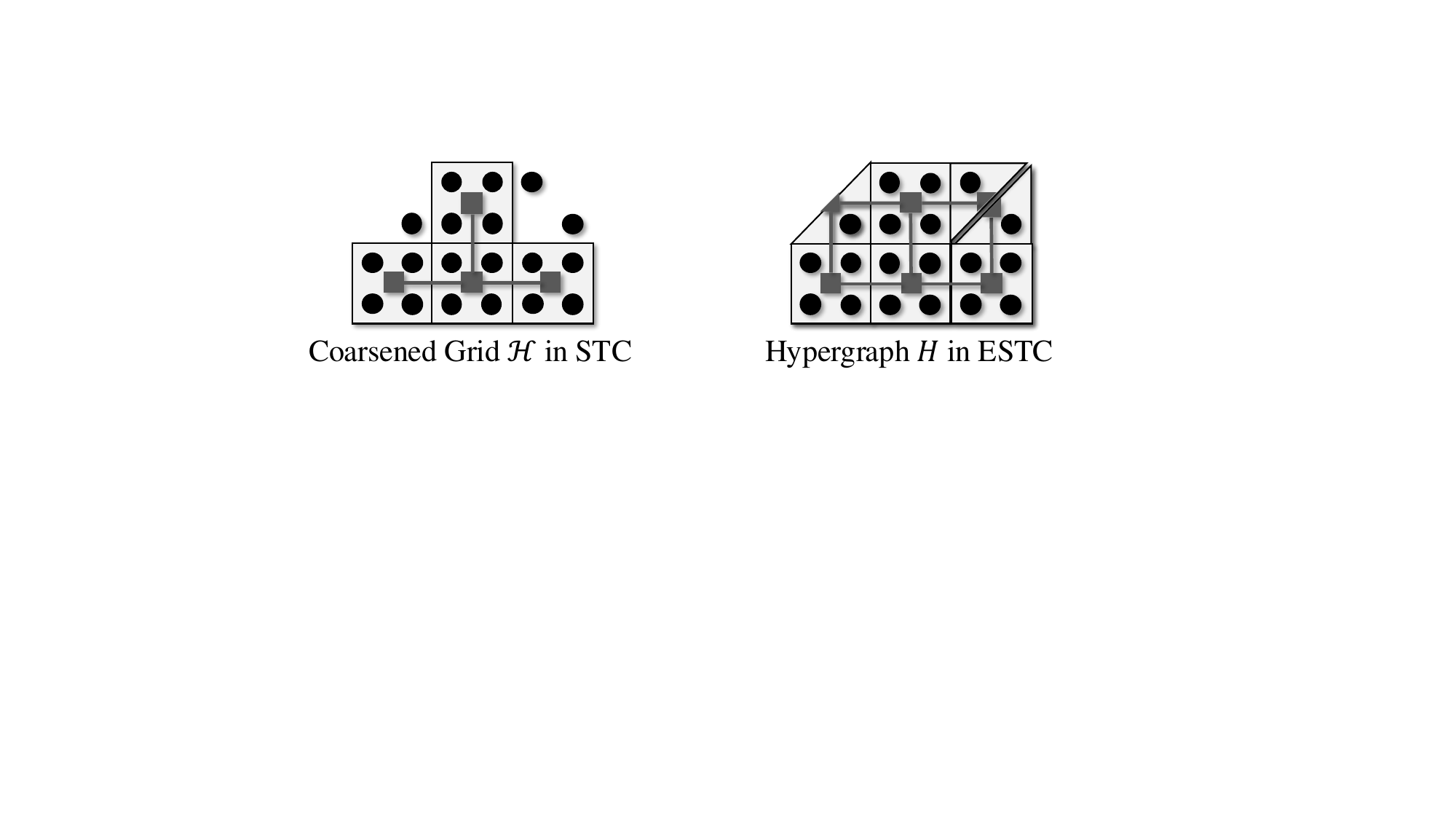}
\caption{The coarsened grid $\mathcal{H}$ in STC and the hypergraph $H$ in ESTC (both in gray markers and lines) for an input grid graph (black circles). The former yields incomplete coverage while the latter ensures complete coverage.}
\label{fig:hypergraph}
\end{figure}

We describe the pseudocode for ESTC in Alg.~\ref{alg:estc}.
ESTC begins with the hypergraph construction [Line~\ref{alg:estc:build_H}] and finds the Minimum Spanning Tree (MST) of $H$.
Specifically, ESTC employs Kruskal's algorithm~\cite{kruskal1956shortest} to find the MST [Line~\ref{alg:estc:find_mst}] since the hyperedge weights (will be defined later) can be negative.
It then initializes a self-connecting local path $\pi_\delta$ for each $\delta_v\in V_h$ [Line~\ref{alg:estc:init}], as shown in Fig.~(\ref{fig:optimal_circum}a).
Note that $\pi_\delta$ is the optimal coverage path for the vertices with respect to hypervertex $\delta$.
Then, for each hyperedge $\varepsilon=(\delta_u,\delta_v)$ in the MST $M$, a rerouting rule is applied to connect $\pi_{\delta_u}$ and $\pi_{\delta_v}$ to construct a coverage path for the vertices in $\delta_u$ and $\delta_v$ [Lines~\ref{alg:estc:connect_loop}-\ref{alg:estc:connect}].
The resulting ESTC coverage path $\pi$ is returned after all the local paths are connected one to another [Line~\ref{alg:estc:return}], as illustrated in Fig.~(\ref{fig:optimal_circum}b).


\noindent\textbf{Rerouting Rule:}
The rerouting rule aims to connect pairwise self-connecting local paths of a hyperedge.
We categorize each hyperedge $\varepsilon=(\delta_u,\delta_v)\in E_h$ based on the number of \textit{crossing edges} between the corresponding vertices within the blocks of $\delta_u, \delta_v$.
A crossing edge is an edge $(u,v)\in E$ where $u$ and $v$ are contracted into adjacent hypervertices $\delta_u$ and $\delta_v$, respectively.
Denote $x_\varepsilon$ as the number of crossing edges for a hyperedge $\varepsilon$, we define two multisets for any hyperedge $\varepsilon$, $E^+_\varepsilon=\{(u_1,v_1),(u_2,v_2)\}$ and $E^-_\varepsilon=\{(u_1,u_2),(v_1,v_2)\}$ if $x_\varepsilon=2$; otherwise when $x_\varepsilon=1$, $E^+_\varepsilon=\{(u_1,v_1),(v_1,u_1)\}$ and $E^-_\varepsilon=\emptyset$.
Note that $x_\varepsilon=0$ implies $(\delta_u,\delta_v)\notin E_h$.
As illustrated in Fig.~(\ref{fig:optimal_circum}c) and (\ref{fig:optimal_circum}d), $E^+_\varepsilon$ and $E^-_\varepsilon$ are represented in red solid lines and blue dashed lines, respectively.
The rerouting rule for every $\varepsilon$ is then defined to add $E^+_\varepsilon$ to connect the two local paths while removing $E^-_\varepsilon$.
It ensures that each vertex has an even degree with the least vertex duplication, making the connected path a valid path.
The above rerouting rule can be interpreted as a combination of the standard STC circumnavigating pattern when $x_\varepsilon=2$ and the Full-STC rerouting rule when $x_\varepsilon=1$. 
It is always feasible as the construction of $H$ guarantees no vertex within any incomplete hypervertex is isolated, thereby ensuring the path remains feasible and continuous across any grid graphs.

\begin{algorithm}[t]
\DontPrintSemicolon
\linespread{0.95}\selectfont
\caption{Extended-STC (ESTC)}\label{alg:estc}
\SetKwInput{KwInput}{Input}
\KwInput{grid graph $G=(V,E)$}
construct hypergraph $H=(V_h,E_h)$ from $G$\;\label{alg:estc:build_H}
$M\gets$ find MST of $H$ via Kruskal's algorithm\;\label{alg:estc:find_mst}
$\pi_\delta\gets$ initial self-connecting local path for each $\delta\in V_h$\;\label{alg:estc:init}
$\pi\gets$ an empty path\;
\For{$\varepsilon=(\delta_u, \delta_v)\in$ the set of hyperedges in $M$}
{\label{alg:estc:connect_loop}
connect $\pi_{\delta_u},\pi_{\delta_v}$ via rerouting rule and add it to $\pi$\;\label{alg:estc:connect}
}
\Return ESTC coverage path $\pi$\;\label{alg:estc:return}
\end{algorithm}

\noindent\textbf{Hyperedge Weights:}
The ESTC algorithm has a good property: Given any two spanning trees $T$ and $T'$ with their resulting ESTC paths $\pi$ and $\pi'$, respectively, their path difference (represented using the multisets of their edges) is a constant, given as follows:
\begin{align}\label{eqn:path_diff}
\begin{split}
\mathcal{E}(\pi)\setminus \mathcal{E}(\pi')=\bigcup_{\varepsilon\in\Delta E_h} E^+_\varepsilon \cup\bigcup_{\varepsilon'\in\Delta E'_h}E^-_{\varepsilon'}&\quad\text{and}\\
\mathcal{E}(\pi')\setminus \mathcal{E}(\pi)=\bigcup_{\varepsilon'\in\Delta E'_h} E^+_{\varepsilon'} \cup\bigcup_{\varepsilon\in\Delta E_h}E^-_{\varepsilon}&,
\end{split}
\end{align}
where $\Delta E_h=E_h(T)\setminus E_h(T')$ and $\Delta E'_h=E_h(T')\setminus E_h(T)$ represent the two hyperedge difference sets between $T$ and $T'$.
We then define the weight $w_\varepsilon$ for hyperedge $\varepsilon$ as below:
\begin{align}\label{eqn:weight_def}
w_\varepsilon=\sum_{e\in E^+_\varepsilon}w_e-\sum_{e\in E^-_\varepsilon}w_e.
\end{align}

The definition of $w_\varepsilon$ implicitly encodes the ESTC path cost difference of whether $\varepsilon$ exists in the spanning tree.
In Theorem~\ref{theo:weight_def}, we validate that with the defined hyperedge weights in Eqn.~(\ref{eqn:weight_def}), an MST of hypergraph $H$ always results in an optimal ESTC circumnavigating coverage path on $G$.

\subsection{Theoretical Analysis}
ESTC is guaranteed to generate a path that covers all vertices of a connected grid graph $G$ based on the argument used in the proof of Lemma 3.1 in~\cite{gabriely2002spiral} since the argument is not affected by the addition of edge weights.
\begin{theorem}\label{theo:complete}
ESTC achieves complete coverage on a connected grid graph $G$.
\end{theorem}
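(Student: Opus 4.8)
The plan is to establish coverage by an induction over the edges of the MST $M$, showing that after connecting each pair of local paths the accumulated walk remains a single closed tour that visits every vertex of every hypervertex processed so far. First I would record two facts that do not depend on the edge weights: (i) since $G$ is connected, its hypergraph $H$ is connected, so Kruskal's algorithm returns a spanning tree $M$ whose edges touch all $|V_h|$ hypervertices; and (ii) each initial self-connecting local path $\pi_\delta$ is a closed walk that, by construction, visits every vertex contracted into $\delta$ --- this holds uniformly for complete hypervertices (the four-cell loop), for incomplete ones such as the three-cell L-shape and the two-cell edge, and for the degenerate diagonal case where each single-cell hypervertex carries the trivial closed walk at that vertex. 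Thus the disjoint union of all local paths already covers $V$; what remains is to show that the rerouting rule merges them into one valid coverage path without ever dropping a vertex.

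For the inductive step I would argue that the rerouting rule applied to a hyperedge $\varepsilon=(\delta_u,\delta_v)$ preserves both the even-degree (Eulerian) structure and the full vertex coverage. Viewing the accumulated edges as a multigraph, the invariant to maintain is that every visited vertex has even degree and at least one incident edge. When $x_\varepsilon=2$, adding $E^+_\varepsilon=\{(u_1,v_1),(u_2,v_2)\}$ and deleting $E^-_\varepsilon=\{(u_1,u_2),(v_1,v_2)\}$ changes the degree of each of $u_1,u_2,v_1,v_2$ by $+1-1=0$, so all parities are preserved, and splicing in the two crossing edges after cutting the two internal edges fuses the two loops into a single larger loop. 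When $x_\varepsilon=1$, adding $E^+_\varepsilon=\{(u_1,v_1),(v_1,u_1)\}$ with $E^-_\varepsilon=\emptyset$ raises the degrees of $u_1$ and $v_1$ by two, again preserving parity and again joining the two loops into one. Crucially, no vertex loses its only incident edges: in the $x_\varepsilon=1$ case nothing is removed, and in the $x_\varepsilon=2$ case each endpoint retains an incident edge because the construction of $H$ guarantees that no vertex inside an incomplete hypervertex is isolated, so the deletion of $E^-_\varepsilon$ can never strand a vertex.

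Chaining this invariant over all hyperedges of $M$, I would conclude that $\mathcal{E}(\pi)$ forms a connected multigraph --- connectivity following from the connectivity of $M$ together with the loop-fusion at each step --- in which every vertex of $V$ has even degree and is incident to at least one edge. Such a multigraph admits an Eulerian circuit, which is precisely the returned closed coverage path $\pi$, and it necessarily satisfies $V(\pi)=V$. The main obstacle is the bookkeeping for the incomplete-hypervertex and $x_\varepsilon=1$ configurations: one must verify that the local paths of partially obstructed blocks are themselves valid covering loops and that the rerouting never isolates a vertex, which is exactly where the argument departs from standard STC and Full-STC and where the property ``no vertex within any incomplete hypervertex is isolated'' does the essential work. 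Since none of these structural arguments reference the magnitudes of $w_e$ or $w_\varepsilon$, the reasoning of Lemma~3.1 in \cite{gabriely2002spiral} carries over unchanged to the weighted setting.
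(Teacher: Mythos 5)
Your proposal is correct and takes essentially the same route as the paper: the paper's proof consists of citing the structural argument of Lemma~3.1 in the Full-STC work~\cite{gabriely2002spiral} and observing that it is unaffected by edge weights, which is precisely the observation in your closing paragraph, while your induction over MST hyperedges (parity preservation, loop fusion, and the no-isolated-vertex guarantee for incomplete hypervertices) is a faithful unpacking of that cited argument. The only minor quibble is that in the $x_\varepsilon=2$ case the endpoints of the removed edges are not stranded simply because each gains a crossing edge from $E^+_\varepsilon$; the ``no vertex within an incomplete hypervertex is isolated'' property is instead what guarantees the rerouting rule is applicable in the first place.
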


We now analyze the time complexity of ESTC below.
\begin{theorem}\label{theo:time_comp}
The time complexity of ESTC (Alg.~\ref{alg:estc}) on a grid graph $G=(V,E)$ is $O(|V|\log|V|)$.
\end{theorem}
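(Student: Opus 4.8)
The plan is to bound the running time of each line of Alg.~\ref{alg:estc} separately and then identify the dominant contribution. The crucial preliminary step is to establish that the hypergraph $H=(V_h,E_h)$ is sparse relative to $G$, namely that $|V_h|=O(|V|)$ and $|E_h|=O(|V|)$. The first bound is immediate: each hypervertex is contracted from at most four vertices of $G$, and every vertex of $G$ belongs to exactly one hypervertex, so $|V_h|\leq |V|$. For the edge bound I would exploit the grid structure: because $H$ inherits a grid-like topology from the quadrant coarsening, every hypervertex can be adjacent only to hypervertices contracted from horizontally or vertically neighboring $2\times 2$ blocks, so each hypervertex is incident to only a constant number of hyperedges. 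Hence $|E_h|=O(|V_h|)=O(|V|)$. Since $G$ is a four-neighbor grid graph, we also have $|E|=O(|V|)$, which I would use freely.

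With these size bounds in place, I would account for the cost of each line. Constructing $H$ [Line~\ref{alg:estc:build_H}] requires a single pass over the vertices and edges of $G$ to form the hypervertices, identify the hyperedges, count crossing edges, and evaluate the hyperedge weights via Eqn.~(\ref{eqn:weight_def}); since each hyperedge involves only a constant number of crossing edges and hence constant-size $E^+_\varepsilon$ and $E^-_\varepsilon$, this runs in $O(|V|+|E|)=O(|V|)$ time. Initializing the self-connecting local paths [Line~\ref{alg:estc:init}] touches each hypervertex once and each local path has at most four vertices, costing $O(|V_h|)=O(|V|)$. The rerouting loop [Lines~\ref{alg:estc:connect_loop}--\ref{alg:estc:connect}] iterates over the $|V_h|-1$ tree edges, and for each hyperedge the rerouting rule adds and removes only a constant number of edges; using a linked-list representation of the paths makes each splice $O(1)$, so the loop also costs $O(|V_h|)=O(|V|)$.

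The dominant term comes from computing the MST via Kruskal's algorithm [Line~\ref{alg:estc:find_mst}]. Sorting the $|E_h|$ hyperedges costs $O(|E_h|\log|E_h|)$, and the accompanying union--find operations run in near-linear time. Substituting $|E_h|=O(|V|)$ gives $O(|V|\log|V|)$, which absorbs all the linear-time contributions from the remaining lines and yields the claimed bound. The only subtle point — the step I expect to require the most care — is the edge-count argument $|E_h|=O(|V|)$, in particular verifying that the diagonal special case (where a $2\times 2$ block splits into two singleton hypervertices) does not inflate the number of hyperedges beyond a constant factor; this follows because such splitting at worst doubles the local hypervertex count while keeping each hypervertex's degree bounded by a constant.
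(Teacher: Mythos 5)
Your proposal is correct and follows essentially the same approach as the paper's proof: bound the construction, initialization, and rerouting steps by $O(|V|)$ using the linear size of the hypergraph, and let Kruskal's sorting step dominate with $O(|V|\log|V|)$. Your treatment is somewhat more careful in explicitly bounding $|E_h|=O(|V|)$ (the paper states the Kruskal cost in terms of $|E|$ and invokes $O(|E|)=O(|V|)$ for grid graphs), but the argument is the same.
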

\begin{proof}
To construct the hypergraph $H=(V_h,E_h)$ from $G=(V,E)$ as described in Sec.~\ref{subsec:estc}, ESTC first iterates through $V$ to build $V_h$ and then through $V_h$ to build $E_h$.
Given that $|V_h|$ is proportional to $|V|$, the time complexity of this construction is $O(|V|)$.
To find the MST of $H$, ESTC uses Kruskal's algorithm, which incurs a time complexity of $O(|E|\log|E|)$.
Initializing the local paths and connecting them via the rerouting rule entail a time complexity of $O(|V|+|E|)$.
As $O(|E|)=O(|V|)$ for grid graphs, the overall time complexity of ESTC combines to $O(|V|\log|V|)$.
\end{proof}

The following theorem validates that our definition of hyperedge weights ensures that the ESTC circumnavigating path derived from an MST of $H$ has the smallest cost among those derived from any spanning trees of $H$.

\begin{theorem}\label{theo:weight_def}
The cost of the circumnavigating path derived from a minimum spanning tree (MST) of the hypergraph $H$ is no greater than that from any other spanning tree of $H$.
\end{theorem}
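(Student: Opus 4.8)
The plan is to show that the cost gap between the two ESTC paths equals exactly the weight gap between their underlying spanning trees, so that the optimality of the MST transfers directly to the path. Let $T$ be an MST of $H$ with ESTC path $\pi$, and let $T'$ be any other spanning tree with ESTC path $\pi'$. Since $c(\pi)=\sum_{e\in\mathcal{E}(\pi)}w_e$ sums over a multiset, I would first record the elementary multiset identity $c(\pi)-c(\pi')=\sum_{e\in\mathcal{E}(\pi)\setminus\mathcal{E}(\pi')}w_e-\sum_{e\in\mathcal{E}(\pi')\setminus\mathcal{E}(\pi)}w_e$, which is valid because for each edge only one of the two multiset differences carries positive multiplicity.

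The substitution step is where the path-difference formula in Eqn.~(\ref{eqn:path_diff}) does the heavy lifting. Plugging both multiset differences into the identity above and separating the four resulting double sums according to whether they range over $\Delta E_h$ or $\Delta E'_h$, I would regroup the terms so that each hyperedge $\varepsilon\in\Delta E_h$ contributes $\sum_{e\in E^+_\varepsilon}w_e-\sum_{e\in E^-_\varepsilon}w_e$, while each $\varepsilon'\in\Delta E'_h$ contributes the negative of the same expression. By the definition of the hyperedge weight in Eqn.~(\ref{eqn:weight_def}), each such bracket is precisely $w_\varepsilon$, yielding $c(\pi)-c(\pi')=\sum_{\varepsilon\in\Delta E_h}w_\varepsilon-\sum_{\varepsilon'\in\Delta E'_h}w_{\varepsilon'}$.

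Finally I would note that the hyperedges common to both trees, $E_h(T)\cap E_h(T')$, cancel when their total weights are compared, so the right-hand side is exactly $\sum_{\varepsilon\in E_h(T)}w_\varepsilon-\sum_{\varepsilon\in E_h(T')}w_\varepsilon$, i.e.\ the difference of the two spanning-tree weights. Since $T$ is an MST, this difference is nonpositive, giving $c(\pi)\le c(\pi')$ and establishing the claim. The only genuine care point---the would-be obstacle---is the multiset bookkeeping: I must ensure that repeated edges (for instance the doubled crossing edge when $x_\varepsilon=1$, where $E^+_\varepsilon=\{(u_1,v_1),(v_1,u_1)\}$) are counted with their multiplicities throughout, so that the unions in Eqn.~(\ref{eqn:path_diff}) are read as multiset unions and the cancellations in the telescoping step are exact rather than merely set-theoretic.
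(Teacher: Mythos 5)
Your proposal is correct and follows essentially the same route as the paper: both use the path-difference identity of Eqn.~(\ref{eqn:path_diff}) together with the hyperedge-weight definition of Eqn.~(\ref{eqn:weight_def}) to reduce $c(\pi)-c(\pi')$ to $\sum_{\varepsilon\in\Delta E_h}w_\varepsilon-\sum_{\varepsilon'\in\Delta E'_h}w_{\varepsilon'}$. Your closing step---adding back the common hyperedges so that this difference telescopes to $w(T)-w(T')\le 0$---is a slightly cleaner (and arguably more airtight) finish than the paper's edge-by-edge comparison between $\Delta E_h$ and $\Delta E'_h$, but the substance of the argument is the same.
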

\begin{proof}
For hypergraph $H=(V_h,E_h)$, consider its two spanning trees $T$ and $T'$ and their corresponding ESTC circumnavigating paths $\pi$ and $\pi'$, respectively.
With Eqn.~(\ref{eqn:path_diff}), the path cost difference $\Delta c$ between $\pi$ and $\pi'$ can be computed as:
\begin{align*}
\Delta c&=c(\pi)-c(\pi')
=\sum_{e\in \mathcal{E}(\pi)\setminus \mathcal{E}(\pi')} w_e-\sum_{e\in \mathcal{E}(\pi')\setminus \mathcal{E}(\pi)} w_e\\
&=\sum_{\varepsilon\in\Delta E_h}\sum_{e\in E^+_\varepsilon} w_e + \sum_{\varepsilon'\in\Delta E'_h}\sum_{e\in E^-_{\varepsilon'}} w_e \\ 
&-\sum_{\varepsilon'\in\Delta E'_h}\sum_{e\in E^+_{\varepsilon'}} w_e - \sum_{\varepsilon\in\Delta E_h}\sum_{e\in E^-_{\varepsilon}} w_e.
\end{align*}
With Eqn.~(\ref{eqn:weight_def}) substituted into the above, we have:
\begin{align}\label{eqn:cost_diff}
\begin{split}
\Delta c
&=\sum_{\varepsilon\in\Delta E_h}\left[\sum_{e\in E^+_\varepsilon} w_e -\sum_{e\in E^-_{\varepsilon}} w_e\right]\\
&-\sum_{\varepsilon'\in\Delta E'_h}\left[\sum_{e\in E^+_{\varepsilon'}} w_e -\sum_{e\in E^-_{\varepsilon'}} w_e\right]\\
&=\sum_{\varepsilon\in\Delta E_h}w_\varepsilon -\sum_{\varepsilon'\in\Delta E'_h} w_{\varepsilon'}.
\end{split}
\end{align}
If $T$ is an MST and $T'$ is an arbitrary spanning tree, we know that $\forall\varepsilon\in\Delta E_h \subseteq E_h(T)$ and $\varepsilon'\in\Delta E'_h \subseteq E_h(T')$, we have $w_\varepsilon \leq w_{\varepsilon'}$, where $E_h(T)$ and $E_h(T')$ are the edge sets of $T$ and $T'$, respectively.
Substituting it into Eqn.~(\ref{eqn:cost_diff}), we have $\Delta c\leq 0$ for an MST $T$ and an arbitrary spanning tree $T'$.
\end{proof}

The following theorem shows that the optimal circumnavigating path on $H$ yields an optimal CPP solution when all hypervertices are complete and a bounded suboptimal CPP solution in general.

\begin{theorem}\label{theo:bounded_subopt}
ESTC returns a coverage path with a cost at most $2\cdot\frac{w_\text{max}}{w_\text{min}}\cdot(1+\frac{n_c-1}{|V|})$ times the cost of the optimal single-robot coverage path, where $n_c$ is the number of complete hypervertices and $w_\text{max}$ and $w_\text{min}$ are the maximum and minimum hyperedge weights, respectively. 
\end{theorem}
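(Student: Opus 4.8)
The plan is to sandwich the suboptimality ratio between an upper bound on the cost of the returned ESTC path $\pi$ and a lower bound on the cost $c^{*}$ of an optimal single-robot coverage path, reducing the whole statement to one combinatorial count: the number of edge traversals $|\mathcal{E}(\pi)|$ in $\pi$. By Theorem~\ref{theo:complete}, $\pi$ is a valid closed coverage walk, so I can write $c(\pi)=\sum_{e\in\mathcal{E}(\pi)}w_e \le |\mathcal{E}(\pi)|\cdot w_{\max}$, where I read $w_{\max}$ and $w_{\min}$ as the largest and smallest \emph{edge} weights of $G$ (the reading under which the ratio $w_{\max}/w_{\min}$ is meaningful, since hyperedge weights may be negative). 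Dividing the two bounds gives a ratio of at most $\frac{w_{\max}}{w_{\min}}\cdot\frac{|\mathcal{E}(\pi)|}{|V|}$, so the entire theorem follows once I show the purely combinatorial estimate $|\mathcal{E}(\pi)|\le 2(|V|+n_c-1)$, which then yields exactly $2\cdot\frac{w_{\max}}{w_{\min}}\cdot(1+\frac{n_c-1}{|V|})$.

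The lower bound on $c^{*}$ is immediate and I would dispatch it first: an optimal solution is a closed walk whose set of visited vertices is all of $V$, and any closed walk covering $|V|$ distinct vertices must use at least $|V|$ edges, each of weight at least $w_{\min}$; hence $c^{*}\ge |V|\cdot w_{\min}$.

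The heart of the argument is the upper bound on $|\mathcal{E}(\pi)|$, which I would obtain by tracking the two stages of Alg.~\ref{alg:estc} separately. For the initial local paths [Line~\ref{alg:estc:init}], a short case analysis on the size of each hypervertex $\delta$ suffices: a complete hypervertex contributes a $4$-cycle, while an incomplete hypervertex on $n_\delta\in\{1,2,3\}$ vertices contributes a back-and-forth walk using $2(n_\delta-1)$ traversals; summing over $V_h$ with $\sum_\delta n_\delta=|V|$ gives a closed-form count in terms of $|V|$, $n_c$, and the number of incomplete hypervertices. For the rerouting stage [Lines~\ref{alg:estc:connect_loop}--\ref{alg:estc:connect}], I would use that the MST $M$ is a spanning tree of $H$ with exactly $|V_h|-1$ hyperedges and that each rerouting step changes the traversal count by a net of at most $+2$: when $x_\varepsilon=1$ it adds the pair $E^{+}_\varepsilon$ and removes nothing, and when $x_\varepsilon=2$ it adds two crossing edges while removing the two edges in $E^{-}_\varepsilon$. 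Combining the two stage counts and bounding the number of incomplete hypervertices yields a bound that simplifies to at most $2(|V|+n_c-1)$; substituting this into the ratio above completes the proof.

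The step I expect to be the main obstacle is the bookkeeping in this edge count: I must credit the removed multisets $E^{-}_\varepsilon$ correctly so the rerouting never adds more than it appears to, and I must confirm that the backtracking traversals inside incomplete hypervertices are counted consistently with the add/remove operations applied across shared vertices, so that no traversal is double counted when a vertex belongs both to a local path and to a crossing edge. A secondary point I would verify is that the bound is insensitive to \emph{which} spanning tree ESTC selects: the count depends on $M$ only through its number of $x_\varepsilon=1$ hyperedges, which is at most $|V_h|-1$ for any spanning tree, so the estimate holds for the cost-minimizing MST produced by Kruskal's algorithm [Line~\ref{alg:estc:find_mst}] despite some hyperedge weights being negative. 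Finally, I would note the clean specialization that when every hypervertex is complete the count collapses to a single circumnavigating cycle, recovering the near-tight behavior anticipated in the discussion preceding the theorem.
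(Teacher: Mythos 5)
Your proposal is correct and follows essentially the same route as the paper's proof: lower-bounding the optimal cost by $|V|\cdot w_{\min}$, upper-bounding $c(\pi)$ via $|\mathcal{E}(\pi)|\cdot w_{\max}$, and counting $|\mathcal{E}(\pi)|$ as the sum of the initial local-path lengths (split by hypervertex size) plus a net contribution of at most $2$ per spanning-tree hyperedge, bounded by $2(|V_h|-1)$. Your reading of $w_{\max}/w_{\min}$ as the extreme \emph{edge} weights of $G$ is also the one the paper's own derivation actually uses, and your target count $|\mathcal{E}(\pi)|\le 2(|V|+n_c-1)$ does follow from exactly the bookkeeping you describe.
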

\begin{proof}
Consider a CPP instance $(G, I, R)$, where $G$ is contracted into hypergraph $H=(V_h,E_h)$ with $n_c$ complete hypervertices.
For any optimal valid coverage path $\pi^*$, it follows that $V(\pi)=V$ and both the first and last vertices of $\pi$ are the root vertex, implying $|\mathcal{E}(\pi^*)|\geq|V|$, with equality only if there are no incomplete hypervertices in $H$. The suboptimality ratio $\rho$ for any ESTC path $\pi$ resulted from MST $T$ is given by:
\begin{align*}
\rho=\frac{c(\pi)}{c(\pi^*)}=\frac{\sum_{e\in \mathcal{E}(\pi)}w_e}{\sum_{e\in \mathcal{E}(\pi^*)}w_e}\leq\frac{|\mathcal{E}(\pi)|\cdot w_\text{max}}{|\mathcal{E}(\pi^*)|\cdot w_\text{min}}\leq\frac{|\mathcal{E}(\pi)|\cdot w_\text{max}}{|V|\cdot w_\text{min}}.
\end{align*}
We now categorize $V_h$ into four disjoint sets such that $V_h=\bigcup_{i=1}^4 V_h^{(i)}$ and each $V_h^{(i)}$ contains hypervertices with $i$ vertices located in it.
As illustrated in Fig.~(\ref{fig:optimal_circum}a), for any $\delta\in V_h^{(i)}$, the initial ESTC self-connecting local path $|\pi_\delta|=4$ if $i=4$ or $3$, $|\pi_\delta|=2$ if $i=2$, and $|\pi_\delta|=0$ if $i=1$.
We categorize the hyperedge set $E_h(T)$ of MST $T$ into $E_h^{(2)}(T)$ and $E_h^{(1)}(T)$ according to the number of crossing edges as described in Fig.~(\ref{fig:optimal_circum}c) and (\ref{fig:optimal_circum}d).
By the initialization of the self-connecting local paths and the rerouting rule, the cardinality of the multiset $\mathcal{E}(\pi)$ can be computed as:
\begin{align}
|\mathcal{E}(\pi)|=4|V_h^{(4)}|+4|V_h^{(3)}| + 2|V_h^{(2)}| + 2|E_h^{(1)}(T)|.
\end{align}
As $|V|=\sum_{i=1}^4 i\cdot|V_h^{(i)}|$, the bound can be further refined as:
\begin{align}\label{eqn:estc_ub_1}
\begin{split}
\rho&\leq\frac{w_\text{max}}{w_\text{min}}\cdot\left[1+\frac{|V_h^{(3)}|-|V_h^{(1)}|+2|E_h^{(1)}(T)|}{|V|}\right]\\
\end{split}.
\end{align}
Substituting $|E_h^{(1)}(T)|\leq|E_h(T)|=|V_h|-1=\sum_{i=1}^4 |V_h^{(i)}|-1$ and $n_c=|V_h^{(4)}|$ into Eqn.~(\ref{eqn:estc_ub_1}), we obtain a simpler yet looser bound:
\begin{align}\label{eqn:estc_ub_2}
\rho\leq2\cdot\frac{w_\text{max}}{w_\text{min}}\cdot\left(1+\frac{n_c-1}{|V|}\right).
\end{align}
\end{proof}
\begin{remark}
With the tighter bound in Eqn.~(\ref{eqn:estc_ub_1}), we can see that ESTC is optimal ($\rho=1$) when $G$ is unweighted ($\frac{w_\text{max}}{w_\text{min}}=1$) and all hypervertices are complete ($|V_h^{(3)}|=|V_h^{(1)}|=|E_h^{(1)}(T)|=0$).
The bound in Eqn.~(\ref{eqn:estc_ub_2}) may appear relatively loose.
However, the last multiplier $(1+\frac{n_c-1}{|V|})\in[1,2)$ as $n_c\leq |V|$.
In fact, it can be seen that as $n_c$ approaches $0$, the bound in Eqn.~(\ref{eqn:estc_ub_2}) gets tighter and closer to $2\cdot\frac{w_\text{max}}{w_\text{min}}$.
This is because when $H$ has more incomplete hypervertices, the optimal solution $\pi^*$ necessitates a rerouting rule, such as that in Full-STC, to revisit edges through incomplete hypervertices to achieve complete coverage.
\end{remark}

\subsection{Local Optimizations}\label{subsec:local_opt}
We introduce two efficient local optimizations for ESTC, namely \textit{Parallel Rewiring} and \textit{Turn Reduction}.
Parallel Rewiring operates as a post-processing procedure to refine the coverage path produced by ESTC, by strategically adjusting path segments locally to reduce the path cost.
Turn Reduction serves as a secondary objective during the execution of ESTC (see Alg.~\ref{alg:estc}) by reducing the number of turns in the path, which is particularly beneficial in environments where turning is costly or undesirable. While these local optimizations do not theoretically improve the upper bound on the suboptimality of ESTC in Theorem~\ref{theo:bounded_subopt}, empirical observations (see Sec.~\ref{subsec:estc}) often demonstrate a notable improvement in solution quality.

\noindent\textbf{Parallel Rewiring:}\label{subsubsec:parallel_rewiring}
As the original STC only explores the solution space of circumnavigating coverage paths, we develop two parallel rewiring procedures to refine any ESTC solution $\pi$ using non-circumnavigating local path segments. 
Type-A Parallel Rewiring searches for every subsequence in $\pi$ matching the pattern $(\ldots ,v,u,\ldots ,v,t,s,\ldots ,t,\ldots )$ and collapses it into $(\ldots ,v,u,\ldots ,s,\ldots t,\ldots )$ if the resulting cost difference $w_{(u,s)}-[w_{(u,v)}+w_{(v,t)}+w_{(t,s)}]<0$.
As shown in Fig.~(\ref{fig:local_opt}a), each collapsing operation effectively removes the intermediate vertices $v,t\in V$ without losing any vertex coverage.
Type-B parallel rewiring identifies every pair of two vertically (or horizontally) parallel edges $(u_1,u_2), (v_1,v_2)\in \mathcal{E}(\pi)$ where $(u_1,u_3), (u_2,u_4), (s_1, u_3), (s_2, u_4)\in \mathcal{E}(\pi)$ are aligned horizontally (or vertically).
As shown in Fig.~(\ref{fig:local_opt}b), it then replaces the edges $(u_1,u_3),(u_2,u_4),(v_1,v_2)$ with $(u_3,u_4),(u_1,v_1),(u_2,v_2)$ if the resulting cost difference $[w_{(u_3,u_4)}+w_{(u_1,v_1)}+w_{(u_2,v_2)}]-[w_{(u_1,u_3)}+w_{(u_2,u_4)}+w_{(v_1,v_2)}]<0$.
Both types have a time complexity of $O(|V|^2)$ with a naive implementation involving two-layer nested iterations through $V$.



\noindent\textbf{Turn Reduction:}
ESTC can be seamlessly integrated with a secondary objective of turn reduction in addition to its primary objective of path cost minimization.
Specifically, when ESTC constructing the MST, it prioritizes the alignment of hyperedges. As the hypergraph $H=(V_h, E_h)$ is a four-neighbor 2d grid, its hyperedges can be categorized by their orientation as vertical or horizontal.
To align with a given target orientation, the ESTC prioritizes the hyperedges traversal in Kruskal's algorithm (line~\ref{alg:estc:find_mst} of Alg.~\ref{alg:estc}) to favor a hyperedge where (1) its orientation matches the given target orientation and (2) sum of the degrees of its two endpoints is the smallest.
The first prioritization ensures orientation consistency as much as possible, whereas the second prioritization aims to extend the MST into less connected regions of $H$ for better alignment.
Turn Reduction requires adding only two priority keys when constructing the MST and thus maintains the same time complexity for ESTC as demonstrated in Theorem~\ref{theo:time_comp}.
Fig.~(\ref{fig:local_opt}c) shows an example of applying Turn Reduction to an ESTC coverage path.

\begin{figure}[t]
\centering
\includegraphics[width=\columnwidth]{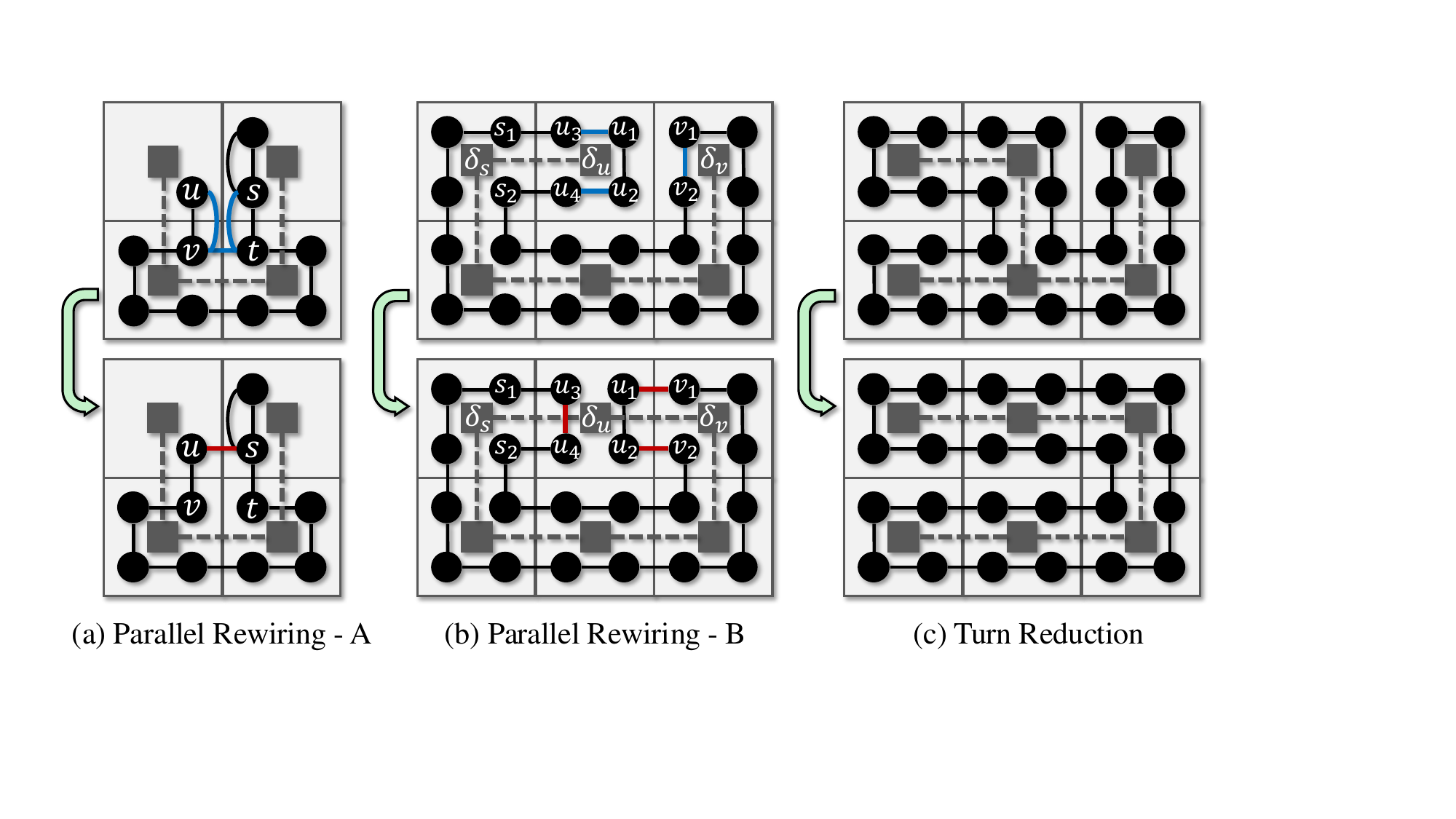}
\caption{ESTC coverage path before (top row) and after local optimizations (bottom row). (a)(b) The two types of parallel rewiring. (c) Turn reduction with horizontally aligned hyperedges.}
\label{fig:local_opt}
\end{figure}

\subsection{Integration into Existing MCPP Methods}\label{subsec:apply_estc}
Existing grid-based MCPP methods often focus on special cases of MCPP that assume complete hypervertices and vertex-weighted graphs. However, our more generally formulated MCPP, as defined in Problem~\ref{problem:_mcpp}, addresses a broader scope that remains relatively unexplored in the literature, despite its significant practical relevance. The proposed ESTC paradigm can be seamlessly integrated into these existing methods, allowing them to effectively solve MCPP even without the above constraints typically assumed.
To demonstrate how existing MCPP methods can be adapted using the ESTC paradigm to solve an MCPP instance on any grid graph $G$, even when some hypervertices are incomplete, we use the following examples categorized into multi-tree and single-tree methods.

\begin{figure*}[t]
\centering
\includegraphics[width=\linewidth]{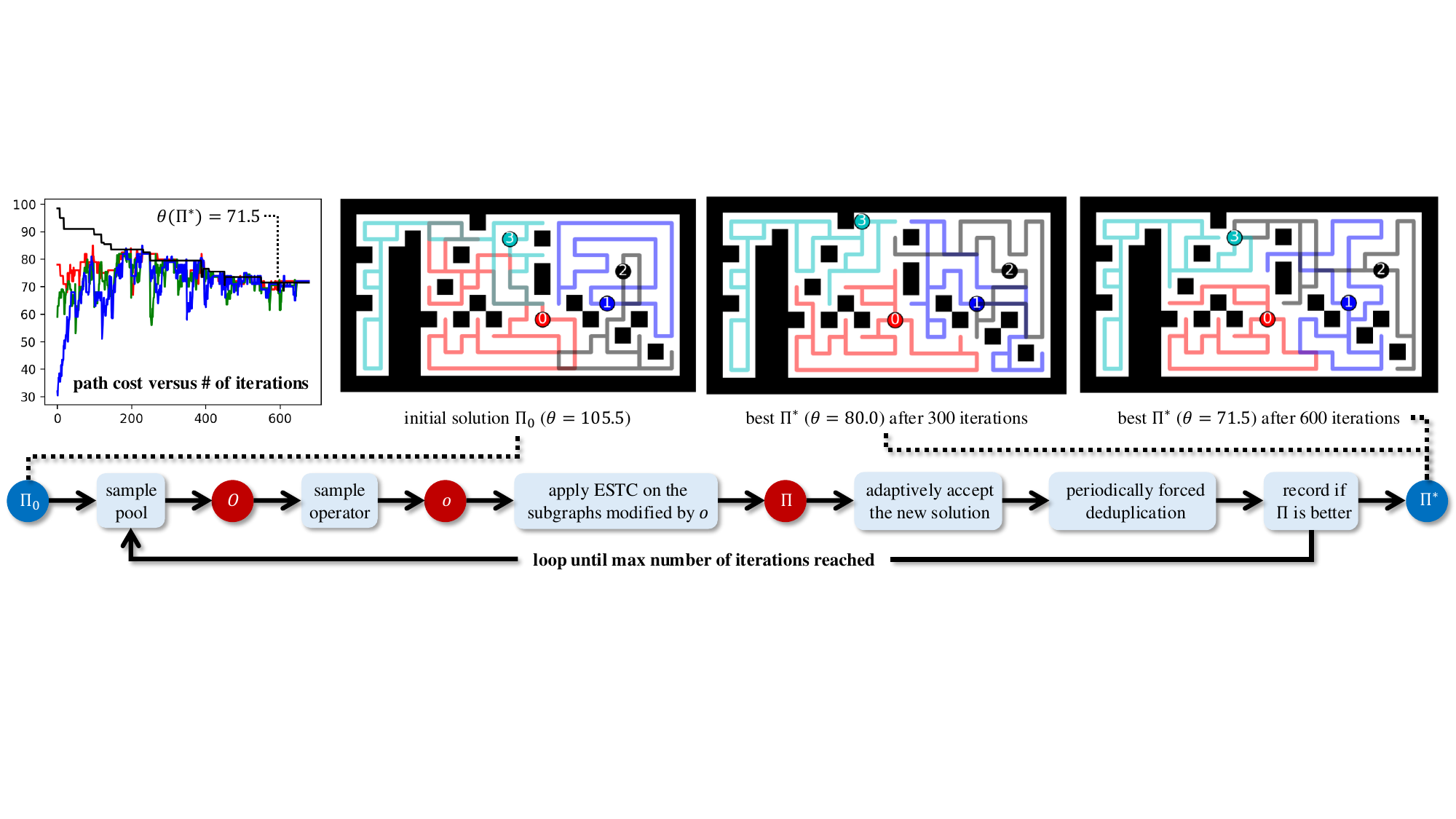}
\caption{The flowchart of our LS-MCPP algorithmic framework for instance \textit{floor-small}. 
Given an initial MCPP solution $\Pi_0$, LS-MCPP iteratively samples a pool $O$ from three pools of dedicated operators and then samples an operator $o$ from $O$, to transform a set $\mathcal{G}$ of subgraphs from one to another. 
It then applies ESTC on the subgraphs to evaluate the new MCPP solution $\Pi$, which is adaptively accepted according to its makespan $\theta$. 
A periodic subroutine of forced deduplication is called at the end of each iteration, and the best solution $\Pi^*$ is recorded if $\Pi$ is better with a lower makespan.}
\label{fig:flowchart}
\end{figure*}

\subsubsection{\textbf{Multi-Tree Methods}}
For MCPP instance $(G,I,R)$, we first generate a set $\{G_i=(V_i,E_i)\}_{i=1}^k$ of $k$ connected subgraphs of $G$. Each $G_i$ contains the root vertex $r_i$. We then solve each sub-CPP instance for robot $i$ on $G_i$ as described in Alg.~\ref{alg:estc} to generate its coverage path.
The following theorem shows a sufficient condition for complete coverage.
\begin{theorem}
If each $G_i$ is connected and $\bigcup_{i=1}^k V_i=V$, then using ESTC is guaranteed to achieve complete coverage of $G$.
\end{theorem}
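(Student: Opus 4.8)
The plan is to prove the theorem by directly reducing the claim to the single-robot coverage guarantee already established in Theorem~\ref{theo:complete}. The key observation is that each robot $i$ runs ESTC independently on its assigned subgraph $G_i$, so the coverage behavior on each $G_i$ is entirely governed by the single-robot analysis. First I would invoke the hypothesis that every $G_i$ is connected: this is precisely the structural precondition required by Theorem~\ref{theo:complete}, which guarantees that ESTC applied to a connected grid graph returns a path covering all of its vertices. Applying this theorem to each $G_i$ separately yields $V(\pi_i) = V_i$ for every robot $i \in I$.

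The second step is to aggregate these individual coverage guarantees. Taking the union over all robots, I would write
\begin{align*}
\bigcup_{i=1}^k V(\pi_i) = \bigcup_{i=1}^k V_i = V,
\end{align*}
where the first equality follows from the per-subgraph coverage just established and the second equality is exactly the second hypothesis $\bigcup_{i=1}^k V_i = V$. This chain shows that the union of the vertices visited by all robots equals $V$, which is precisely the definition of complete coverage for an MCPP solution in Problem~\ref{problem:_mcpp}. Hence the set $\Pi = \{\pi_i\}_{i=1}^k$ satisfies the MCPP coverage requirement.

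There is no single hard obstacle here; the result is essentially a bookkeeping argument that composes the single-robot guarantee across robots. The only subtlety worth flagging explicitly is that the two hypotheses play complementary roles and both are genuinely needed: connectivity of each $G_i$ is what licenses the use of Theorem~\ref{theo:complete} on that subgraph (ESTC would not guarantee coverage on a disconnected input, since the spanning-tree construction presupposes a connected hypergraph), while the covering condition $\bigcup_i V_i = V$ is what ensures no vertex of $G$ is left unassigned to any robot. I would make sure to state that each $\pi_i$ is indeed a valid path with $V(\pi_i) = V_i$ before taking the union, so that the final equality is unambiguous. Since Theorem~\ref{theo:complete} is assumed as a prior result, the proof reduces to citing it $k$ times and combining the conclusions, requiring no further machinery.
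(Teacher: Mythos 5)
Your proposal is correct and follows essentially the same route as the paper's own proof: apply Theorem~\ref{theo:complete} to each connected subgraph $G_i$ to get $V(\pi_i)=V_i$, then take the union over $i$ and use the hypothesis $\bigcup_{i=1}^k V_i=V$. Your version merely spells out the bookkeeping more explicitly than the paper does.
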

\begin{proof}
Given that $G_i$ is connected, ESTC is guaranteed to generate a path that covers all its vertices due to Theorem~\ref{theo:complete}. If $\bigcup_{i=1}^k V_i=V$, then the coverage paths generated for all $G_i$ jointly cover all vertices of $G$.
\end{proof}
The design of our LS-MCPP framework is based on the principle expounded in the above theorem. It aims to search for a good set of subgraphs $G_i$ for the robots, each being connected and containing the root vertex $r_i$, with the property that the union of all their vertices comprises the entire vertex set of $G$ for complete coverage.
It now suffices to outline the details below for adapting existing methods to compute the $k$ subgraphs, with comprehensive numerical results in Sec.~\ref{sec:res}.

\noindent\textbf{VOR:}
VOR is a baseline method introduced in~\cite{tang2024large} that is inherently compatible with ESTC.
It partitions $G$ into $k$ subgraphs $\{G_i\}_{i=1}^k$ as a Voronoi diagram~\cite{aurenhammer2000voronoi}, using the set $R$ of root vertices as the pivot points and shortest path distances on $G$ as the distance metric.

\noindent\textbf{MFC:}
Multirobot Forest Coverage (MFC)~\cite{zheng2007robot,zheng2010multirobot} employs the Rooted-Tree-Cover procedure~\cite{even2004min} to heuristically compute a \textit{rooted tree cover} on the quadrant coarsened grid $\mathcal{H}$---a forest of $k$ trees $\{T_i\}_{i=1}^k$, each rooted at the hypervertex containing the root vertex $r_i$, which jointly cover all hypervertices of $\mathcal{H}$. MFC then generates a coverage path for each robot $i$ by applying STC on its tree $T_i$. To integrate MFC with ESTC, we apply the Rooted-Tree-Cover procedure on the hypergraph $H$ (constructed as per Sec.~\ref{subsec:estc}) instead of $\mathcal{H}$ to generate a rooted tree cover on $H$ that induces the set of $k$ subgraphs $\{G_i\}_{i=1}^k$ of $G$.

\noindent\textbf{MIP:}
MIP~\cite{tang2023mixed} solves a mixed-integer program to compute an optimal rooted tree cover on the quadrant coarsened grid $\mathcal{H}$ that minimizes the maximum tree weight. To integrate MIP with ESTC, we adapt the mixed-integer program to compute an optimal rooted tree cover on the hypergraph $H$ (constructed as per Sec.~\ref{subsec:estc}) instead of $\mathcal{H}$, which induces the set of $k$ subgraphs $\{G_i\}_{i=1}^k$ of $G$.

\subsubsection{\textbf{Single-Tree Methods}}
Single-tree methods, unlike multi-tree methods, apply the STC paradigm to only a single tree: STC first computes an MST on the quadrant coarsened grid $\mathcal{H}$ and subsequently generates a coverage path that circumnavigates the MST. The path is then partitioned into $k$ segments, one for each robot. MSTC~\cite{hazon2005redundancy} partitions the path such that each segment starts at a unique root vertex.
Its improvement MSTC$^*$~\cite{tang2021mstc} greedily identifies balanced partitioning points along the path to potentially reduce the makespan and then connects each root vertex to its respective segment via shortest paths.
We can adapt MSTC$^*$ to solve MCPP for scenarios with incomplete hypervertices by simply replacing STC with ESTC.

\section{The LS-MCPP Framework}\label{sec:mcpp}
In this section, we introduce LS-MCPP, our local search algorithmic framework designed to effectively address MCPP as formulated in Problem~\ref{problem:_mcpp}. This framework integrates three types of neighborhood operators and utilizes ESTC as a subroutine, enhancing its ability to solve MCPP even in complex environments with incomplete hypervertices. LS-MCPP aims to optimize coverage paths by iteratively refining subgraphs and their corresponding coverage paths, ensuring both efficiency and completeness in coverage solutions.

\subsection{Overview}
As shown in Fig.~(\ref{fig:flowchart}), LS-MCPP begins with an initial solution for the MCPP instance $(G, I, R)$ with its corresponding set $\mathcal{G}_0=\{G_i=(V_i,E_i)\}_{i=1}^{k}$ of $k$ connected subgraphs induced by the solution paths.
LS-MCPP operates as an iterative procedure that employs a two-layer hierarchical sampling scheme, progressively adjusting the subgraph set $\mathcal{G}$ to efficiently explore the constructed neighborhood: The first layer selects an operator pool using \textit{roulette wheel} selection~\cite{goldberg1989genetic} from three operator pools, each containing a different type of operators; The second layer heuristically selects an operator from the chosen pool, as detailed in Sec.~\ref{subsec:operator_sampling}.

We categorize $G_i$ as a \textit{light subgraph} if its coverage path cost $c(\pi_i)$ from ESTC is no larger than the average coverage path cost $\bar{c}$ over all subgraphs; otherwise, it is a \textit{heavy subgraph}.  We define the \textit{duplication set} $V^+=\{v\in V\,|\,n_v>1 \}$ as the set of vertices included in more than one subgraph, where $n_v=\sum_{i=1}^k\left|\{x\in V_{i}|x=v\}\right|$ counts the occurrences of vertex $v\in V$ in all subgraphs.

LS-MCPP employs \textit{grow operators} on light subgraphs to assign them new vertices to cover, \textit{deduplicate operators} on heavy subgraphs to eliminate unnecessary duplication, and \textit{exchange operators} to balance path costs between subgraphs with a large cost difference.
The operators are detailed in Sec.~\ref{subsec:operators}, designed to attain a cost-equilibrium MCPP solution with a low makespan.
LS-MCPP then employs ESTC on each subgraph $G_i$ to generate its coverage path and evaluate the makespan. 
LS-MCPP also periodically calls a deduplication function to exploit the current subgraphs to achieve a low-makespan solution.

\subsection{Algorithm Description}\label{subsec:lsmcpp-overview}

\begin{algorithm}[tb]
\DontPrintSemicolon
\linespread{0.95}\selectfont
\caption{LS-MCPP}\label{alg:ls-mcpp}
\SetKwInput{KwInput}{Input}
\SetKwInput{KwParam}{Param}
\KwInput{MCPP instance $(G, I, R)$, initial solution $\Pi_0$}
\KwParam{max iteration $M\in\mathbb{Z}^+$, forced deduplication step $S\in\mathbb{Z}^+$, temperature decay factor $\alpha\in[0,1]$, pool weight decay factor $\gamma\in[0, 1]$}
$\mathcal{G}=\{G_i\}_{i=1}^k\gets$ set of subgraphs extracted from $\Pi_0$\;\label{alg:ls:init_G_0}
$\Pi\gets\Pi_0$, $\Pi^*\gets\Pi_0$ \Comment{current and optimal solutions}\;
$\mathcal{O}\gets\{\emptyset, \emptyset, \emptyset\}$\Comment{set of pools of operators}\;\label{alg:ls:create_O}
\textit{Update-Pools}\,($\mathcal{G}$, $\mathcal{O}$, $\Pi$)\Comment{initialize the pools}\;\label{alg:ls:init_O}
$t\gets 1$ \Comment{temperature in \textit{simulated annealing}}\;\label{alg:ls:init_t}
$\mathbf{p}\gets [1, 1, 1]$ \scalebox{0.96}{\Comment{pool weights for \textit{roulette wheel} selection}}\;\label{alg:ls:init_p}
\For{$i\gets 1, 2,\ldots , M$}
{\label{alg:ls:main_iter}
    $O\sim\mathcal{O}$ \Comment{sample a pool by the probability $\sigma(\mathbf{p})$}\;\label{alg:ls:rw_sampling}
    update the weight $\mathbf{p}[O]$ of the selected pool $O$ to $(1-\gamma)\cdot\mathbf{p}[O]+\gamma\cdot\max (-\Delta\theta, 0)$\;\label{alg:ls:update_pool_weight}
    $\mathbf{h}\gets$ stacked heuristic values for all operators in $O$ \;\label{alg:ls:heur_vec}
    $o\sim O$ \Comment{sample an operator by probability $\sigma(\mathbf{h})$}\;\label{alg:ls:op_sampling}
    $\mathcal{G}'\gets$ apply $o$ to its relating subgraph(s) of $\mathcal{G}$\;\label{alg:ls:calc_G_prime}
    $\Pi'\gets$ apply ESTC on updated subgraph(s) in $\mathcal{G}'$\;\label{alg:ls:ESTC}
    \eIf{$\Delta\theta\gets \theta(\Pi') - \theta(\Pi)<0$}{\label{alg:ls:ada_acc_st}
        update $\mathcal{G}$ to $\mathcal{G}'$ and $\Pi$ to $\Pi'$\;
    }
    {
        update $\mathcal{G}$ to $\mathcal{G}'$ and $\Pi$ to $\Pi'$ with a probability of $\exp{(\frac{-\Delta\theta}{t})}$\;\label{alg:ls:ada_acc_ed}
    }
    call \textit{Update-Pools}\,($\mathcal{G}$, $\mathcal{O}$, $\Pi$, $o$) if $\mathcal{G}$ is updated\;\label{alg:ls:update_pools}
    \If{$i\,\%\, S = 0 \;\,\text{or}\;\, \Delta\theta<0$}
    {
        call \textit{Forced-Deduplication}\,($\mathcal{G}$, $\mathcal{O}$, $\Pi$) \;\label{alg:ls:force_dedup}
    }
    update $\Pi^*$ to $\Pi$ if $\theta(\Pi) < \theta(\Pi^*)$ \;\label{alg:ls:record_opt}
    update the temperature $t$ to $\alpha\cdot t$\;\label{alg:ls:update_temp}
}
\Return improved MCPP solution $\Pi^*$ \;\label{alg:ls:return}

\end{algorithm}

\noindent\textbf{Pseudocode (Alg.~\ref{alg:ls-mcpp}):}
LS-MCPP begins by computing the corresponding set $\mathcal{G}_0$ of subgraphs on the input initial MCPP solution $\Pi_0$ [Line~\ref{alg:ls:init_G_0}].
It then initializes a set of three operator pools via a subroutine \textit{Update-Pools} described in Sec.~\ref{subsec:operators}, containing only grow operators, deduplicate operators, and exchange operators, respectively [Lines~\ref{alg:ls:create_O}-\ref{alg:ls:init_O}].
It also initializes the temperature scalar $t$ and the pool weight vector $\mathbf{p}$ [Lines~\ref{alg:ls:init_t}-\ref{alg:ls:init_p}].
The scalar $t$ is scaled down by the decay factor $\alpha\in[0,1]$ for every iteration [Line~\ref{alg:ls:update_temp}] to dynamically adjust the probability for LS-MCPP to become increasingly less likely to accept a non-improving solution over iterations, which is the \textit{simulated annealing} strategy~\cite{van1987simulated} to skip local minima. The vector $\mathbf{p}$ represents the pool weight for each of the three aforementioned pools.
During each iteration [Line~\ref{alg:ls:main_iter}], vector $\mathbf{p}$ is used for roulette wheel selection~\cite{goldberg1989genetic} to select a pool $O$ from $\mathcal{O}$ [Line~\ref{alg:ls:rw_sampling}], where $\sigma(\mathbf{p})$ is the \textit{softmax} function of $\mathbf{p}$  determining the probability of selecting each pool. 
Once the pool $O$ is selected, its corresponding weight $\mathbf{p}[O]$ is updated with a weight decay factor $\gamma\in[0,1]$ [Line~\ref{alg:ls:update_pool_weight}].
Similar to the pool selection, LS-MCPP samples an operator $o$ from the selected pool $O$ with $\sigma(\mathbf{h})$ as the probability function [Line~\ref{alg:ls:op_sampling}]. 
Here, $\mathbf{h}$ is the vector of heuristic function $h$ evaluated on all operators in $O$ [Line~\ref{alg:ls:heur_vec}], detailed in Sec.~\ref{subsec:operator_sampling}. 
Following this, LS-MCPP applies operator $o$ to its relating subgraph(s) to obtain a new set $\mathcal{G}'$ of subgraphs and a new solution $\Pi'$ [Lines~\ref{alg:ls:calc_G_prime}-\ref{alg:ls:ESTC}] and calculates the makespan increment $\Delta\theta$ to determine whether to accept the new solution $\Pi'$.
LS-MCPP accepts the new solution $\Pi'$ if it has a smaller makespan, and, otherwise, accepts it with a probability of $\exp{(-\Delta\theta/t)}\in[0,1]$ [Lines~\ref{alg:ls:ada_acc_st}-\ref{alg:ls:ada_acc_ed}]. 
Upon the acceptance of $\Pi'$, LS-MCPP updates the set $\mathcal{O}$ of operator pools via \textit{Update-Pools} (Alg.~\ref{alg:up}) [Line~\ref{alg:ls:update_pools}].
LS-MCPP also calls a subroutine \textit{Forced-Deduplication} (Alg.~\ref{alg:fd}) every $S$ iteration or the makespan has been reduced in the current iteration [Line~\ref{alg:ls:force_dedup}].
The iterative search procedure terminates when it reaches the maximum number of iterations $M$, and LS-MCPP returns the best solution $\Pi^*$ found [Line~\ref{alg:ls:return}].

\begin{algorithm}[tb]
\DontPrintSemicolon
\linespread{0.95}\selectfont
\caption{Forced-Deduplication}\label{alg:fd}
\SetKwInput{KwInput}{Input}
\KwInput{set $\mathcal{G}=\{G_i\}_{i=1}^k$ of subgraphs,\quad\quad\quad\quad\quad\quad 
set $\mathcal{O}=\{O_g, O_d, O_e\}$ of pools,\quad\quad\quad\quad\quad\quad 
set $\Pi=\{\pi_i\}_{i=1}^k$ of coverage paths}

\For{$i\in I$ sorted by descending $c(\pi_i)$}
{\label{func:forcededup:main_loop_1}
    \While{$(u, v)\gets$ any U-turn in $\pi_i$}
        {\label{func:forcededup:find_U_turn}
            remove $u, v$ from $G_i$ and $\pi_i$ \;\label{func:forcededup:update}
        }
}

\For{$i\in I$ sorted by descending $c(\pi_i)$}
{\label{func:forcededup:main_loop_2}
    \For{$o\in$ $O_d$ sorted by descending $h(o)$\label{func:forcededup:order}}
    {
        apply $o$ on $G_i$ and update $\pi_i$ via ESTC\;\label{func:forcededup:update2}
    }
}
call \textit{Update-Pools}\,($\mathcal{G}$, $\mathcal{O}$, $\Pi$)\;\label{alg:fd:update_pools}
\end{algorithm}

\noindent\textbf{Forced Deduplication:}
A critical aspect contributing to the success of LS-MCPP is restricting duplication to only those necessary over the neighborhood exploration, such as those in a narrow path connecting two separate regions of graph $G$. 
For this purpose, LS-MCPP uses the \textit{Forced-Deduplication} subroutine as described in Alg.~\ref{alg:fd} to deduplicate all possible duplication in two folds.
The first part iterates through each path $\pi_i\in\Pi$ in a cost-decreasing order to remove any U-turns [Lines~\ref{func:forcededup:main_loop_1}-\ref{func:forcededup:update}].
A U-turn of $\pi_i$ is defined as an edge traversal $(u\rightarrow v)$ in a ``U-turn'' pattern segment $p\rightarrow u\rightarrow v\rightarrow q$ in path $\pi_i$.
If $u, v\in V_{i}\cap V^+$, then we can safely remove $(u\rightarrow v)$ to reduce $c(\pi)$ without losing any vertex coverage or introducing any new duplication.
Fig.~(\ref{fig:U_turn}) illustrates the procedure to remove U-turns for a path segment progressively. A special case of U-turn often occurs for incomplete hypervertices where the ``U-turn'' pattern becomes $p=v\rightarrow u\rightarrow v\rightarrow q$.
The second part [Lines~\ref{func:forcededup:main_loop_2}-\ref{func:forcededup:update2}] iteratively process each graph $G_i$ in descending order of path cost $\pi_i$ [Line~\ref{func:forcededup:main_loop_2}], applying each deduplicate operator $o$ from $O_d$ in descending order of its heuristic value $h(o)$ [Line~\ref{func:forcededup:order}].
After the above two parts, the function re-initializes all the pools in $\mathcal{O}$ by calling \textit{Update-Pools} (Alg.~\ref{alg:up}) [Line~\ref{alg:fd:update_pools}].
It is worth noting that the first part can remove edges that do not constitute deduplicate operators, and some deduplicate operators can remove edges that are not U-turns depending on the MST generated by ESTC. 
This complementary relation between the two parts contributes to the improved performance of LS-MCPP.

\begin{figure}[t]
\centering
\includegraphics[width=0.8\columnwidth]{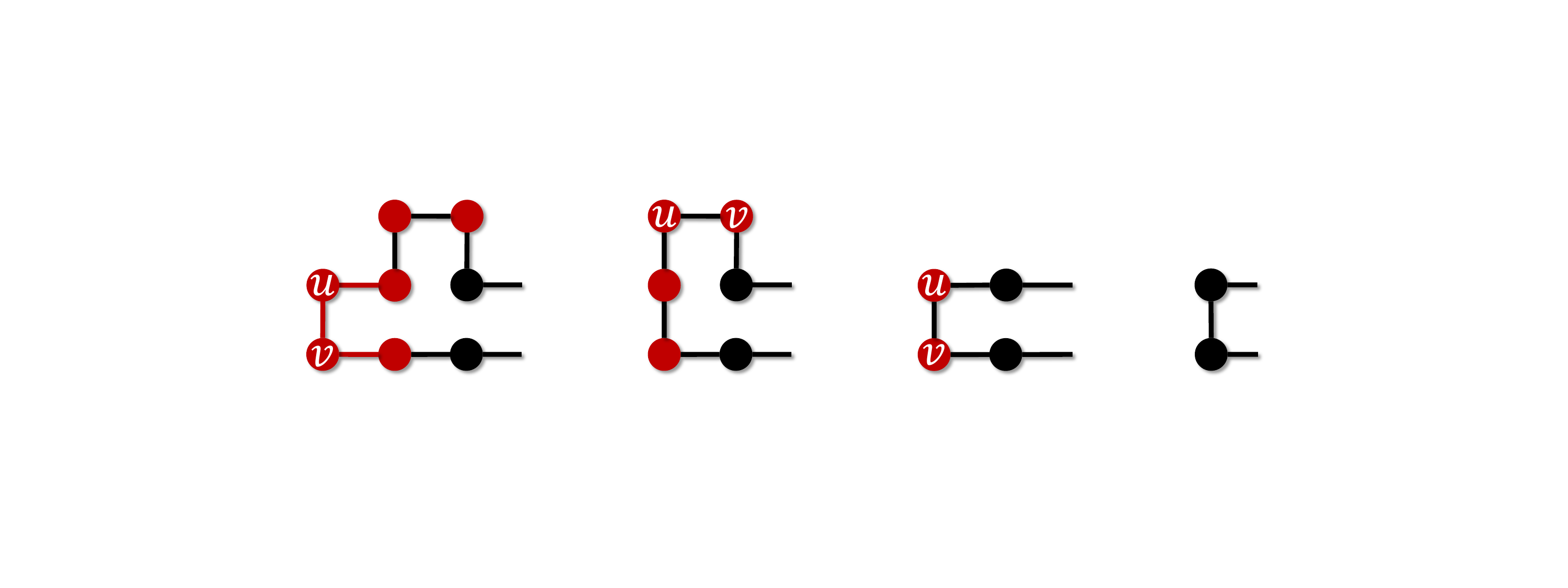}
\caption{Subroutine \textit{Forced-Deduplication} executed on a segment of coverage path $\pi_i$, where red and black circles represent vertices from $V_{i}\cap V^+$ and $V_i\setminus V^+$, respectively. Frames from left to right: a U-turn $(u, v)\in E(\pi_i)$ is found, and $u, v$ are removed from $\pi_i$ until no U-turn exists.}
\label{fig:U_turn}
\end{figure}

\noindent\textbf{Initial Solution:}
For the choice of the initial solution $\Pi_0$ in LS-MCPP, we limit ourselves to existing multi-tree MCPP methods that can generate high-quality solutions efficiently. 
Specifically, we choose our initial solution from VOR or MFC, depending on which provides the better solution quality.
Both methods are integrated with the proposed ESTC paradigm, as described in Sec.~\ref{subsec:apply_estc}.

\begin{remark}
Our LS-MCPP framework is inherently better suited to utilizing a multi-tree method for generating an initial MCPP solution, which aligns with its design to iteratively improve a set of $k$ subgraphs using a local search strategy and apply ESTC to obtain coverage paths for evaluating the solution cost in each iteration. While a single-tree method can also provide initial subgraphs for LS-MCPP, induced by its solution coverage paths, our empirical finding~\cite{tang2024large} indicates that this approach typically results in significant overlaps among subgraphs and tends to reduce the effectiveness of the neighborhood operators within our LS-MCPP framework.
\end{remark}

\subsection{Pools of Operators}\label{subsec:operators}

We introduce three types of operators: grow operators, deduplicate operators, and exchange operators.
These operators are designed to modify the boundaries of each connected subgraph $G_i=(V_i, E_i)\in\mathcal{G}$ of the input graph $G=(V,E)$.
An operator alters the set $\mathcal{G}$ while ensuring that its property of complete coverage (i.e., $\bigcup_{i=1}^k V_{i}=V$) and subgraph connectivity remains invariant. 
Each operator aims to introduce only essential duplication, with the potential to enhance solution quality.

\begin{figure}[t]
\centering
\includegraphics[width=\columnwidth]{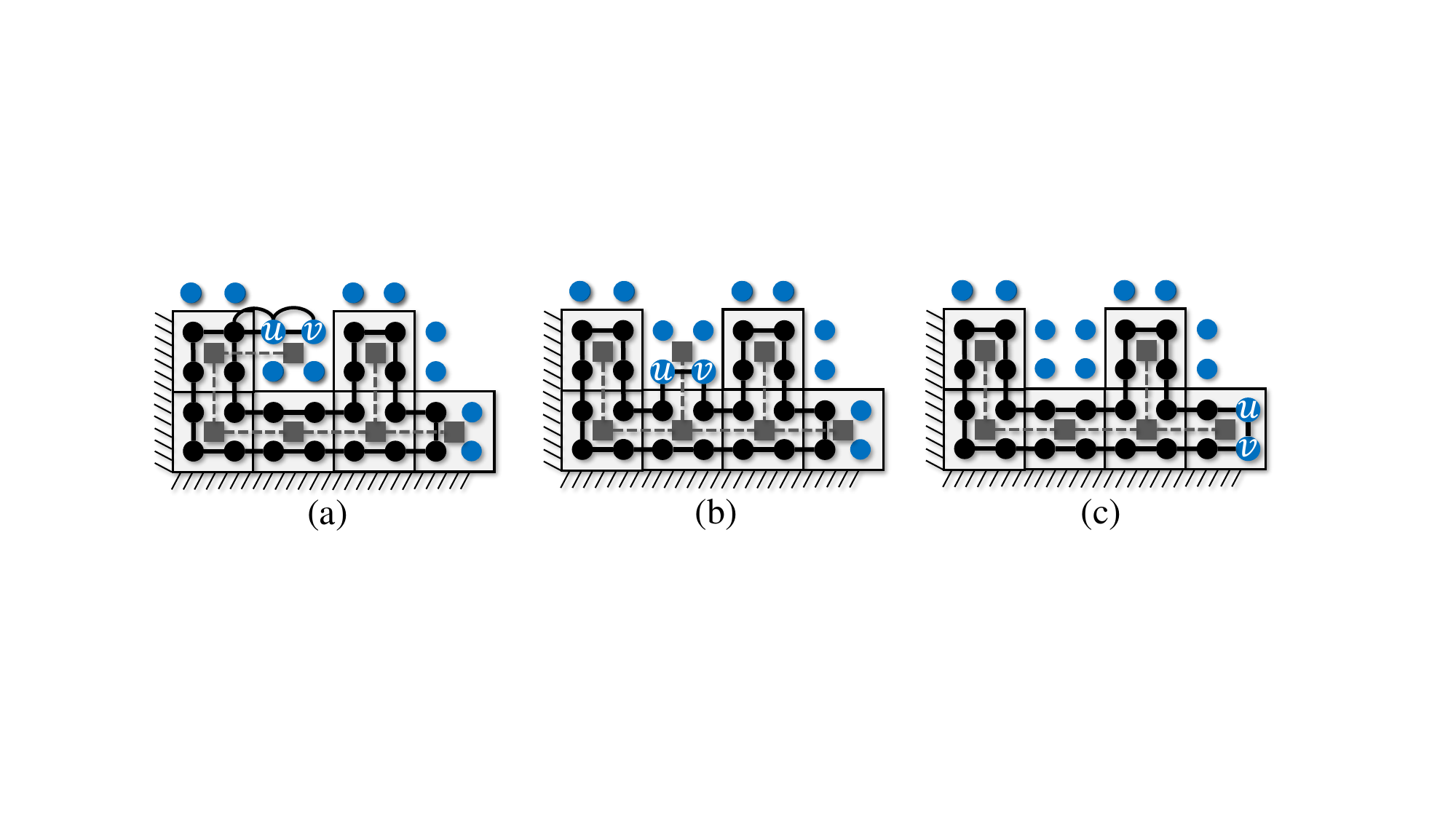}
\caption{ESTC path after applying an edge-wise $o_g^{\text{(e)}}(i,\{u,v\})$ on subgraph $G_i=(V_i,E_i)$. Blue circles and black circles represent vertices of the boundary vertex set $B_i$ and vertices of $V_{i}$, respectively. (a) An invalid operator without any parallel edge in $G_i$. (b)(c) Two valid edge-wise grow operators.}
\label{fig:grow_op}
\end{figure}

Specifically, we consider both \textit{edge-wise} and \textit{vertex-wise} operations for these operators.
An edge-wise operator alters a single edge at a time, whereas a vertex-wise operator alters a single vertex at a time.
Altering a crossing edge $(u,v)\in E$ with $\delta_u\neq\delta_v$ often introduces unnecessary vertex duplications.
Therefore, we restrict the edge-wise operators to only altering intra-hypervertex edges $E^\text{intra}=\{(u,v)\in E\,|\,\delta_u=\delta_v\}$.
In practice, edge-wise operators generally perform well but sometimes cannot be constructed in configurations with many incomplete hypervertices, which may lead to early termination of the local search (Alg.~\ref{alg:ls-mcpp}).
Vertex-wise operators are less restricted than edge-wise operators and tend to introduce unnecessary vertex duplications if applied to complete hypervertices.
Therefore, vertex-wise operators serve only as complementary tools when no edge-wise operators are available, as reflected in the \textit{Update-Pools} function (Alg.~\ref{alg:up}), which initializes or updates the operator pools.\footnote{Our preliminary conference version~\cite{tang2024large} uses only edge-wise operators, which may lead to early termination issues. See Sec.~\ref{subsec:ablation} for a detailed ablation study on the impact of vertex-wise operators.}

\noindent\textbf{Grow Operator:}
A grow operator serves to expand a subgraph to cover additional vertices already covered by other subgraphs.
Let $B_i$ denote the set of ``boundary'' vertices that are not part of $G_i$ but adjacent to a vertex of $G_i$, defined as $B_i=\{v\in V\setminus V_{i}\,|\,\exists(u,v)\in E, u\in V_i\}$.
A vertex-wise grow operator $o_g^{\text{(v)}}(i,\{v\})$ adds vertex $v\in B_i$ and all its adjacent edges to subgraph $G_i$.
An edge-wise grow operator $o_g^{\text{(e)}}(i,\{u,v\})$ adds edge $e=(u,v)\in E_i\cap E^\text{intra}$ with $u,v\in B_i$ and all its adjacent edges into $G_i$, where $\exists (p, q)\in E_{i}$ such that $(u,p), (v,q)\in E_{i}$. 
In essence, a valid $o_g(i,e)$ can only add an edge $(u, v)$ if there exists a parallel edge $(p, q)$ in $G_i$, as shown in the two examples in Fig.~(\ref{fig:grow_op}b) and (\ref{fig:grow_op}c).
This design choice avoids introducing undesirable routing to the coverage path, as shown in the example in Fig.~(\ref{fig:grow_op}a).
After its execution, a grow operator $o_g^{\text{(v)}}(i,\{v\})$ (or $o_g^{\text{(e)}}(i,\{u,v\})$) effectively adds vertex $v$ (or vertices $u$ and $v$) to $V^+$.

\noindent\textbf{Deduplicate Operator:}
A deduplicate operator serves to remove unnecessary duplication from a subgraph.
A vertex-wise deduplicate operator $o_d^{\text{(v)}}(i,\{v\})$ removes vertex $v\in V_i\cap V^+$ and all its adjacent edges from $G_i$ such that $G_i$ remains connected.
For an edge $e=(u, v)\in E_i\cap E^\text{intra}$, let $\delta^{\,t}_{e}$, $\delta^{\,b}_{e}$, $\delta^{\,l}_{e}$, and $\delta^{\,r}_{e}$ denote the four neighbors of $\delta_u$ (also equal to $\delta_v$) based on the positioning of $u$ and $v$.
Fig.~(\ref{fig:dedup_op}a) demonstrates one example that can be rotated to various symmetric cases (with only $\delta^{\,t}_{e}$ potentially containing both a vertex adjacent to $u$ and one adjacent to $v$).
An edge-wise deduplicate operator $o_d^{\text{(e)}}(i,\{u,v\})$ removes edge $(u,v)\in E_i\cap E^\text{intra}$ with $u,v\in V_i\cap V^+$ and all its adjacent edges from $G_i$ such that $G_i$ remains connected. If $\delta_u$ contains only $u$ and $v$, then this removal is always valid and reduces the cost of the corresponding ESTC path of $G_i$, as shown in Fig.~(\ref{fig:dedup_op}c). Otherwise, this removal is constrained by three conditions: (1) any vertex contracted to $\delta_{e}^t$ is not in $V_{i}$; (2) all the four vertices contracted to $\delta^{b}_{e}$ are in $V_{i}$; and (3) if any $v\in V_i$ is contracted to $\delta_{e}^l$ (or $\delta_{e}^r$), then both $\delta_{e}^l$ (or $\delta_{e}^r$) and its (only) common neighboring hypervertex with $\delta_{e}^b$ must be complete, and all their vertices must also be in $V_{i}$. Such validity filtering on the edge-wise deduplicate operators is essential to avoid introducing unnecessary duplications to the corresponding ESTC path of $G_i$, as shown in Fig.~(\ref{fig:dedup_op}).
After its execution, a duplicate operator $o_d^{\text{(v)}}(i,\{v\})$ (or $o_d^{\text{(e)}}(i,\{u,v\})$) effectively removes vertex $v$ (or vertices $u$ and $v$) from $V^+$ if the vertex (or vertices) is not included in multiple subgraphs.

\noindent\textbf{Exchange Operator:}
An exchange operator combines a grow operator and a deduplicate operator so that the deduplicate operator immediately removes the new duplication introduced by the grow operator, potentially improving the effectiveness of a single operator and speeding up the convergence of the local search.
A vertex-wise exchange operator $o_e^{\text{(v)}}(i,j,\{v\})$ combines $o_g^{\text{(v)}}(i,\{v\})$ and $o_d^{\text{(v)}}(j,\{v\})$, where $v\in B_i\cap V_j$.
An edge-wise exchange operator $o_e^{\text{(e)}}(i,j,\{u,v\})$ combines $o_g^{\text{(e)}}(i,\{u,v\})$ and $o_d^{\text{(e)}}(j,\{u,v\})$. Note that the operators $o_g$ and $o_d$ being combined must be both valid except that vertex $v$ for a vertex-wise $o_d^{\text{(v)}}(j,\{v\})$ and vertices $u$ and $v$ for an edge-wise $o_d^{\text{(e)}}(j,\{u,v\})$ are not required to be in $V^+$ since they are not necessarily covered by subgraphs other than $G_j$ before the grow operator is applied.
After the execution of an exchange operator, $V^+$ remains unchanged.

\begin{figure}[t]
\centering
\includegraphics[width=\columnwidth]{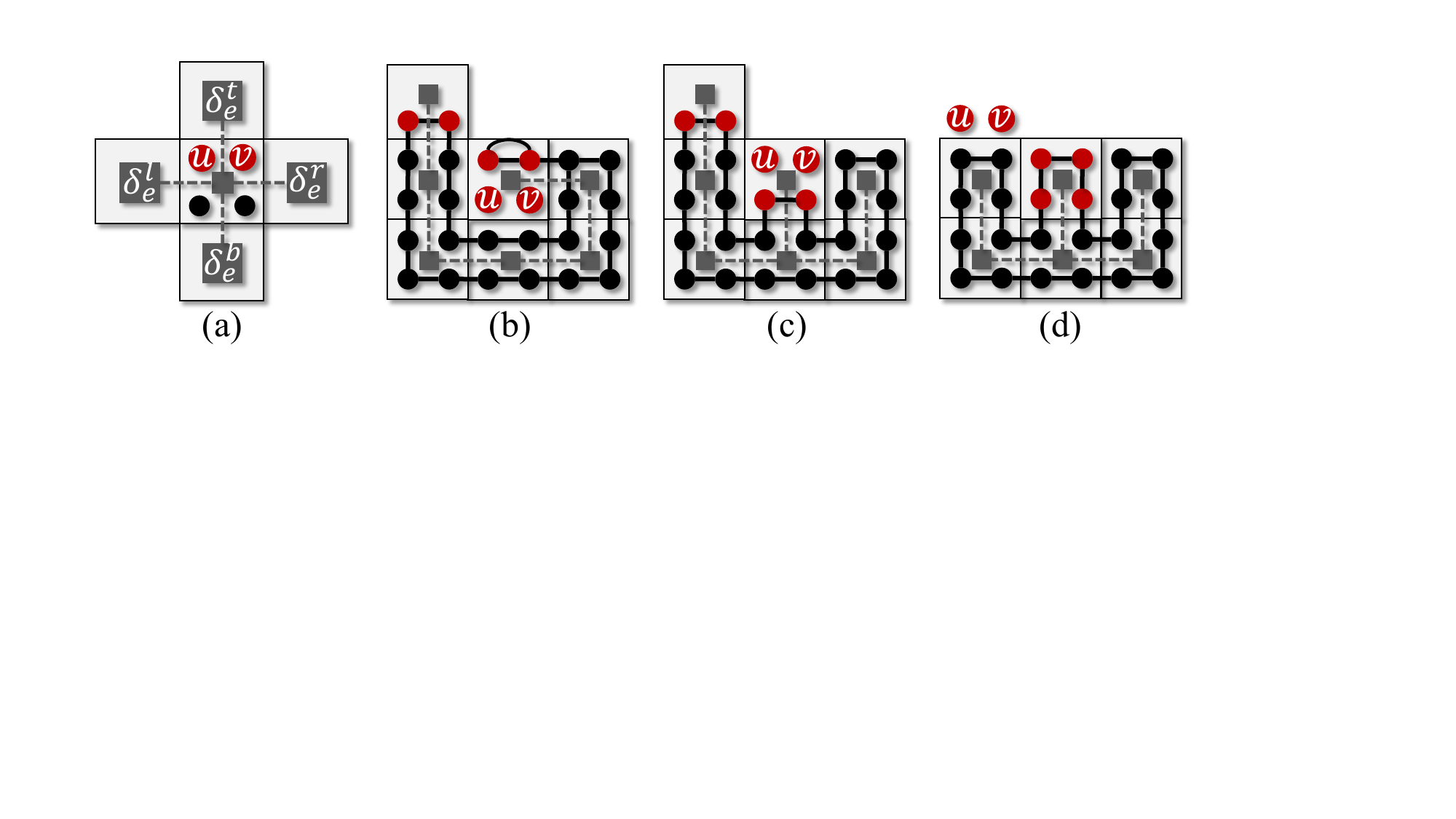}
\caption{ESTC path after applying an edge-wise $o_d^{\text{(e)}}(i,\{u,v\})$ on subgraph $G_i=(V_i,E_i)$. Red circles and black circles represent vertices in $V_i\cap V^+$ and not in $V_i\setminus V^+$, respectively. (a) Four neighbors of the hypervertex $\delta_u(=\delta_v$). (b) An invalid operator violating the third condition when $\delta_u$ is complete. (c)(d) Two valid edge-wise deduplicate operators.}
\label{fig:dedup_op}
\end{figure}


\noindent\textbf{Update Pools:}
The \textit{Update-Pools} function in Alg.~\ref{alg:up} initializes the set $\mathcal{O}$ of pools of operators [Line~\ref{alg:up:init_no_op}] or updates the set $\mathcal{O}$ whenever $\mathcal{G}$ is changed by some operator $o$ [Lines~\ref{alg:up:init_w_op_1}-\ref{alg:up:init_w_op_2}].
Given the set $\Delta V$ of dirty vertices (defined in line~\ref{alg:up:init_w_op_2} or line~\ref{alg:up:init_no_op}), \textit{Update-Pools} first removes all operators involving any $v\in\Delta V$ from each pool [Line~\ref{alg:up:rmv_all}].
For each $v\in\Delta V$, it then iterates each light subgraph to add a valid edge-wise grow operator to $O_g$ if one exists; otherwise, it adds a vertex-wise grow operator [Lines~\ref{alg:up:for_all_Og}-\ref{alg:up:add_vw_Og}].
Similarly, deduplicate operators are added to $O_d$ by iterating each heavy subgraph [Lines~\ref{alg:up:for_all_Od}-\ref{alg:up:add_vw_Od}].
Finally, the pool $O_e$ of exchange operators is updated by combining the newly added operators in $O_g$ and $O_d$ [Line~\ref{alg:up:init_Oe}].

\begin{algorithm}[t]
\DontPrintSemicolon
\linespread{0.95}\selectfont
\caption{Update-Pools}\label{alg:up}
\SetKwInput{KwInput}{Input}
\KwInput{set $\mathcal{G}=\{G_i\}_{i=1}^k$ of subgraphs,\quad\quad\quad\quad\quad\quad 
set $\mathcal{O}=\{O_g, O_d, O_e\}$ of pools,\quad\quad\quad\quad\quad\quad 
set $\Pi=\{\pi_i\}_{i=1}^k$ of coverage paths,\quad\quad\quad\quad\quad\quad 
[\textit{optional}] most recent applied operator $o$}
\eIf{$o$ is provided as an input}{
    $V_o\gets$ the set of involved vertices in $o$\;\label{alg:up:init_w_op_1}
    $\Delta V\gets V_o\cup\{u\in V\,|\, v\in V_o, (u,v)\in E\}$\;\label{alg:up:init_w_op_2}
}{
$\Delta V\gets V$\Comment{update for all vertices in $G=(V,E)$}\;\label{alg:up:init_no_op}
}
\scalebox{0.93}{remove operators from each $O\in\mathcal{O}$ involving any $v\in\Delta V$}\;\label{alg:up:rmv_all}
\For{$v\in\Delta V$}{
\For{$i\in \{j\in I\,|\, c(\pi_j)\leq\bar{c}\}$}{\label{alg:up:for_all_Og}
    \eIf{found a valid $o_g^{\text{(e)}}=(i,\{u,v\})$}{\label{alg:up:check_ew_Og}
        $O_g\gets O_g\cup\{o_g^{\text{(e)}}\}$\label{alg:up:add_ew_Og}
    }{
        $O_g\gets O_g\cup\{o_g^{\text{(v)}}=(i,\{v\})\}$\label{alg:up:add_vw_Og}
    }
}
\For{$i\in \{j\in I\,|\, c(\pi_j)>\bar{c}\}$}{\label{alg:up:for_all_Od}
    \eIf{found a valid $o_d^{\text{(e)}}=(i,\{u,v\})$}{\label{alg:up:check_ew_Od}
        $O_d\gets O_d\cup\{o_d^{\text{(e)}}\}$\label{alg:up:add_ew_Od}
    }{
        $O_d\gets O_d\cup\{o_d^{\text{(v)}}=(i,\{v\})\}$\label{alg:up:add_vw_Od}
    }
}
}
\scalebox{0.97}{update $O_e$ by iterating newly added operators in $O_g, O_d$}\;\label{alg:up:init_Oe}
\end{algorithm}

\noindent\textbf{Properties:} After any valid operator is executed, (1) every $G_i\in\mathcal{G}$ remains connected since only deduplicate operators can remove edges but a valid deduplicate operator guarantees that $G_i$ remains connected, and (2) the property of the union of the vertices of all subgraphs being the vertex set $V$ is not affected since no vertex coverage is lost.
Therefore, an LS-MCPP solution ensures complete coverage due to Theorem~\ref{theo:complete}.

\subsection{Heuristics for Operator Sampling}\label{subsec:operator_sampling}
To better explore the solution neighborhood, LS-MCPP should carefully determine which operator to sample from the selected pool (line~\ref{alg:ls:op_sampling} of Alg.~\ref{alg:ls-mcpp}).
We propose three heuristic functions tailored to the three types of operators to evaluate their potential in guiding the neighborhood search and improving the solution. 
An operator with a larger heuristic value results in a higher probability of being sampled.
The heuristic function for grow operators has two considerations: (1) prioritizing growing light subgraphs with smaller path costs and (2) prioritizing covering vertices with less duplication. 
Formally, the heuristic value $h(o_g)$ for a grow operator $o_g(i,V_{o_g})$ is defined as $h(o_g) = -k\cdot c(\pi_i) - \frac{1}{|V_{o_g}|}\cdot\sum_{v\in V_{o_g}}n_v$. Note that $k \geq n_v$ holds for any $v\in V$, thereby prioritizing consideration (1) over consideration (2).
The heuristic function for deduplicate operators $o_d(i,V_{o_d})$ is defined as the exact opposite of the one for grow operators: $h(o_d) = k\cdot c(\pi_i) + \frac{1}{|V_{o_d}|}\cdot\sum_{v\in V_{o_d}}n_v$.
The heuristic function for exchange operators $o_e(i,j,V_{o_e})$ is defined as $h(o_e) = c(\pi_j) - c(\pi_i)$ to prioritize pairs of subgraphs with larger differences in their path costs.

\section{Deconflicting MCPP Solutions}\label{sec:deconf}
In this section, we address inter-robot conflicts in coverage paths produced by LS-MCPP or other grid-based MCPP methods. Our MCPP formulation (Problem~\ref{problem:_mcpp}) does not inherently prevent collisions among robots. In an MCPP solution with a small makespan, robots typically cover different regions and are distributed across the grid, which lowers the likelihood of conflicts. However, certain regions remain prone to congestion and conflicts, particularly around clustered root vertices or narrow corridors that only allow one robot to pass at a time. 

To resolve these potential conflicts, we formulate the deconflicting task as a variant of MAPF, leveraging MAPF techniques to ensure each robot follows its designated coverage path while avoiding collisions with others. Specifically, we develop a two-level hierarchical planner, following standard MAPF practices, to effectively deconflict MCPP solutions: The high-level planner employs Priority-Based Search (PBS)~\cite{ma2019searching} to assign priorities among pairs of conflicting robots; The low-level planner is a novel algorithm that plans a continuous-time path for each robot that avoids conflicting with any higher-priority robots.
Our key extensions from standard MAPF lie in 1) our treatment of non-uniform edge weights, which translate to variable time costs for robot actions and 2) an adaptive approach of the low-level planner for time-efficient single-robot path planning with multiple goals.

\subsection{Formulation of Deconflicted MCPP}~\label{subsec:deconf-pre}
Given an input solution $\Pi=\{\pi_i\}_{i=1}^k$ for the MCPP instance $(G,I,R)$, we define the \textit{Deconflicted MCPP} problem, which aims to construct a set $\mathcal{T}=\{\tau_i\}_{i=1}^k$ of $k$ conflict-free trajectories.
A \textit{trajectory} $\tau_i$ for robot $i$ is an ordered sequence $(\mathbf{x}_1, \ldots, \mathbf{x}_{|\tau_i|})$ of time-embedded \textit{states} $\mathbf{x}=(v, t)$ indicating that the robot stays at vertex $v\in V$ at time $t\geq 0$. 
Given path $\pi_i=(v_1,\ldots,v_{|\pi_i|})$ where $v_1=v_{|\pi_i|}=r_i\in R$, trajectory $\tau_i$ must start from state $\mathbf{x}_1=(v_1, 0)$, visit all vertices in the specified order of $\pi_i$, and end with state $\mathbf{x}_{|\tau_i|}=(v_{|\pi_i|}, \cdot)$. Formally, there exist indices $a_1 = 1< a_2 <\ldots< a_{|\pi_i|} = |\tau_i|$ such that $\mathbf{x}_{a_j}=(v_j,\cdot)$ for all $j=1,\ldots,|\pi_i|$.
To transit from state $(u, t)$ to the next state $(v, t')$ in a trajectory, the robot performs a \textit{wait-and-move} action, where it waits at $v$ for $t'-w_e - t \geq 0$ and then moves from $u$ to $v$ along edge $e=(u,v)\in E$ with a time cost of $w_{e}>0$.
Let $\tau[j]$, $\mathbf{x}.v$ and $\mathbf{x}.t$ denote the $j$-th state in $\tau$, the vertex and time in each state $\mathbf{x}$, respectively.
The quality of a Deconflicted MCPP solution $\mathcal{T}$ is evaluated using the \textit{makespan} metric, denoted by $\theta(\mathcal{T})=\max\{\tau_1[|\tau_1|].t,\ldots,\tau_k[|\tau_k|].t\}$, which is consistent with that used for MCPP.

\begin{definition}[Conflict]\label{def:conflict}
A conflict between trajectory $\tau$ with states $\mathbf{x}_j$ for $j=1,\ldots,|\tau|$ and another trajectory $\tau'$ with states $\mathbf{x}'_{j'}$ for $j'=1,\ldots,|\tau'|$ occurs if there exist states $\mathbf{x}_j$ and $\mathbf{x}'_{j'}$ such that their vertices $\mathbf{x}_j.v=\mathbf{x}'_{j'}.v$ and the two time intervals $(\mathbf{x}_{j-1}.t, \mathbf{x}_{j+1}.t)$ and $(\mathbf{x}'_{j'-1}.t, \mathbf{x}'_{j'+1}.t)$ intersect. To ensure conflict checks are valid for all $j$, we assume two dummy bounds $\mathbf{x}_0.t=0$ and $\mathbf{x}_{|\tau|+1}.t=+\infty$. 
\end{definition}
In our definition of conflicts, a vertex is considered occupied by a robot not only when the robot stays at it (as in typical MAPF frameworks~\cite{sharon2015conflict}) but also when a robot is moving into or out of it. This conservative definition subsumes all types of conflicts outlined in MAPF literature~\cite{stern2019multi}, including vertex conflicts, edge conflicts, following conflicts, and swapping conflicts. This generalized definition accommodates continuous time and non-uniform edge weights, unlike traditional MAPF definitions that assume discretized, uniform time steps.


\begin{remark}
Instead of first solving MCPP and then deconflicting the resulting paths as a post-processing procedure, one could integrate the deconfliction at the end of each LS-MCPP iteration (Alg.~\ref{alg:ls-mcpp}), potentially improving performance beyond our proposed sequential approach.
However, such an approach incurs significantly higher runtime due to the computational demands of repeated runs of the deconfliction (see Tab.~\ref{tab:dec_mcpp_running_time}).
\end{remark}

\subsection{The High-Level Planner}
The high-level planner employs Priority-Based Search (PBS)~\cite{ma2019searching} to resolve conflicts defined in Definition~\ref{def:conflict}. PBS searches a priority tree where each node $N$ stores a unique priority ordering given by a set $\boldsymbol{\pmb\prec}_N$ of ordered pairs of robots and a set $N.\mathcal{T}$ of $k$ trajectories that respect the prioritized planning scheme specified by $\boldsymbol{\pmb\prec}_N$.
The root node contains an empty priority set and a set of potentially conflicting trajectories.
When PBS expands a node $N$, it
checks $N.\mathcal{T}$ for conflicts. 
If no conflicts are found, then $N$ is a goal node, and $N.\mathcal{P}$ is the solution. 
If conflicts exist between the trajectories of robots $i$ and $j$, PBS generates two child nodes, $N_1$ and $N_2$, and adds the pair $i\prec j$ (robot $i$ has a higher priority than robot $j$) to $N_1$ and $j\prec i$ to $N_2$.
For child node $N_1$, PBS invokes a low-level planner to update the trajectories of robot $j$ and other robots $j'$ with $j\prec_{N_1} j'$, ensuring $N_1.\mathcal{T}$ respects $\boldsymbol{\pmb\prec}_{N_1}$. 
PBS processes $N_2$ symmetrically.
The low-level planner computes a trajectory for each robot that avoids conflicts with higher-priority trajectories. 
PBS performs Depth-First Search on the priority tree by selecting the child node whose trajectories have a lower makespan in the next iteration, thus exploring the space of all priority orderings.

PBS is incomplete in general; however, with a complete low-level planner, it is complete for well-formed instances of various MAPF problems~\cite{ma2019searching}. Specifically, PBS with a complete low-level planner is complete for well-formed Deconflicting MCPP instances, defined as follows.
\begin{definition}[Well-Formedness]\label{def:wellformness}
A Deconflicted MCPP instance with a given input solution $\Pi$ of MCPP instance $(G,I,R)$ is \textit{well-formed} if, for all $i\in I$, path $\pi_i \in \Pi$ does not traverse any root vertex $r_j\in R$ of robot $j\neq i$.
\end{definition}
Since the given solution $\Pi$ might not satisfy the well-formed condition, before deconflicting the given $\Pi$, we preprocess each $\pi_i\in\Pi$ to remove any $v\in R\setminus\{r_i\}$ from $\pi_i$, which preserves complete coverage while ensuring the completeness of PBS when, for all $i$, $G$ remains connected with all vertices $v\in R\setminus\{r_i\}$ removed. A sufficient condition is that $G$ remains connected with all vertices $v\in R$ removed, which is used to generate instances for our experiments in Sec.~\ref{sec:res}.


\begin{remark}\label{remark:decmcpp_hardness}
Alternatively, Conflict-based Search (CBS)~\cite{sharon2015conflict} can serve as the high-level planner. CBS is optimal for MAPF problems when an optimal low-level planner is used.
However, CBS generally incurs significantly longer runtimes than PBS due to its Best-First Search on a tree whose size grows exponentially with the number of conflicts. 
This runtime increase is even more pronounced in our multi-goal MAPF setting, where an optimal low-level planner is notably more computationally intensive than in standard single-goal MAPF.
According to~\cite{mouratidis2024fools} which applies CBS to OMG-MAPF, a special case of our Deconflicting MCPP problem with unweighted edges, CBS only scales to sequences of up to 5 goals on relatively small graphs. \end{remark}

\subsection{The Low-level Planner}\label{subsec:low-level-planner}
The lower-level computes a trajectory $\tau_i$ for robot $i$ that sequentially visits vertices along the given $\pi_i$. It uses a \textit{reservation table} that records the reserved time intervals for each vertex, ensuring that the resulting trajectory $\tau_i$ avoids conflicts with higher-priority trajectories.

\noindent\textbf{Reservation Table:}
To accommodate planning in an edge-weighted graph, we adopt the idea from Safe Interval Path Planning (SIPP)~\cite{phillips2011sipp,ma2019lifelong} to reserve continuous-time intervals, instead of discrete time steps in the classic vertex-time A*~\cite{silver2005cooperative}. The reservation table is indexed by a vertex, with the corresponding entry recording the \textit{reserved intervals} during which the vertex is occupied.
Specifically, following the conflict definition (Definition~\ref{def:conflict}), given state $(v,t)$, the reservation table records the reserved interval $[t_p, t_s)$ for vertex $v$, where $(v_p,t_p)$ and $(v_s,t_s)$ are the preceding and succeeding states of state $(v,t)$, respectively. Notably, the reservation also records $[0, t_s)$ for the first state $(r_j,0)$ and $[t_p, +\infty)$ for the last state in each higher-priority trajectory $\tau_j$. The \textit{safe intervals} for each vertex $v$ are obtained as the complements of its reserved intervals with respect to interval $[0,+\infty)$.

\noindent\textbf{SIPP Node Expansion:} The low-level planner uses A* search to generate a search tree, where each node $n$ is uniquely identified by a pair $\langle v, [lb, ub)\rangle$, consisting of a vertex $v$ and one of its safe intervals $[lb, ub)$. The $g$-value $g(n) \in [lb, ub)$ of a node $n$ stores the earliest arrival time at which the robot can stay at vertex $v$ within the safe interval $[lb, ub)$ along any path. When expanding node $n=\langle v,[lb,ub)\rangle$, the search generates a child node $n'=\langle u, [lb',ub')\rangle$ with $g(n')$ for each vertex $u$ adjacent to $v$ and each safe interval $[lb', ub')$ of $u$, corresponding to the action of waiting at $v$ for the minimal duration $g(n')-w_{(v, u)}-g(n)\geq 0$, then departing from $v$ at time $g(n')-w_{(v,u)}$ and arriving at $u$ at the earliest available time $g(n')$. The search discards node $n'$ if no valid arrival time $g(n')$ is available such that $g(n')\geq lb'$ with the corresponding departure time $g(n')-w_{(v,u)}< ub$. If the search generates a duplicate copy of an already generated node, it keeps only the dominating one with the smallest $g$-value.

\noindent\textbf{Conflict-Free Trajectory Reconstruction:}
The trajectory $\tau_i$ for $\pi_i$ is generated by recursively backtracking the search tree that starts from $\langle\pi_i[|\pi_i|],\cdot\rangle$ and ends at $\langle\pi_i[1],\cdot\rangle$.
For each SIPP node $n=\langle v,\cdot\rangle$ along the search tree, we recover a state $\mathbf{x}=(v, g(n))$ and append it to $\tau_i$.


Given multiple ordered goals along a path $\pi$, we design three low-level planner approaches, \textit{chaining}, \textit{multi-label}, and \textit{adaptive}.

\noindent\textbf{Chaining Approach (Incomplete)}:
This approach iteratively computes a trajectory by constructing (trajectory) segments between consecutive vertices in $\pi$. Starting from vertex $\pi[1]$ with time $t_1=0$, each $j-th$ iteration (for $j=1,\ldots, |\pi|-1$) computes a segment from vertex $\pi[j]$ with the given arrival time $t_{j}$ to vertex $\pi[j+1]$ with a computed arrival time $t_{j+1}$. Each segment computation uses the aforementioned A* search, starting from the node with vertex $\pi[j]$ and a safe interval containing $t_j$. The computation succeeds if the search expands a goal node $n=\langle\pi[j+1],\cdot\rangle$ with arrival time $t_{j+1}=g(n)$; otherwise, it fails. The search uses the precomputed shortest-path distance from any vertex $v$ to $\pi[j+1]$ as the heuristic value for all nodes associated with $v$. The final trajectory, if all segments are successfully computed, is constructed by concatenating the segments. However, this chaining approach is incomplete~\cite{mouratidis2024fools} because it does not allow backtracking. For example, a successful trajectory may require searching from alternate safe intervals of $\pi[j]$ rather than the one containing $t_j$, which this approach does not account for.

\noindent\textbf{Multi-Label Approach (Complete and Optimal; Novel for Continuous Time)}:
This approach extends Multi-Label A*~\cite{grenouilleau2019multi,li2021lifelong,zhong2022optimal}, originally developed for unweighted graphs, to plan a time-minimal trajectory through ordered goals $\pi[j]$ ($j=1,\ldots, |\pi|$) in a single search. This approach adds a \textit{label} dimension to the state space and thus searches with nodes uniquely identified by a tuple $\langle v, [lb, ub), l\rangle$, consisting of a vertex $v$, one of the safe intervals $[lb, ub)$ of $v$, and a label $l$ indicating the current goal $\pi[l]$---resulting in a novel path-planning algorithm named Multi-Label SIPP (ML-SIPP). The search of ML-SIPP follows the same node expansion rule as standard SIPP, letting a child node inherit its label $l$ from its parent, except when its vertex is $\pi[l]$, in which case its label changes to $l+1$. The search terminates when it expands a (final-)goal node $\langle\pi[|\pi|], \cdot, |\pi|+1\rangle$. ML-SIPP uses the precomputed concatenated shortest-path distance heuristic:
\begin{align*}
h((v,\cdot,l))=\text{dist}(v,\pi[l])+\sum_{j=l}^{|\pi|-1}\text{dist}(\pi[j],\pi[j+1]).
\end{align*}
Since this heuristic is admissible---never overestimating the cost of reaching the (final-)goal node---ML-SIPP is a complete and optimal low-level planner.

\noindent\textbf{Adaptive Approach (Complete; Novel):}
The adaptive approach, shown in Alg.~\ref{alg:ada}, seeks to enhance practicability by combining the efficiency of the chaining approach and the completeness of ML-SIPP. It sequentially applies ML-SIPP to subsequences of goals in path $\pi$, each time starting from a subsequence of a single goal [Line \ref{alg:ada:update_S}] and backtracking to include the previous goal whenever ML-SIPP fails to find a trajectory for the current subsequence [Lines~\ref{alg:ada:limited_window}-\ref{alg:ada:add2seq}].
The backtracking depth is limited by a small integer $b_\text{max}\in\mathbb{Z}^+$ (we set it as $5$ in Sec.~\ref{sec:res}).
Upon successfully planning for a single goal, the approach resets current backtracking depth $b$ [Lines~\ref{alg:ada:unsolvable}-\ref{alg:ada:update_S}].
Our adaptive approach is complete since it calls ML-SIPP for the entire sequence of goals in $\pi$ (i.e., the multi-label approach) in the worst case when ML-SIPP fails with the window size $b_\text{max}$ [Lines~\ref{alg:ada:fallback_1}-\ref{alg:ada:fallback_2}].
In practice, we postpone the ML-SIPP call on the entire goal sequence and mark the PBS node $N$ for the multi-label approach.
If the high-level PBS tree reports failure once its search stack is empty, we pick one postponed node, unmark it, and call the multi-label approach for the node with children nodes branched.
This strategy greatly improves the overall efficiency of the deconflicting procedure, as each call of the multi-label approach is highly time-consuming, especially when dealing with large-scale instances.

\begin{algorithm}[t]
\DontPrintSemicolon
\linespread{0.95}\selectfont
\caption{Adaptive-Low-Level-Planner}\label{alg:ada}
\SetKwInput{KwInput}{Input}
\SetKwInput{KwParam}{Param}
\KwInput{path $\pi=(v_1,\ldots,v_{|\pi|})$ for robot $i$,\quad\quad reservation table $rt$ w.r.t. $\pmb{\prec}_N$ of PBS node $N$}
\KwParam{max backtracking size $b_\text{max}\in\mathbb{Z}^+$}
remove every $v\in R\setminus\{r_i\}$ from $\pi$\;
$\mathbf{x}_\text{last}\gets\left(v_1,0\right)$,\, $j=2$,\, $b\gets 1$\;
$\tau\gets\left(\mathbf{x}_\text{last},\right)$,\, $S_\text{goal}\gets (v_j,\,)$\;
\While{$j \leq |\pi|$}{
find a trajectory from $\mathbf{x}_\text{last}$ to $S_\text{goal}$ via ML-SIPP\;
\uIf{a valid trajectory $\Tilde{\tau}$ is found}{\label{alg:ada:unsolvable}
    append $\Tilde{\tau}$ to the end of $\tau$\;\label{alg:ada:append2P}
    $j\gets j+1$,\, $b\gets 1$\;\label{alg:ada:update_b}
    $S_\text{goal}\gets (v_j,\,)$\;\label{alg:ada:update_S}
}
\uElseIf{$b\leq b_\text{max}$ or $\tau$ is not empty}{\label{alg:ada:limited_window}
    pop the last state from $\tau$ until $v_{j-b}$\;\label{alg:ada:pop_state}
    $S_\text{goal}\gets (v_{j-b},\,)+S_\text{goal}$,\, $b\gets b+1$\;\label{alg:ada:add2seq}
}
\Else{
mark $N$ as postponed for multi-label approach\;\label{alg:ada:fallback_1}
\Return$\emptyset$\;\label{alg:ada:fallback_2}
}

$\mathbf{x}_\text{last}\gets$ the last state in $\tau$\;
}
\Return $\tau$
\end{algorithm}

\subsection{Integrating Turning Costs}\label{sec:turn_cost}

So far, the trajectory computation assumes the robots are holonomic, that is, they can move omnidirectional without the need to change their headings between consecutive move actions.
We now show that one can remove this assumption for non-holonomic robots by adding an \textit{orientation} $\psi\in \{\text{E}, \text{N}, \text{W}, \text{S}\}$ to the state space, restricted to East (E), North (N), West (W), and South (S) in the four-neighbor grid graph $G$.
Correspondingly, we allow a robot to \textit{turn} at vertex $v$ from $\psi$ to $\psi'\neq\psi$ at a time cost of $t'-t=C\cdot\frac{|\psi'-\psi|}{90^\circ}>0$, where $C$ is a constant cost for a $90^\circ$ turns. Effectively, each robot performs a \textit{turn-wait-move} action.
With this modification, one can deconflict an MCPP solution for non-holonomic robots.

Additionally, the above modification requires the parallel rewiring optimization for ESTC to consider the turn costs when calculating the path cost difference (Sec.~\ref{subsec:local_opt}). Other components of LS-MCPP (Alg.~\ref{alg:ls-mcpp}), including ESTC (Alg.~\ref{alg:estc}) and its turn reduction optimization, all remain unchanged.

\begin{figure*}[t]
\centering
\includegraphics[width=\linewidth]{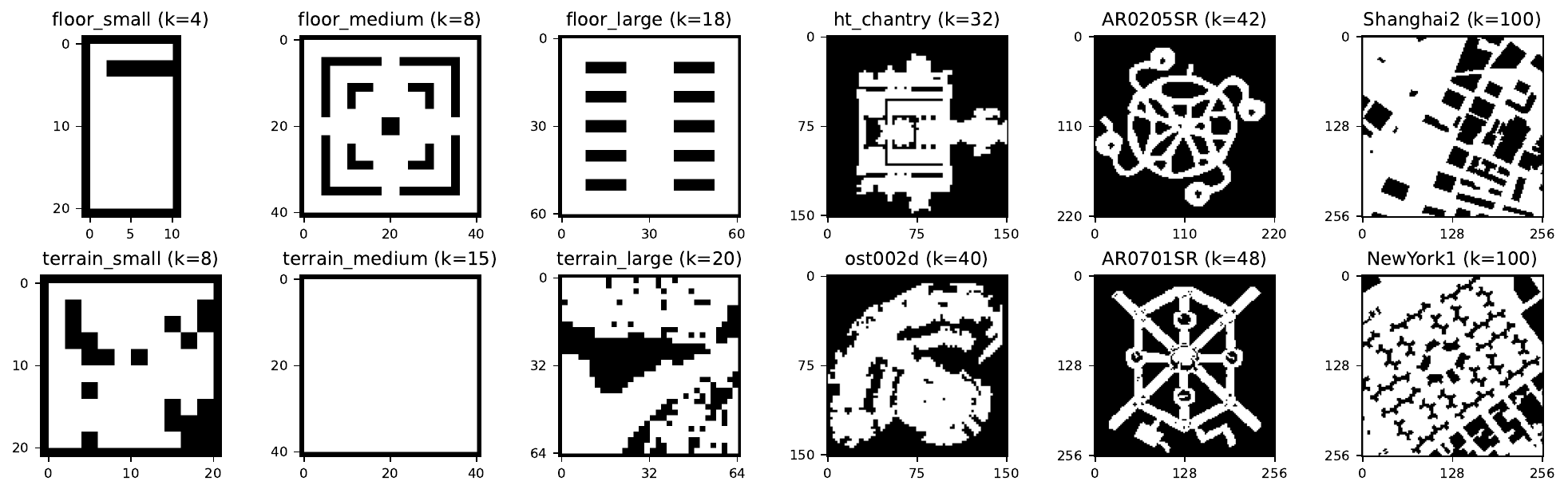}
\caption{MCPP instances with $k$ robots for the simulation experiments. Top row: unweighted grid graphs. Bottom row: weighted grid graphs.}
\label{fig:instances}
\end{figure*}

\section{Experiments and Results}~\label{sec:res}
This section describes our numerical results of simulations on an Ubuntu 22.04 PC with a 2.50GHz \textit{Intel}\textsuperscript{\textregistered} \textit{Core}\textsuperscript{\textregistered} i5-13490F CPU and 32GB RAM.
Our proposed algorithms and the baseline methods are publicly available at Github\footnote{\url{https://github.com/reso1/LS-MCPP}}, with all the benchmark instances and reported numerical results provided as well.
In Sec.~\ref{sec:exp}, we deployed the proposed algorithms on two \textit{Pepper}\footnote{\url{https://www.aldebaran.com/en/pepper}} robots in an indoor environment to validate our planning pipeline for MCPP.

\subsection{Experiment Setup}\label{subsec:benchmark}

\noindent\textbf{Instances:} 
We generate a set of instances to benchmark different methods utilizing the four-way connected 2d undirected grid graphs from previous grid-based MCPP works~\cite{tang2023mixed, tang2024large} as well as the 2D pathfinding benchmark~\cite{sturtevant2012benchmarks}.
As shown in Fig.~\ref{fig:instances}, the graphs in the top row have unweighted edges with a uniform weight of $1$, whereas the graphs in the bottom row have weighted edges whose weights are randomly generated from $1$ to $3$.
Originally, each graph can be contracted to a hypergraph where all its hypervertices are complete.
The original graphs are then used to generate 12 groups of 12 mutated MCPP instances (i.e., mutations) of $k$ robots, resulting from a removal ratio $\rho=0,1,...,11$ and a seed $s=0,1,...,11$.
Each mutation $(G,I,R)$ with $\rho$ and $s$ is constructed by (1) randomly sampling (with seed $s$) a set of $k$ vertices from $G$ as $R$ and (2) randomly removing (with seed $s$) vertices from $G$ by $x=(\frac{100}{12}\cdot\rho)\%$, meanwhile ensuring the root vertex of any robot and all its neighbors are not removed.
The root vertices in $R$ are sampled to ensure they are not cut vertices for constructing well-formed instances~\cite{vcap2015complete,ma2019searching}, that is, a robot can wait indefinitely at its root vertex without blocking any other robot.
Note that the graph edge weights of each mutation are also randomly generated using its corresponding seed $s$.
For all mutations, we set $C=0.5$ as the $90^\circ$ turning cost as described in Sec.~\ref{sec:turn_cost}.

\begin{figure}[t]
\centering
\includegraphics[width=\linewidth]{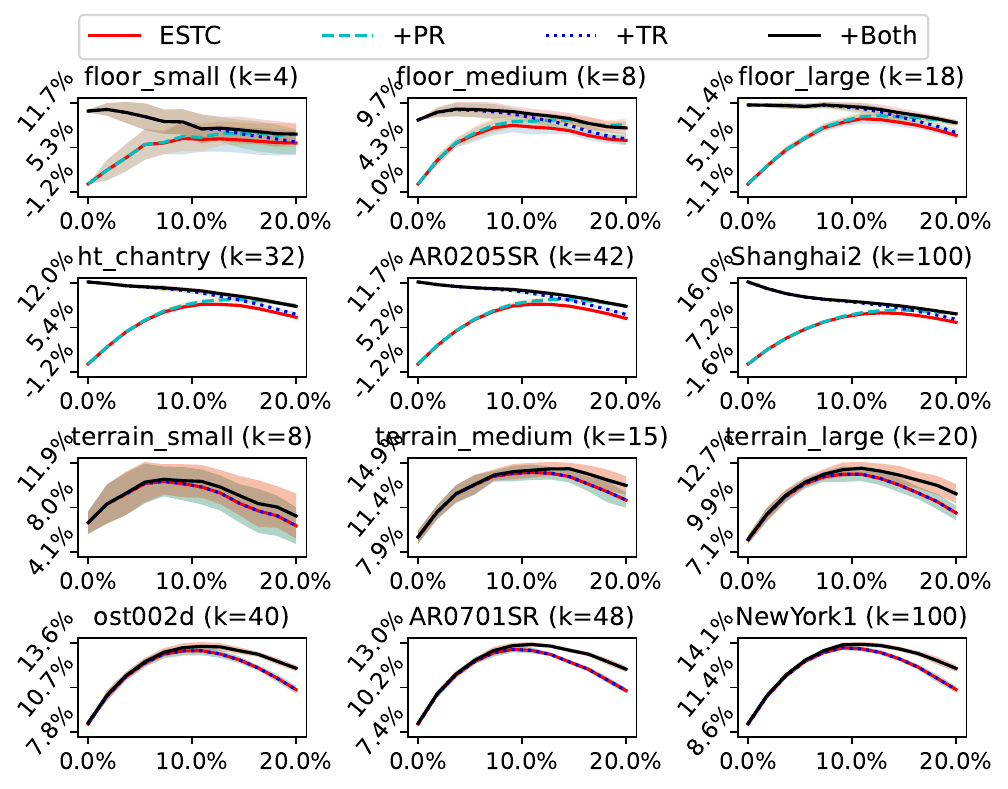}
\caption{Ablation study for single-robot CPP. Each subplot displays the removal ratio $\rho$ (x-axis) versus cost reductions compared to ESTC-UW (y-axis). The cost statistics (mean and variance) are computed across all 12 seeds.}
\label{fig:CPP_ablations}
\end{figure}

\noindent\textbf{Baselines}:
We describe all the methods being compared in our experiments.
(1) \textit{Single-Robot CPP}: To verify the effectiveness of our ESTC with the proposed hyperedge weight definition, we use a baseline method, namely ESTC-UW, that computes the circumnavigating coverage path on a spanning tree assuming the edges are unweighted.
We also performed an ablation study on the two local optimizations for ESTC: parallel rewiring (PR) in Sec.~\ref{subsec:estc} and turn reduction (TR) in Sec.~\ref{subsec:local_opt}.
(2) \textit{MCPP}: As mentioned in Sec.~\ref{subsec:apply_estc}, the proposed ESTC algorithm can be integrated into various grid-based MCPP methods, making them immediately applicable to MCPP.
To reiterate, VOR, MFC, MIP, and MSTC$^*$ are the comparing baseline methods with our LS-MCPP.
In addition, we also compare LS-MCPP with and without the vertex-wise operators that are new to our previous work~\cite{tang2024large}~\footnote{In the ablation study of our previous work~\cite{tang2024large}, we have shown the effectiveness of grow, deduplicate, and exchange operators as well as the use of Forced Deduplication (Alg.~\ref{alg:fd}) and heuristic sampling for LS-MCPP.}, namely LS(+VO) and LS(-VO) respectively.
Note that all the methods compared use ESTC with TR and PR applied by default.
(3) \textit{Deconflicted MCPP}: Given LS(+VO) solutions to produce the initial time-embedded trajectories, we compare the deconflicting performance using the chaining approach (CHA), the multi-label approach (MLA), and the adaptive approach (ADA) as the low-level planner for PBS.

\noindent\textbf{Parameters}:
Given an MCPP instance $(G,I,R)$, we describe the settings of all the parameters used in the simulation experiments.
For our LS-MCPP (Alg.~\ref{alg:ls-mcpp}), we set the maximum number of iterations $M=1.0e^3\times\sqrt{|V|/k}$ for a good tradeoff between planning efficiency and solution quality, the forced deduplication step size $S=\lfloor M/20 \rfloor$ to arrange at least 20 \textit{Forced-Deduplication} calls, the temperature decay factor $\alpha=\exp{\frac{\log 0.2}{M}}$ to decrease the temperature $t$ from $1$ to $0.2$, and the pool weight decay factor $\gamma=1.0e^{-2}$.
For MIP, we set a runtime limit of one hour for the mixed-integer program solving.
For the Deconflicted MCPP algorithms in Sec.~\ref{subsec:low-level-planner}, we set a total runtime limit of one hour for the overall procedure.
Specifically for the adaptive approach (Alg.~\ref{alg:ada}), we find that setting the max backtracking size $b_\text{max}=5$ is sufficient to resolve all local conflicts efficiently.

\subsection{Ablation Study}\label{subsec:ablation}


\noindent\textbf{Single-Robot CPP:}
In Fig.~\ref{fig:CPP_ablations}, we compare all four ESTC variants with the baseline ESTC-UW. 
We observe a more critical cost reduction on weighted graphs (upper 6 subplots) than on unweighted graphs (lower 6 subplots).
For unweighted graphs, TR results in significant performance improvements since uniformly weighted edges can be aligned in one direction to minimize turning costs. 
For weighted graphs, TR is much less effective as there are few tie-breaking cases for reducing the turning costs, while PR achieves higher cost reductions as the vertex removal percentage increases.
In general, ESTC shows its effectiveness for single-robot CPP, and both PR and TR contribute to substantial path cost reductions.

\begin{figure}[t]
\centering
\includegraphics[width=\linewidth]{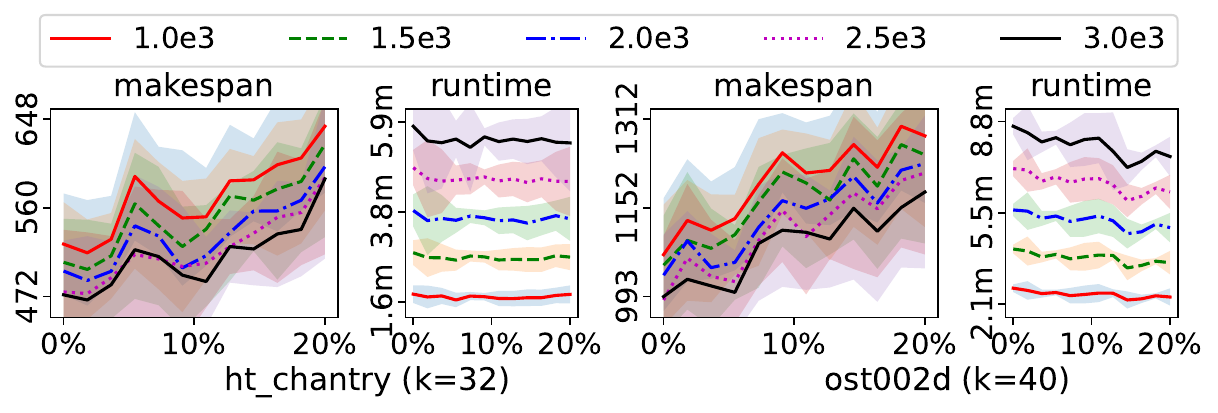}
\caption{Performance of LS(+VO) solutions with varying iteration limits. The maximum number of iterations is scaled as $M=b\times\sqrt{|V|/k}$, where different colored lines represent different base scalers $b$.}
\label{fig:diff_iters}
\end{figure}

\begin{figure*}[t]
\centering
\includegraphics[width=0.95\linewidth]{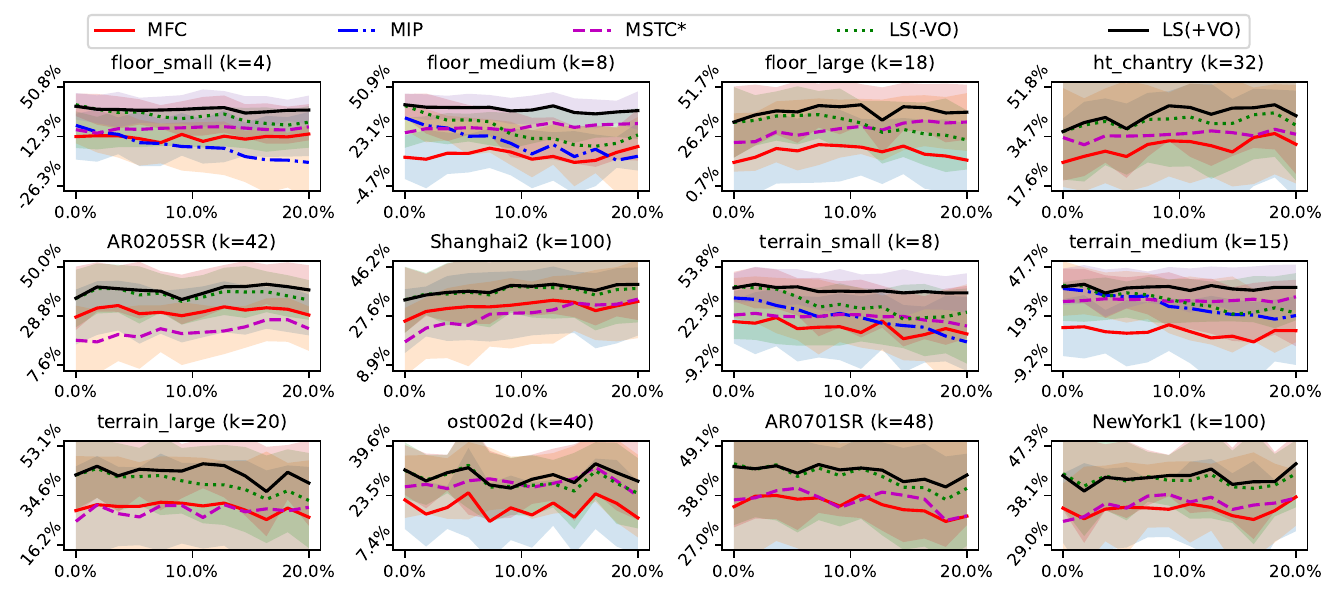}
\caption{Ablation study for MCPP. Each subplot displays the removal ratio $\rho$ (x-axis) versus makespan reductions compared to VOR (y-axis). The makespan statistics (mean and variance) are computed across all 12 seeds.}
\label{fig:MCPP_ablations}
\end{figure*}

\begin{figure*}
\centering
\includegraphics[width=0.95\linewidth]{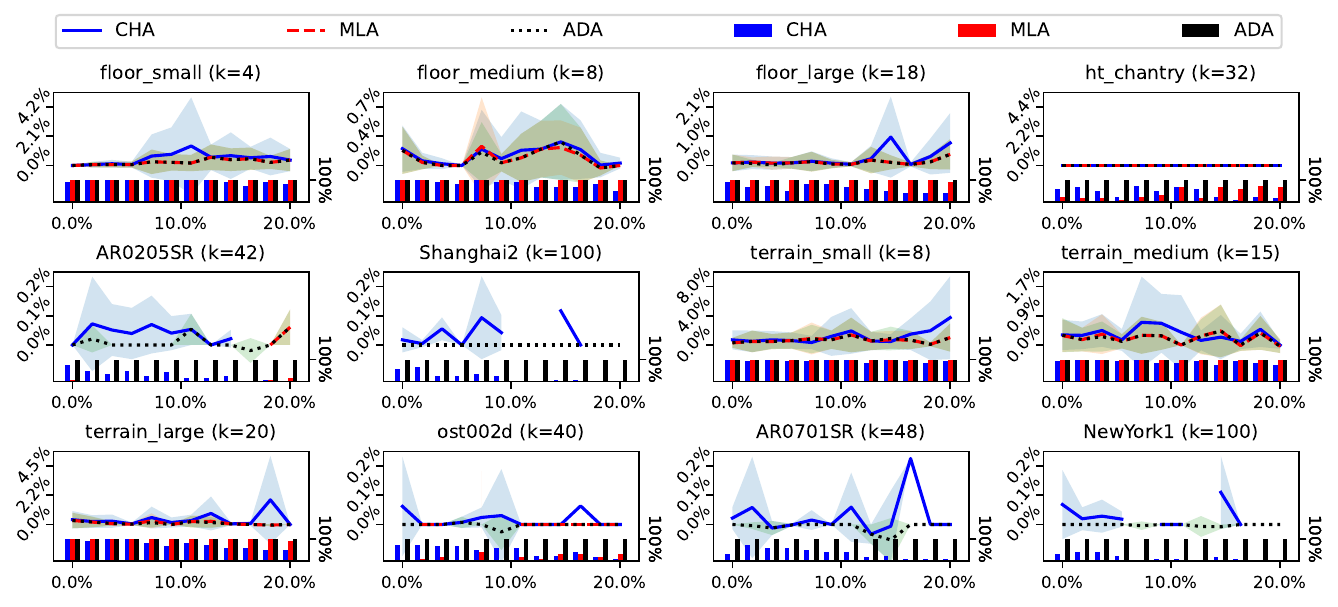}
\caption{Ablation study for Deconflicted MCPP. Each subplot displays the removal ratio $\rho$ (x-axis) versus makespan increments relative to the input LS(+VO) MCPP solution (left y-axis) and success rates (right y-axis). The makespan statistics (mean and variance) are computed across seeds where CHA, MLA, and ADA all succeed; when certain approach(es) fail in all seeds, statistics are computed across seeds where the remaining approach(es) succeed.
}
\label{fig:DecMCPP_ablations}
\end{figure*}

\noindent\textbf{MCPP:}
We compare the solution quality of MCPP between different methods in Fig.~\ref{fig:MCPP_ablations}.
Generally speaking, the mutated instances become more complex as the vertex removal ratio increases.
Our LS-MCPP consistently outperforms other methods across all instances, while the performance of the other methods varies, with each showing advantages in different instances.
Note that due to the huge memory and runtime requirements of MIP, we only test it on relatively smaller instances (i.e., names with a suffix of \textit{small} or \textit{medium}), and its performance is highly restricted by the runtime limit of 1 hour.
Specifically for our LS-MCPP, as the vertex removal ratio increases, LS(+VO) performs better with the additional vertex operators compared to LS(-VO).
However, when the instances become more complex with a large map and more robots, the solution quality of LS(+VO) gets closer to LS(-VO) due to an inadequate number of local search iterations limited by $M$.
To demonstrate that we can further improve the solution quality by saturating LS(+VO) with more iterations, we compare the LS(+VO) solutions with increasing maximum numbers of iterations, as shown in Fig.~\ref{fig:diff_iters}.
The solution quality steadily improves as the maximum iteration limit $M$ increases with a linear increase in runtime.
In practice, $M$ can be flexibly adjusted according to the available runtime budget, allowing a trade-off between better solution quality and faster planning.

\noindent\textbf{Deconflicted MCPP:}
We explain the Deconflicted MCPP ablation study results in Fig.~\ref{fig:DecMCPP_ablations}.
CHA demonstrates the highest makespan increments and the lowest success rate due to its inability to backtrack to previous goal states, resulting in incompleteness. 
This limitation makes CHA susceptible to ``dead-ends'' when no valid succeeding states are available. 
Consequently, it reports a higher number of failure nodes during PBS search, which either prevents PBS from finding a valid priority ordering or significantly increases the number of PBS nodes explored (sometimes exceeding the runtime limit in large-scale instances).
MLA produces high-quality trajectory sets, with no makespan increases for most of the ``easier'' instances it can handle within the runtime limit. 
However, for more complex instances involving larger maps and more robots, the added label dimension across the entire coverage path expands the search space considerably, which significantly increases planning time for each PBS node and results in a lower success rate due to timeouts.
Our ADA succeeds in all instances, achieving high solution quality with no makespan increases in most cases. 
This is due to its ability to restrict the label-augmented search space within a local window, rather than across the entire coverage path as MLA, and only when backtracking to previous states is necessary.
In Fig.~\ref{fig:diff_sols}, we also compare the deconflicting results of different MCPP solutions for \textit{floor\_large} and \textit{terrain\_large} using ADA as the low-level planner for PBS.
The results demonstrate the better solution quality of the proposed LS-MCPP when used as the input MCPP solution for deconfliction.
In contrast, as MFC and MSTC$^*$ have larger makespans due to the higher overlapping covered regions across the map, the deconflicting procedures on them are more time-consuming.
Notably, there is one failure case of deconflicting the MFC solution where the one-hour runtime limit is exceeded.
For VOR, as there are no overlapping regions in its solution, PBS immediately returns the initial time-embedded trajectories without any conflict.

\begin{figure}[t]
\centering
\includegraphics[width=\linewidth]{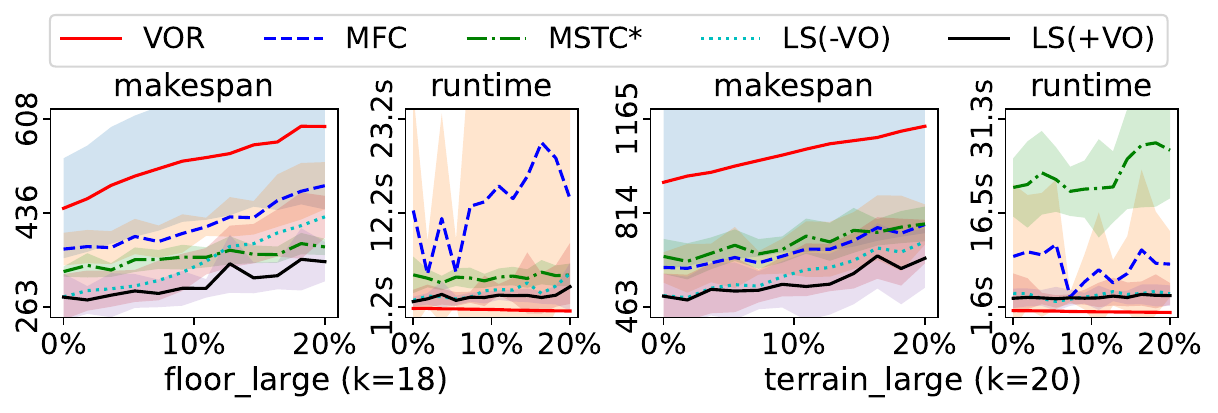}
\caption{Deconflicting different MCPP solutions using ADA as the low-level planner for PBS. The left two subplots show the removal ratio $\rho$ (x-axis) versus makespan (left y-axis) and success rates (right y-axis). The makespan statistics (mean and variance) are computed across seeds where ADA succeeds for all input MCPP solutions.}
\label{fig:diff_sols}
\end{figure}

\subsection{Planning Runtime Comparison}
We report the planning runtime of all the MCPP methods and the Deconflicted MCPP methods across all the 12 groups of instances in Fig.~\ref{fig:instances}.
We show the planning runtime comparison of methods for MCPP in Tab.~\ref{tab:relaxed_mcpp_running_time}, where our LS-MCPP requires more time to explore the solution neighborhood.
In particular, adding vertex-wise operators only increases the runtime of LS-MCPP by a little if we compare LS(-VO) and LS(+VO).
Nevertheless, it produces high-quality MCPP solutions within around 13 minutes even for large-scale instances such as \textit{NewYork1} and \textit{Shanghai2}, which is acceptable for offline planning in real-world robotics applications.

In Tab.~\ref{tab:dec_mcpp_running_time},  we compare the planning runtime of the three low-level planners with PBS as the high-level planner, using the LS(+VO) solution as input.
Each approach-instance data cell displays the runtime for instances where all three approaches succeed at the top, and the runtime for instances where each approach succeeds is shown at the bottom. 
Note that for the top item, when one or more approaches fail for all 12 instances, we also include the runtime for instances where the other approaches all succeed.
We can see that for the relatively easier instances where all three approaches succeed at least one mutation, CHA is slightly faster than ADA, whereas MLA is the slowest.
For more complex instances (e.g., \textit{NewYork1}) where CHA only succeeds at a few mutations (see Fig.~\ref{fig:DecMCPP_ablations}), ADA can still produce a set of valid trajectories for all of them within 10 minutes, and sometimes even faster than CHA who needs to explore more PBS nodes to find a successful set of trajectories.

\begin{table}[t]
\caption{Runtime Comparison of MCPP Methods.}\label{tab:relaxed_mcpp_running_time}
\centering
\setlength\tabcolsep{0.3pt}
\renewcommand{\arraystretch}{1.0}
\begin{tabular}{|c|c|c|c|c|c|c|}
\hline
 & \textit{\textbf{\begin{tabular}[c]{@{}c@{}}floor\\ small\end{tabular}}}   & \textit{\textbf{\begin{tabular}[c]{@{}c@{}}floor\\ medium\end{tabular}}}   & \textit{\textbf{\begin{tabular}[c]{@{}c@{}}floor\\ large\end{tabular}}}   & \textit{\textbf{\begin{tabular}[c]{@{}c@{}}ht\_ch\\ antry\end{tabular}}} & \textit{\textbf{\begin{tabular}[c]{@{}c@{}}AR02\\ 05SR\end{tabular}}} & \textit{\textbf{\begin{tabular}[c]{@{}c@{}}Shang\\ hai2\end{tabular}}} \\ \hline
\textbf{VOR} & 0.01$\pm$0.01s & 0.03$\pm$0.01s & 0.1$\pm$0.01s & 0.2$\pm$0.03s & 0.4$\pm$0.04s & 1.5$\pm$0.1s \\ \hline
\textbf{MFC} & 0.02$\pm$0.01s & 0.21$\pm$0.02s & 1.0$\pm$0.08s & 6.8$\pm$0.11s & 20$\pm$1.35s & 4.7$\pm$0.3m \\ \hline
\textbf{MIP} & 1h & 1h & / & / & / & / \\ \hline
\textbf{MSTC*} & 0.01$\pm$0.01s & 0.12$\pm$0.05s & 0.6$\pm$0.24s & 3.5$\pm$1.63s & 9.6$\pm$4.4s & 1.7$\pm$0.7m \\ \hline
\textbf{LS(-VO)} & 2.38$\pm$2.50s & 0.57$\pm$0.2m & 0.9$\pm$0.1m & 1.8$\pm$0.2m & 3.0$\pm$0.3m & 9.9$\pm$1.5m \\ \hline
\textbf{LS(+VO)} & 2.87$\pm$1.89s & 0.69$\pm$0.1m & 0.9$\pm$0.1m & 2.1$\pm$0.2m & 3.4$\pm$0.3m & 13$\pm$0.8m \\ \hline
 & \textit{\textbf{\begin{tabular}[c]{@{}c@{}}terrain\\ small\end{tabular}}} & \textit{\textbf{\begin{tabular}[c]{@{}c@{}}terrain\\ medium\end{tabular}}} & \textit{\textbf{\begin{tabular}[c]{@{}c@{}}terrain\\ large\end{tabular}}} & \textit{\textbf{ost002d}} & \textit{\textbf{\begin{tabular}[c]{@{}c@{}}AR07\\ 01SR\end{tabular}}} & \textit{\textbf{\begin{tabular}[c]{@{}c@{}}New\\ York1\end{tabular}}}  \\ \hline
\textbf{VOR} & 0.01$\pm$0.01s & 0.04$\pm$0.01s & 0.1$\pm$0.01s & 0.4$\pm$0.04s & 0.6$\pm$0.1s & 1.6$\pm$0.1s \\ \hline
\textbf{MFC} & 0.04$\pm$0.01s & 0.40$\pm$0.04s & 1.1$\pm$0.22s & 15$\pm$1.46s  & 0.7$\pm$0.1m & 5.4$\pm$0.2m \\ \hline
\textbf{MIP} & 1h & 1h & / & / & / & / \\ \hline
\textbf{MSTC*} & 0.04$\pm$0.02s & 0.50$\pm$0.21s & 1.4$\pm$0.55s & 15$\pm$6.71s & 0.5$\pm$0.3m & 2.6$\pm$0.8m \\ \hline
\textbf{LS(-VO)} & 5.88$\pm$3.41s & 0.46$\pm$0.1m & 0.6$\pm$0.2m & 2.6$\pm$0.3m & 4.9$\pm$0.5m & 9.1$\pm$0.5m \\ \hline
\textbf{LS(+VO)} & 6.65$\pm$2.86s & 0.51$\pm$0.1m & 0.7$\pm$0.1m & 3.0$\pm$0.3m & 5.7$\pm$0.4m & 13$\pm$0.5m \\ \hline
\end{tabular}
\end{table}

\begin{table}[t]
\caption{Runtime Comparison of Deconflicted MCPP Methods.}\label{tab:dec_mcpp_running_time}
\centering
\setlength\tabcolsep{1pt}
\renewcommand{\arraystretch}{1.0}
\begin{tabular}{|c|c|c|c|c|c|c|}
\hline
 & \textit{\textbf{\begin{tabular}[c]{@{}c@{}}floor\\ small\end{tabular}}} & \textit{\textbf{\begin{tabular}[c]{@{}c@{}}floor\\ medium\end{tabular}}} & \textit{\textbf{\begin{tabular}[c]{@{}c@{}}floor\\ large\end{tabular}}} & \textit{\textbf{\begin{tabular}[c]{@{}c@{}}ht\_ch\\ antry\end{tabular}}} & \textit{\textbf{\begin{tabular}[c]{@{}c@{}}AR02\\ 05SR\end{tabular}}} & \textit{\textbf{\begin{tabular}[c]{@{}c@{}}Shang\\ hai2\end{tabular}}} \\ \hline
\multirow{2}{*}{\textbf{CHA}} & 0.10$\pm$0.1s & 0.7$\pm$0.2s & 6.9$\pm$2.7s & 0.5$\pm$0.7m & 1.6$\pm$1.5m & 9.0$\pm$8.4m \\ \cline{2-7} 
                              & 0.10$\pm$0.1s & 0.7$\pm$0.2s & 6.9$\pm$2.7s & 0.9$\pm$2.7m & 1.5$\pm$1.4m & 9.0$\pm$8.4m \\ \hline
\multirow{2}{*}{\textbf{MLA}} & 0.16$\pm$0.1s & 4.8$\pm$3.9s & 3.7$\pm$4.4m & 25$\pm$23m   & 3.2$\pm$0.0m & /            \\ \cline{2-7} 
                              & 0.17$\pm$0.2s & 5.2$\pm$4.5s & 4.0$\pm$4.8m & 34$\pm$21m   & 31$\pm$28m   & /            \\ \hline
\multirow{2}{*}{\textbf{ADA}} & 0.10$\pm$0.1s & 0.7$\pm$0.2s & 7.3$\pm$3.2s & 0.4$\pm$0.5m & 1.4$\pm$1.4m & 9.1$\pm$6.3m \\ \cline{2-7} 
                              & 0.10$\pm$0.1s & 0.8$\pm$0.2s & 8.1$\pm$4.1s & 0.6$\pm$0.6m & 1.4$\pm$1.2m & 7.0$\pm$5.2m \\ \hline
 & \textit{\textbf{\begin{tabular}[c]{@{}c@{}}terrain\\ small\end{tabular}}} & \textit{\textbf{\begin{tabular}[c]{@{}c@{}}terrain\\ medium\end{tabular}}} & \textit{\textbf{\begin{tabular}[c]{@{}c@{}}terrain\\ large\end{tabular}}} & \textit{\textbf{ost002d}} & \textit{\textbf{\begin{tabular}[c]{@{}c@{}}AR07\\ 01SR\end{tabular}}} & \textit{\textbf{\begin{tabular}[c]{@{}c@{}}New\\ York1\end{tabular}}} \\ \hline
\multirow{2}{*}{\textbf{CHA}} & 0.18$\pm$0.1s & 1.3$\pm$0.6s & 9.3$\pm$3.7s & 1.0$\pm$0.6m & 2.4$\pm$1.6m & 5.8$\pm$7.0m \\ \cline{2-7} 
                              & 0.18$\pm$0.1s & 1.3$\pm$0.6s & 9.4$\pm$3.8s & 1.3$\pm$1.1m & 2.4$\pm$1.6m & 5.8$\pm$7.0m \\ \hline
\multirow{2}{*}{\textbf{MLA}} & 0.27$\pm$0.2s & 5.9$\pm$13s  & 5.4$\pm$12m  & 24$\pm$24m   & /            & /            \\ \cline{2-7} 
                              & 0.28$\pm$0.3s & 5.8$\pm$12s  & 5.2$\pm$12m  & 38$\pm$21m   & /            & /            \\ \hline
\multirow{2}{*}{\textbf{ADA}} & 0.20$\pm$0.1s & 1.3$\pm$0.5s & 9.9$\pm$4.7s & 0.9$\pm$0.5m & 2.1$\pm$1.0m & 4.4$\pm$3.0m \\ \cline{2-7} 
                              & 0.20$\pm$0.2s & 1.3$\pm$0.5s & 10$\pm$4.6s  & 1.2$\pm$0.6m & 2.1$\pm$0.8m & 4.2$\pm$2.4m \\ \hline
\end{tabular}
\end{table}

\subsection{Physical Robot Experiments}\label{sec:exp}

We constructed an MCPP instance of an indoor environment with a \textit{Vicon}\footnote{\url{https://www.vicon.com}} motion capture system for localization, as shown in Fig.~\ref{fig:real_exp}.
Our pipeline computes the conflict-free set of trajectories for the robots by first finding an MCPP solution via LS-MCPP and then deconflicting the MCPP solution via PBS and ADA.
After offline planning, the trajectories are sent to the robots to execute the coverage task via a centralized scheduler implemented in \textit{ROS}\footnote{\url{https://www.ros.org}}.
Fig.~\ref{fig:real_exp} demonstrates the execution procedure of the resulting trajectories using our planning pipeline.
The detailed setup is described as follows.

\noindent\textbf{Multi-Robot System:} We employ two Pepper robots, named \textit{Cayenne} and \textit{Cumin}, each with a square bounding box of 0.55 meters. 
The 2D grid graph represents the indoor environment as a $10\times 6$ map with each grid unit sized at 0.65 meters to accommodate their bounding boxes.
The initial root vertices of \textit{Cayenne} and \textit{Cumin} are the bottom-left and top-right cells, respectively.
Both Peppers can have omnidirectional movement driven by velocity commands $u=(v_x,v_y,\omega)$.
The linear velocities $v_x$ and $v_y$ are capped at 0.15 meters per second, while the angular velocity $\omega$ is restricted to 0.5 radians per second.
To determine the non-uniform traversal costs (i.e., the edge weights) for Peppers, we recorded the maximum and minimum time costs to traverse a single grid unit by performing random move actions across the graph. 
We then sampled weights randomly within this range and assigned them to the edges.
Although Peppers can move omnidirectionally, we assume they always face the moving directions by integrating turn actions as described in Sec.~\ref{sec:turn_cost}.
We set $C$ as the maximum action time of 90$^\circ$ rotations for Peppers.
In the final constructed graph, the time costs for move actions range from 7.2 to 10.8 seconds, while the time cost for a 90$^\circ$ turn is 6.0 seconds.

\begin{figure*}[t]
\centering
\includegraphics[width=\linewidth]{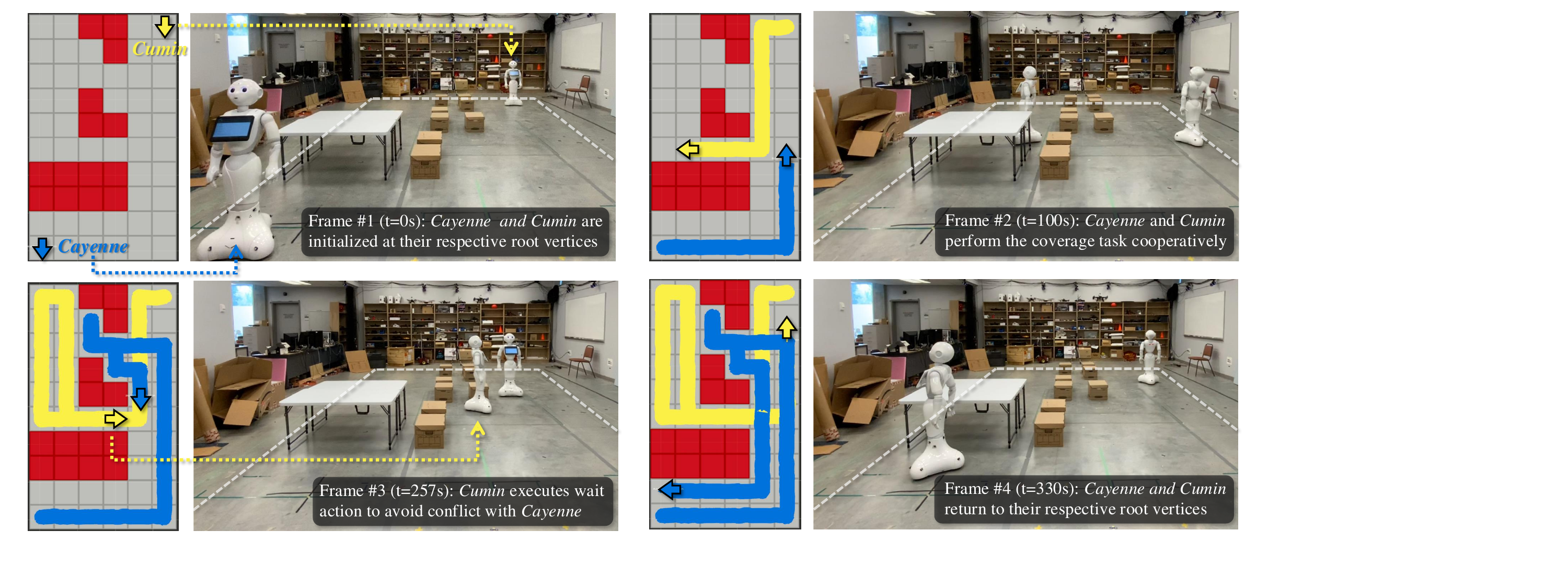}
\caption{Demonstration of \textit{Cayenne} and \textit{Cumin} executing the coverage task cooperatively and without conflicts. Four frames from the execution are shown, with robot trajectories and the grid graph displayed on the left. Static obstacles and the trajectories of \textit{Cayenne} and \textit{Cumin} are highlighted in red, blue, and yellow, respectively. The entire planning pipeline was completed in 0.1 seconds, and the execution of the coverage task took approximately 6 minutes.}
\label{fig:real_exp}
\end{figure*}

\noindent\textbf{Centralized Scheduler:}
Given the set of conflict-free trajectories resulting from our planning pipeline, we generate an action sequence consisting of move, turn, and wait actions between consecutive states for each Pepper.
A centralized scheduler is employed to sequentially send actions to the corresponding Pepper, maintaining a positional tolerance of 0.01 meters and a rotational tolerance of 0.02 radians.
The velocity commands for each action are generated using a proportional controller operating at 20 Hz to align with the nominal action time cost.
In practice, the move and turn action time costs are not precise for the Peppers, as we assume instantaneous velocity changes and do not account for their real-time dynamics or potential perturbations. 
This often leads to an overestimation of the actual action time costs, as the time for acceleration and deceleration is ignored, though underestimations can also occur occasionally.
Therefore, we design the centralized scheduler to synchronize the discrepancies between the nominal time costs $t$ and the actual time costs $t'$ for the actions.
Upon completing each action, each Pepper sends the time difference $\Delta t=t-t'$  back to the scheduler.
If $\Delta t <0$, the scheduler signals the Pepper to wait for a period of $-\Delta t$ to match the nominal time cost; otherwise, the scheduler instructs the other Pepper to add $\Delta t$ to its waiting period after completing its current action.

\begin{remark}
Our centralized scheduler currently uses a basic approach to synchronize time differences between nominal and actual action time costs, which may increase the task completion time. 
More advanced techniques from MAPF execution research, such as constructing an action dependency graph~\cite{honig2019persistent}, could address this issue. 
However, exploring these advanced methods is beyond the scope of this paper.
\end{remark}

\section{Conclusions \& Future Work}\label{sec:conclusion}
In this paper, we presented a comprehensive approach to MCPP on 2D grid graphs, addressing key challenges including grids with incomplete quadrant coarsening, non-uniform traversal costs, inter-robot conflicts, and turning costs in large-scale, real-world scenarios. 
Our contributions began with the ESTC paradigm, designed to overcome limitations of standard STC-based paradigms by enabling path generation on any grid structure, even those with incomplete quadrant coarsening. We further enhanced ESTC with two local optimizations, improving both path quality and computational efficiency.
Building on ESTC, we developed LS-MCPP, a local search framework that integrates ESTC to iteratively refine and optimize coverage paths, yielding high-quality solutions for MCPP on arbitrary grid graphs.
To address inter-robot conflicts, we introduced a new MAPF variant as a post-processing step for MCPP. This approach marks the first integration of MAPF techniques into MCPP, bridging two critical fields in multi-robot coordination. Our solver effectively resolves conflicts while preserving solution quality, demonstrating robust performance even in complex environments with up to 100 robots.
Our experimental results confirm that the proposed pipeline is scalable, robust, and capable of delivering high-quality solutions, making it suitable for a wide range of real-world multi-robot applications across diverse environments. This work lays a strong foundation for advancing multi-robot systems with implications for practical deployments in real-world applications.
Future work will explore encoding action costs on grid graphs with more precise action dynamics, further accelerating the local search through parallelization and machine learning techniques, and integrating online replanning to handle real-time changes and execution deviations.

\section*{Acknowledgement}
This work was supported by the NSERC under grant number RGPIN2020-06540 and a CFI JELF award.
We sincerely thank Chuxuan Zhang and Dr. Angelica Lim from SFU ROSIE lab for providing the two Pepper robots, Cayenne and Cumin, for our experiments.

\bibliographystyle{IEEEtran}
\bibliography{ref}

\begin{thebibliography}{10}
\providecommand{\url}[1]{#1}
\csname url@samestyle\endcsname
\providecommand{\newblock}{\relax}
\providecommand{\bibinfo}[2]{#2}
\providecommand{\BIBentrySTDinterwordspacing}{\spaceskip=0pt\relax}
\providecommand{\BIBentryALTinterwordstretchfactor}{4}
\providecommand{\BIBentryALTinterwordspacing}{\spaceskip=\fontdimen2\font plus
\BIBentryALTinterwordstretchfactor\fontdimen3\font minus \fontdimen4\font\relax}
\providecommand{\BIBforeignlanguage}[2]{{%
\expandafter\ifx\csname l@#1\endcsname\relax
\typeout{** WARNING: IEEEtran.bst: No hyphenation pattern has been}%
\typeout{** loaded for the language `#1'. Using the pattern for}%
\typeout{** the default language instead.}%
\else
\language=\csname l@#1\endcsname
\fi
#2}}
\providecommand{\BIBdecl}{\relax}
\BIBdecl

\bibitem{galceran2013survey}
E.~Galceran and M.~Carreras, ``A survey on coverage path planning for robotics,'' \emph{Robotics and Autonomous Systems}, vol.~61, no.~12, pp. 1258--1276, 2013.

\bibitem{wang2024apf}
Z.~Wang, X.~Zhao, J.~Zhang, N.~Yang, P.~Wang, J.~Tang, J.~Zhang, and L.~Shi, ``{APF-CPP}: An artificial potential field based multi-robot online coverage path planning approach,'' \emph{IEEE Robotics and Automation Letters}, vol.~9, no.~11, pp. 9199--9206, 2024.

\bibitem{almadhoun2018coverage}
R.~Almadhoun, T.~Taha, D.~Gan, J.~Dias, Y.~Zweiri, and L.~Seneviratne, ``Coverage path planning with adaptive viewpoint sampling to construct 3d models of complex structures for the purpose of inspection,'' in \emph{IROS}, 2018, pp. 7047--7054.

\bibitem{chen2024optimizing}
T.~Chen, H.~S. Ahn, W.~Sun, J.~Pan, Y.~Liu, J.~Cheng, and L.~Xu, ``Optimizing path planning for a single tracked combine harvester: A comprehensive approach to harvesting and unloading processes,'' \emph{Computers and Electronics in Agriculture}, vol. 224, p. 109217, 2024.

\bibitem{santra2024risk}
S.~Santra, K.~Uno, G.~Kudo, and K.~Yoshida, ``Risk-aware coverage path planning for lunar micro-rovers leveraging global and local environmental data,'' in \emph{iSpaRo}.\hskip 1em plus 0.5em minus 0.4em\relax IEEE, 2024, pp. 42--47.

\bibitem{sudha2024coverage}
S.~K.~R. Sudha, D.~Mishra, and I.~A. Hameed, ``A coverage path planning approach for environmental monitoring using an unmanned surface vehicle,'' \emph{Ocean Engineering}, vol. 310, p. 118645, 2024.

\bibitem{tang2022learning}
J.~Tang, Y.~Gao, and T.~L. Lam, ``Learning to coordinate for a worker-station multi-robot system in planar coverage tasks,'' \emph{IEEE Robotics and Automation Letters}, vol.~7, no.~4, pp. 12\,315--12\,322, 2022.

\bibitem{sun2021ft}
C.~Sun, J.~Tang, and X.~Zhang, ``{FT-MSTC}: An efficient fault tolerance algorithm for multi-robot coverage path planning,'' in \emph{RCAR}, 2021, pp. 107--112.

\bibitem{francca1995m}
P.~M. Fran{\c{c}}a, M.~Gendreau, G.~Laporte, and F.~M. M{\"u}ller, ``The m-traveling salesman problem with minmax objective,'' \emph{Transportation Science}, vol.~29, no.~3, pp. 267--275, 1995.

\bibitem{carlsson2009solving}
J.~Carlsson, D.~Ge, A.~Subramaniam, A.~Wu, and Y.~Ye, ``Solving min-max multi-depot vehicle routing problem,'' \emph{Lectures on global optimization}, vol.~55, pp. 31--46, 2009.

\bibitem{hazon2005redundancy}
N.~Hazon and G.~A. Kaminka, ``Redundancy, efficiency and robustness in multi-robot coverage,'' in \emph{ICRA}, 2005, pp. 735--741.

\bibitem{zheng2007robot}
X.~Zheng and S.~Koenig, ``Robot coverage of terrain with non-uniform traversability,'' in \emph{IROS}, 2007, pp. 3757--3764.

\bibitem{zheng2010multirobot}
X.~Zheng, S.~Koenig, D.~Kempe, and S.~Jain, ``Multirobot forest coverage for weighted and unweighted terrain,'' \emph{IEEE Transactions on Robotics}, vol.~26, no.~6, pp. 1018--1031, 2010.

\bibitem{tang2021mstc}
J.~Tang, C.~Sun, and X.~Zhang, ``{MSTC}$^*$: Multi-robot coverage path planning under physical constrain,'' in \emph{ICRA}, 2021, pp. 2518--2524.

\bibitem{tang2023mixed}
J.~Tang and H.~Ma, ``Mixed integer programming for time-optimal multi-robot coverage path planning with efficient heuristics,'' \emph{IEEE Robotics and Automation Letters}, vol.~8, no.~10, pp. 6491--6498, 2023.

\bibitem{gabriely2001spanning}
Y.~Gabriely and E.~Rimon, ``Spanning-tree based coverage of continuous areas by a mobile robot,'' \emph{Annals of mathematics and artificial intelligence}, vol.~31, pp. 77--98, 2001.

\bibitem{stern2019multi}
R.~Stern, N.~Sturtevant, A.~Felner, S.~Koenig, H.~Ma, T.~Walker, J.~Li, D.~Atzmon, L.~Cohen, T.~Kumar \emph{et~al.}, ``Multi-agent pathfinding: Definitions, variants, and benchmarks,'' in \emph{SoCS}, 2019, pp. 151--158.

\bibitem{vandermeulen2019turn}
I.~Vandermeulen, R.~Gro{\ss}, and A.~Kolling, ``Turn-minimizing multirobot coverage,'' in \emph{ICRA}, 2019, pp. 1014--1020.

\bibitem{lu2022tmstc}
J.~Lu, B.~Zeng, J.~Tang, T.~L. Lam, and J.~Wen, ``{TMSTC*}: A path planning algorithm for minimizing turns in multi-robot coverage,'' \emph{IEEE Robotics and Automation Letters}, vol.~8, no.~8, pp. 5275--5282, 2023.

\bibitem{ramesh2022optimal}
M.~Ramesh, F.~Imeson, B.~Fidan, and S.~L. Smith, ``Optimal partitioning of non-convex environments for minimum turn coverage planning,'' \emph{IEEE Robotics and Automation Letters}, vol.~7, no.~4, pp. 9731--9738, 2022.

\bibitem{tang2024large}
J.~Tang and H.~Ma, ``Large-scale multi-robot coverage path planning via local search,'' in \emph{AAAI}, 2024, pp. 17\,567--17\,574.

\bibitem{Tomaszewski-2020-125840}
C.~K. Tomaszewski, ``Constraint-based coverage path planning: A novel approach to achieving energy-efficient coverage,'' Ph.D. dissertation, Carnegie Mellon University, 2020.

\bibitem{kapoutsis2017darp}
A.~C. Kapoutsis, S.~A. Chatzichristofis, and E.~B. Kosmatopoulos, ``Darp: Divide areas algorithm for optimal multi-robot coverage path planning,'' \emph{Journal of Intelligent \& Robotic Systems}, vol.~86, pp. 663--680, 2017.

\bibitem{azpurua2018multi}
H.~Azp{\'u}rua, G.~M. Freitas, D.~G. Macharet, and M.~F. Campos, ``Multi-robot coverage path planning using hexagonal segmentation for geophysical surveys,'' \emph{Robotica}, vol.~36, no.~8, pp. 1144--1166, 2018.

\bibitem{li2023sp2e}
L.~Li, D.~Shi, S.~Jin, S.~Yang, Y.~Lian, and H.~Liu, ``Sp2e: Online spiral coverage with proactive prevention extremum for unknown environments,'' \emph{Journal of Intelligent \& Robotic Systems}, vol. 108, no.~2, p.~30, 2023.

\bibitem{even2004min}
G.~Even, N.~Garg, J.~K{\"o}nemann, R.~Ravi, and A.~Sinha, ``Min--max tree covers of graphs,'' \emph{Operations Research Letters}, vol.~32, no.~4, pp. 309--315, 2004.

\bibitem{nagamochi2007approximating}
H.~Nagamochi and K.~Okada, ``Approximating the minmax rooted-tree cover in a tree,'' \emph{Information Processing Letters}, vol. 104, no.~5, pp. 173--178, 2007.

\bibitem{rekleitis2008efficient}
I.~Rekleitis, A.~P. New, E.~S. Rankin, and H.~Choset, ``Efficient boustrophedon multi-robot coverage: an algorithmic approach,'' \emph{Annals of Mathematics and Artificial Intelligence}, vol.~52, pp. 109--142, 2008.

\bibitem{xu2014efficient}
A.~Xu, C.~Viriyasuthee, and I.~Rekleitis, ``Efficient complete coverage of a known arbitrary environment with applications to aerial operations,'' \emph{Autonomous Robots}, vol.~36, pp. 365--381, 2014.

\bibitem{karapetyan2017efficient}
N.~Karapetyan, K.~Benson, C.~McKinney, P.~Taslakian, and I.~Rekleitis, ``Efficient multi-robot coverage of a known environment,'' in \emph{IROS}, 2017, pp. 1846--1852.

\bibitem{latombe1991exact}
J.-C. Latombe and J.-C. Latombe, ``Exact cell decomposition,'' \emph{Robot Motion Planning}, pp. 200--247, 1991.

\bibitem{choset2000coverage}
H.~Choset, ``Coverage of known spaces: The boustrophedon cellular decomposition,'' \emph{Autonomous Robots}, vol.~9, pp. 247--253, 2000.

\bibitem{acar2002morse}
E.~U. Acar, H.~Choset, A.~A. Rizzi, P.~N. Atkar, and D.~Hull, ``Morse decompositions for coverage tasks,'' \emph{International Journal of Robotics Research}, vol.~21, no.~4, pp. 331--344, 2002.

\bibitem{bochkarev2016minimizing}
S.~Bochkarev and S.~L. Smith, ``On minimizing turns in robot coverage path planning,'' in \emph{CASE}, 2016, pp. 1237--1242.

\bibitem{yang2002equidistant}
Y.~Yang, H.~T. Loh, J.~Fuh, and Y.~Wang, ``Equidistant path generation for improving scanning efficiency in layered manufacturing,'' \emph{Rapid Prototyping Journal}, vol.~8, no.~1, pp. 30--37, 2002.

\bibitem{ren2009combined}
F.~Ren, Y.~Sun, and D.~Guo, ``Combined reparameterization-based spiral toolpath generation for five-axis sculptured surface machining,'' \emph{International Journal of Advanced Manufacturing Technology}, vol.~40, pp. 760--768, 2009.

\bibitem{gibson2021additive}
I.~Gibson, D.~W. Rosen, B.~Stucker, M.~Khorasani, D.~Rosen, B.~Stucker, and M.~Khorasani, \emph{Additive manufacturing technologies}.\hskip 1em plus 0.5em minus 0.4em\relax Springer, 2021, vol.~17.

\bibitem{zhao2016connected}
H.~Zhao, F.~Gu, Q.-X. Huang, J.~Garcia, Y.~Chen, C.~Tu, B.~Benes, H.~Zhang, D.~Cohen-Or, and B.~Chen, ``Connected fermat spirals for layered fabrication,'' \emph{ACM Transactions on Graphics}, vol.~35, no.~4, pp. 1--10, 2016.

\bibitem{wu2019energy}
C.~Wu, C.~Dai, X.~Gong, Y.-J. Liu, J.~Wang, X.~D. Gu, and C.~C. Wang, ``Energy-efficient coverage path planning for general terrain surfaces,'' \emph{IEEE Robotics and Automation Letters}, vol.~4, no.~3, pp. 2584--2591, 2019.

\bibitem{tang2024multi}
J.~Tang and H.~Ma, ``Multi-robot connected fermat spiral coverage,'' in \emph{ICAPS}, 2024, pp. 579--587.

\bibitem{banfi2017intractability}
J.~Banfi, N.~Basilico, and F.~Amigoni, ``Intractability of time-optimal multirobot path planning on 2d grid graphs with holes,'' \emph{IEEE Robotics and Automation Letters}, vol.~2, no.~4, pp. 1941--1947, 2017.

\bibitem{ma2022graph}
H.~Ma, ``Graph-based multi-robot path finding and planning,'' \emph{Current Robotics Reports}, vol.~3, no.~3, pp. 77--84, 2022.

\bibitem{mouratidis2024fools}
G.~Mouratidis, B.~Nebel, and S.~Koenig, ``Fools rush in where angels fear to tread in multi-goal cbs,'' in \emph{SoCS}, 2024, pp. 243--251.

\bibitem{zhang2022multi}
H.~Zhang, J.~Chen, J.~Li, B.~Williams, and S.~Koenig, ``Multi-agent path finding for precedence-constrained goal sequences,'' in \emph{AAMAS}, 2022.

\bibitem{surynek2021multi}
P.~Surynek, ``Multi-goal multi-agent path finding via decoupled and integrated goal vertex ordering,'' in \emph{AAAI}, 2021, pp. 12\,409--12\,417.

\bibitem{ren2021ms}
Z.~Ren, S.~Rathinam, and H.~Choset, ``Ms*: A new exact algorithm for multi-agent simultaneous multi-goal sequencing and path finding,'' in \emph{ICRA}, 2021, pp. 11\,560--11\,565.

\bibitem{tang2024mgcbs}
M.~Tang, Y.~Li, H.~Liu, Y.~Chen, M.~Liu, and L.~Wang, ``Mgcbs: An optimal and efficient algorithm for solving multi-goal multi-agent path finding problem,'' in \emph{IJCAI}, 2024, pp. 249--256.

\bibitem{ma2017lifelong}
H.~Ma, J.~Li, T.~S. Kumar, and S.~Koenig, ``Lifelong multi-agent path finding for online pickup and delivery tasks,'' in \emph{AAMAS}, 2017, pp. 837--845.

\bibitem{liu2019task}
M.~Liu, H.~Ma, J.~Li, and S.~Koenig, ``Task and path planning for multi-agent pickup and delivery,'' in \emph{AAMAS}, 2019.

\bibitem{xu2022multi}
Q.~Xu, J.~Li, S.~Koenig, and H.~Ma, ``Multi-goal multi-agent pickup and delivery,'' in \emph{IROS}, 2022, pp. 9964--9971.

\bibitem{ren2023cbss}
Z.~Ren, S.~Rathinam, and H.~Choset, ``Cbss: A new approach for multiagent combinatorial path finding,'' \emph{IEEE Transactions on Robotics}, vol.~39, no.~4, pp. 2669--2683, 2023.

\bibitem{itai1982hamilton}
A.~Itai, C.~H. Papadimitriou, and J.~L. Szwarcfiter, ``Hamilton paths in grid graphs,'' \emph{SIAM Journal on Computing}, vol.~11, no.~4, pp. 676--686, 1982.

\bibitem{umans1997hamiltonian}
C.~Umans and W.~Lenhart, ``Hamiltonian cycles in solid grid graphs,'' in \emph{FOCS}, 1997, pp. 496--505.

\bibitem{gary1979computers}
M.~R. Gary and D.~S. Johnson, ``Computers and intractability: A guide to the theory of np-completeness,'' 1979.

\bibitem{gabriely2002spiral}
Y.~Gabriely and E.~Rimon, ``Spiral-stc: An on-line coverage algorithm of grid environments by a mobile robot,'' in \emph{ICRA}, 2002, pp. 954--960.

\bibitem{kruskal1956shortest}
J.~B. Kruskal, ``On the shortest spanning subtree of a graph and the traveling salesman problem,'' \emph{Proceedings of the American Mathematical Society}, vol.~7, no.~1, pp. 48--50, 1956.

\bibitem{aurenhammer2000voronoi}
F.~Aurenhammer and R.~Klein, ``Voronoi diagrams.'' \emph{Handbook of Computational Geometry}, vol.~5, no.~10, pp. 201--290, 2000.

\bibitem{goldberg1989genetic}
D.~E. Goldberg, ``Genetic algorithms in search,'' \emph{Optimization, Machine Learning}, 1989.

\bibitem{van1987simulated}
P.~J. Van~Laarhoven, E.~H. Aarts, P.~J. van Laarhoven, and E.~H. Aarts, \emph{Simulated annealing}.\hskip 1em plus 0.5em minus 0.4em\relax Springer, 1987.

\bibitem{ma2019searching}
H.~Ma, D.~Harabor, P.~J. Stuckey, J.~Li, and S.~Koenig, ``Searching with consistent prioritization for multi-agent path finding,'' in \emph{AAAI}, 2019, pp. 7643--7650.

\bibitem{sharon2015conflict}
G.~Sharon, R.~Stern, A.~Felner, and N.~R. Sturtevant, ``Conflict-based search for optimal multi-agent pathfinding,'' \emph{Artificial intelligence}, vol. 219, pp. 40--66, 2015.

\bibitem{phillips2011sipp}
M.~Phillips and M.~Likhachev, ``{SIPP}: Safe interval path planning for dynamic environments,'' in \emph{ICRA}, 2011, pp. 5628--5635.

\bibitem{ma2019lifelong}
H.~Ma, W.~H{\"o}nig, T.~S. Kumar, N.~Ayanian, and S.~Koenig, ``Lifelong path planning with kinematic constraints for multi-agent pickup and delivery,'' in \emph{AAAI}, 2019, pp. 7651--7658.

\bibitem{silver2005cooperative}
D.~Silver, ``Cooperative pathfinding,'' in \emph{AIIDE}, 2005, pp. 117--122.

\bibitem{grenouilleau2019multi}
F.~Grenouilleau, W.-J. Van~Hoeve, and J.~N. Hooker, ``A multi-label a* algorithm for multi-agent pathfinding,'' in \emph{ICAPS}, 2019, pp. 181--185.

\bibitem{li2021lifelong}
J.~Li, A.~Tinka, S.~Kiesel, J.~W. Durham, T.~S. Kumar, and S.~Koenig, ``Lifelong multi-agent path finding in large-scale warehouses,'' in \emph{AAAI}, 2021, pp. 11\,272--11\,281.

\bibitem{zhong2022optimal}
X.~Zhong, J.~Li, S.~Koenig, and H.~Ma, ``Optimal and bounded-suboptimal multi-goal task assignment and path finding,'' in \emph{ICRA}, 2022, pp. 10\,731--10\,737.

\bibitem{sturtevant2012benchmarks}
N.~Sturtevant, ``Benchmarks for grid-based pathfinding,'' \emph{Transactions on Computational Intelligence and AI in Games}, vol.~4, no.~2, pp. 144 -- 148, 2012.

\bibitem{vcap2015complete}
M.~{\v{C}}{\'a}p, J.~Vok{\v{r}}{\'\i}nek, and A.~Kleiner, ``Complete decentralized method for on-line multi-robot trajectory planning in well-formed infrastructures,'' in \emph{ICAPS}, 2015, pp. 324--332.

\bibitem{honig2019persistent}
W.~H{\"o}nig, S.~Kiesel, A.~Tinka, J.~W. Durham, and N.~Ayanian, ``Persistent and robust execution of {MAPF} schedules in warehouses,'' \emph{IEEE Robotics and Automation Letters}, vol.~4, no.~2, pp. 1125--1131, 2019.

\end{thebibliography}

\end{document}